\newcommand{\PreserveBackslash}[1]{\let\temp=\\#1\let\\=\temp}
\newcolumntype{C}[1]{>{\PreserveBackslash\centering}m{#1}}
\newcolumntype{R}[1]{>{\PreserveBackslash\raggedleft}p{#1}}
\newcolumntype{L}[1]{>{\PreserveBackslash\raggedright}p{#1}}
\def\UrlAlphabet{%
	\do\a\do\b\do\c\do\d\do\e\do\f\do\g\do\h\do\i\do\j%
	\do\k\do\l\do\m\do\n\do\o\do\p\do\q\do\r\do\s\do\t%
	\do\u\do\v\do\w\do\x\do\y\do\z\do\A\do\B\do\C\do\D%
	\do\E\do\F\do\G\do\H\do\I\do\J\do\K\do\L\do\M\do\N%
	\do\O\do\P\do\Q\do\R\do\S\do\T\do\U\do\V\do\W\do\X%
	\do\Y\do\Z}
\def\UrlDigits{\do\1\do\2\do\3\do\4\do\5\do\6\do\7\do\8\do\9\do\0}
\g@addto@macro{\UrlBreaks}{\UrlOrds}
\g@addto@macro{\UrlBreaks}{\UrlAlphabet}
\g@addto@macro{\UrlBreaks}{\UrlDigits}
\definecolor{lightgray}{gray}{0.9}
\newtheorem{thm}{Theorem}
\newtheorem{lem}[thm]{Lemma}
\newcommand{\clr}[1]{%
	\underline{#1}%
}
\begin{document}
	
\title{Visual-based Kinematics and Pose Estimation for Skid-Steering Robots}
	
\author{Xingxing Zuo$^{1,\dagger}$,  Mingming Zhang$^{2,\dagger}$, Mengmeng Wang$^3$, Yiming Chen$^2$,\\ Guoquan Huang$^4$, Yong Liu$^{3,\ast}$, and Mingyang Li$^{2,\ast}$
\thanks{$1$ Xingxing Zuo is with the Department of Informatics, Technical University of Munich, 80333 Munich, Germany. This work was done when Xingxing Zuo was a research intern in Alibaba Group.}
\thanks{$2$ Mingming Zhang, Yiming Chen, Mingyang Li are with Alibaba Group, Hangzhou 311121, China}
\thanks{$3$ Mengmeng Wang, Yong Liu are with the Institute of Cyber-Systems and Control, Zhejiang University, Hangzhou 310027, China.}
\thanks{$4$ Guoquan Huang is with the Department of Mechanical Engineering, University of Delaware, Newark,   DE 19716, USA.}
\thanks{$\dagger$ denotes equal contribution.}
\thanks{$\ast$ Mingyang Li and Yong Liu are the corresponding authors. (Email: {\tt\small yongliu@iipc.zju.edu.cn, mingyangli@alibaba-inc.com})}
}

\maketitle

\begin{abstract}
To build commercial robots, skid-steering mechanical design is of increased popularity due to its manufacturing simplicity and unique mechanism. However, these also cause significant challenges on software and algorithm design, especially for the pose estimation (i.e., determining the robot's rotation and position) of skid-steering robots, since  they change their orientation with an inevitable skid. To tackle this problem, we propose a probabilistic sliding-window estimator dedicated to skid-steering robots, using measurements from a monocular camera, the wheel encoders, and optionally an inertial measurement unit (IMU). Specifically, we explicitly model the kinematics of skid-steering robots by both track instantaneous centers of rotation (ICRs) and correction factors, which are capable of compensating for the complexity of track-to-terrain interaction, the imperfectness of mechanical design, terrain conditions and smoothness, etc. To prevent performance reduction in robots' long-term missions, the time- and location- varying kinematic parameters are estimated online along with pose estimation states in a tightly-coupled manner. More importantly, we conduct in-depth observability analysis for different sensors and design configurations in this paper, which provides us with theoretical tools in making the correct choice when building real commercial robots. In our experiments, we validate the proposed method by both simulation tests and real-world experiments, which demonstrate that our method outperforms competing methods by wide margins.

\bfseries\textit{Note to Practitioners}--- This paper was motivated by the problem of long-term pose estimation of the commonly commercial-used skid-steering robots with only low-cost sensors.  Skid-steering robots change their orientation with a skid, which poses a significant challenge for pose estimation when using the wheel encoders.  We propose to online estimate the robot's kinematics, which succeeds in compensating for the complexity of track-to-terrain interaction, due to the slippage, the imperfectness of mechanical design, terrain conditions and smoothness. It is critical to estimate the kinematics and poses jointly to prevent performance reduction in robots' long-term missions. We further theoretically analyze whether the kinematics parameters can be estimated under different sensor configurations, and find out the special degrade motions that make the parameters unobservable.
\end{abstract}

\begin{IEEEkeywords}
	Kinematics, pose estimation, visual odometry, skid-steering robots, observability.
\end{IEEEkeywords}

\IEEEpeerreviewmaketitle
\section{Introduction}
\label{sec:intro}
\begin{figure*}[tb]
	\centering
	\subfigure[]{ 
		\includegraphics[width=0.7\columnwidth]{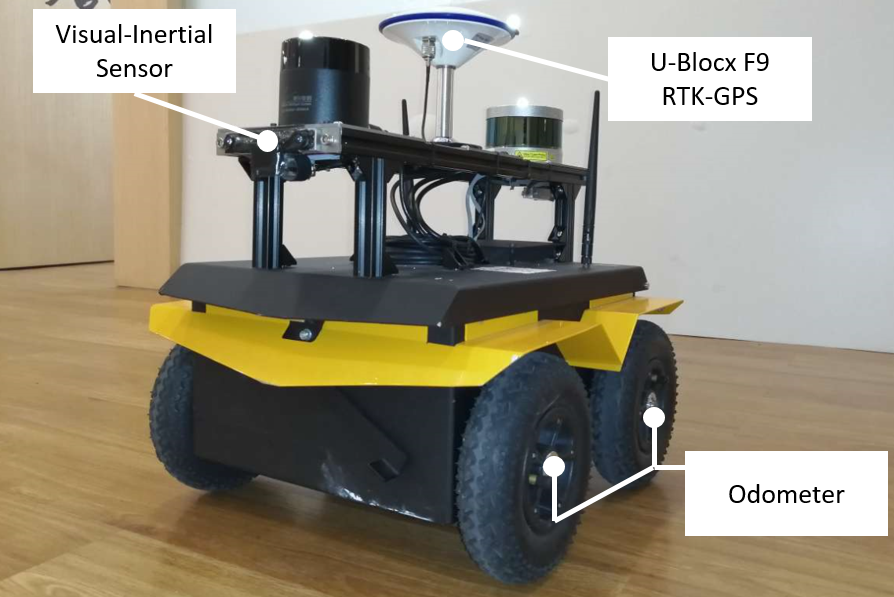} 
	} 
	\hfill
	\subfigure[]{ 
		\includegraphics[width=0.7\columnwidth]{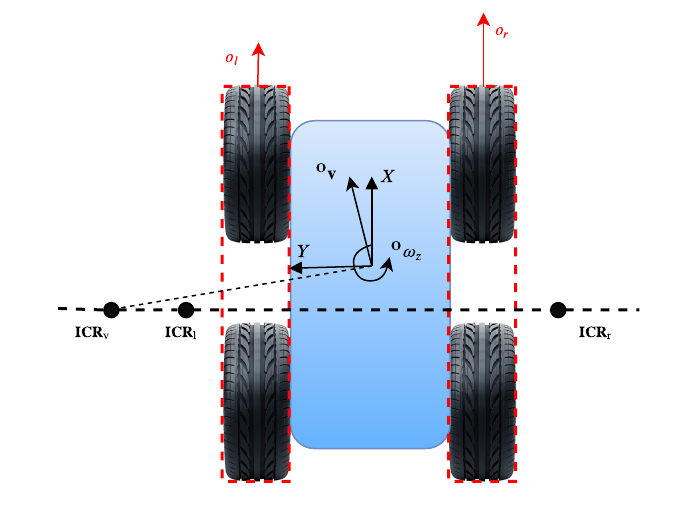} 
	}
	\captionsetup{justification=justified} 
	\caption{The skid-steering robotic platform used in our tests, as well as the corresponding kinematic model~\cite{zuo2019visual}. (a) Our testing robot, built on the Clearpath Jackal Platform~\cite{Clearpath}. The equipped low-cost sensors (i.e., a monocular camera, an imu, and wheel encoders) are leveraged in the proposed system, while the others (i.e., LiDAR and RTK-GPS) are not required in our system. (b) The odometer measurements and the instantaneous center of rotation (ICR, denoted by $[\mathrm{\mathbf{ICR}}_v, \mathrm{\mathbf{ICR}}_l, \mathrm{\mathbf{ICR}}_r]$) of a skid-steering robot. ${}^\mathbf{O}\mathbf{v}$ represents the robot velocity in odometer frame, and ${}^\mathbf{O}\omega_z$ is the angular velocity along the yaw direction.}
	\label{fig:robots_icr}
\end{figure*}
\IEEEPARstart{I}{n} recent years, the robotic community has witnessed a growing `go-to-market' trend, by not only building autonomous robots for scientific laboratory usage but also making commercial robots to create new business model and 
facilitate people's daily lives.
To date, a large amount of commercial outdoor robots, under either daily business usage or active trial operations and tests,  
are customized {\em skid-steering} robots ~\cite{skid-steer-loader-market,fedex-robot,amazon-robot}.
Instead of having an explicit mechanism of steering control, skid-steering robots rely on adjusting the speed of the left and right tracks to turn around. The simplicity of the mechanical design and the property of being able to turn around with zero-radius make skid-steering robots widely used in both the scientific research community as well as the commercial robotic industry. However, the mechanical simplicity of skid-steering robots has significantly challenged the software and algorithm design in robotic artificial intelligence, especially in autonomous localization~\cite{anousaki2004dead,martinez2005approximating,mandow2007experimental, yi2009kinematic,pentzer2014model,martinez2017inertia,sutoh2018motion,wang2018terrain}.

The localization system provides motion estimates, which is a key component for enabling any autonomous robot. To localize skid-steering robots, there is a large body of relevant literature~\cite{anousaki2004dead,martinez2005approximating,mandow2007experimental, yi2009kinematic,pentzer2014model,martinez2017inertia,sutoh2018motion,wang2018terrain}.
Early work by Anousaki et al.~\cite{anousaki2004dead} showed that the standard differential-drive two-wheel vehicle model could not be used to accurately model the motion of a skid-steering robot due to track and wheel slippage. To address this problem, Mart{\'\i}nez eat al.~\cite{martinez2005approximating} proposed an approach to approximate the kinematics of skid-steering robots based on instantaneous centers of rotation (ICRs).
Although some other kinematic models of skid-steering robots are also proposed~\cite{solc2008kinetic,wong2008theory,martinez2017inertia,sutoh2018motion}, ICR based kinematics is still popular due to its simplicity and feasibility~\cite{yi2009kinematic,pentzer2014model,wang2018terrain}, especially for real-time robotic applications.
In~\cite{yi2009kinematic}, IMU readings and wheel encoder measurements are fused in an EKF-based motion-estimation system for skid-steered robots. The ICR-based kinematics is utilized to compute virtual velocity measurements for robot motion estimation. However, the easily changed ICR parameters are not estimated in the estimator, which may lead to performance reduction. 
In~\cite{pentzer2014model}, ICR parameters and navigation states are estimated online in an EKF-based estimator for 3-DoF motion estimation of skid-steering robots. Wheel odometer measurements and GPS measurements are fused in the system.
It is also revealed that ICR-based kinematics is only valid in low dynamics, and will have a degraded performance when the vehicle is operated at a high speed.
The work~\cite{wang2018terrain} leveraged ICR-based kinematics and fused the readings from wheel encoders and a GPS-compass integrated sensor to estimate the ICR parameters and 3-DoF poses of the robot. 

In contrast to the above existing works, we estimate the kinematics (formulated by ICRs and correction factors),  and {\em full 6-DoF} poses (3-DoF rotations and 3-DoF translations) of the skid-steering robots jointly in a sliding-window bundle adjustment (BA) based estimator. Extracted visual features from camera and wheel encoder readings, optional IMU readings, are fused to optimize the estimated states in a tightly-coupled way and track the states of the skid-steering robot during its long-term mission.
In skid-steering robots, the track-to-terrain interaction is exceptionally complicated, and the conversion between wheel encoder readings and robot's motion depends on mechanical design, wheel inflation conditions, load and center of mass, terrain conditions, slippage, etc. In the long-term mission of the skid-steering robots, such as delivery, the kinematic parameters can be inevitably changed. Thus we estimate the kinematic parameters online to guarantee accurate pose estimation of the robots in complicated environments without performance reduction.

For a complicated estimator, it is critical to conduct observability analysis~\cite{huang2010observability,hesch2013towards,li2013high,pentzer2014model,li2014onlineabc,zuo2019visual} to study the identifiability of estimated states. 
Pentzer et al.~\cite{pentzer2014model} investigated 
the conditions that ICR parameters will be updated in a GPS-aided localization system, by demonstrating that the ICR parameters can be only updated when the robot is turning. However, this is just a glimpse of the observability property. The nature of the GPS measurements and the applicability of that algorithm are fundamentally different from our visual-based systems.
This paper is \textit{evolved} from our previous conference paper~\cite{zuo2019visual}, which performed observability analysis of localizing steering skid robot by using a monocular camera, wheel encoders, and an IMU, and showed that the skid-steering parameters are generally observable.
In this work, we extensively extend~\cite{zuo2019visual} and fully explore the observability properties of the visual-based kinematics and pose estimation system, by explicitly identifying the identifiable and non-identifiable parameters with and without using the IMU. Besides, we further investigate the suitability of kinematics estimation with joint online sensor extrinsic calibration, which is commonly required in state estimation systems with sensor fusion.
The results are with significant differences from the properties of other visual localization systems~\cite{li2014onlineabc, qin2018online,schneider2019observability}. This emphasizes the importance of the observability analysis in this paper. 

In summary, we focus on pose and kinematics estimation of steering-skid robots to enable their long-term mission in complicated environment, by using measurements from a monocular camera, wheel encoders, and optionally an IMU. The main contributions are as follows:
\begin{itemize}
	\item A visual-based estimator dedicated to skid-steering robots, which jointly estimates the ICR-based kinematic parameters of the robotic platform and 6-DoF poses in a tight-coupled manner. The formulation, error state propagation, and initialization of the kinematic parameters are presented in detail.
	
	\item Detailed observability analysis of the estimator under different sensor configurations, and the key results are as follows:
	(\romannumeral1) by using a monocular camera and wheel encoders, only the three ICR kinematic parameters are observable; (\romannumeral2) by introducing the additional IMU measurements, both the three ICR kinematic parameters and the two correction factors are observable under general motion; and (\romannumeral3) the 3-DoF extrinsic translation between the camera and odometer are unobservable with the online estimate of kinematic parameters, which prevents performing online sensor-to-sensor extrinsic calibration.  
	\item Extensive experiments including both simulation tests and real-world experiments  were conducted for evaluations. Ablation study is also investigated to standout the feasibility of the proposed method. In general, the proposed method i) shows high accuracy and great robustness under different environmental and mechanical conditions to enable the long-term mission of the robots and (ii) outperforms the competing methods that do and do not estimate the kinematics.
\end{itemize}

The rest of the paper is organized as follows. We introduce the kinematics model of skid-steering robots in Sec.~\ref{sec:icr-kinematics}. Subsequently, the framework of the tightly-coupled sliding-window estimator is introduced in Sec.~\ref{sec:estimator}, and we illustrate the kinematics estimation in detail. The observability analysis of the estimator under different configurations is performed in Sec.~\ref{sec:obser}. Experimental results are presented in Sec.~\ref{sec:exp}. Finally, the paper is concluded in Sec.~\ref{seq:conclusion}.

\section{ ICR-based Kinematics of Skid-Steering Robots}\label{sec:icr-kinematics}
\subsection{Notations}
In this paper, we consider a robotic platform navigating with respect to a global reference frame, $\lbrace \mathbf{G} \rbrace$. The platform is equipped with a camera, an IMU, and wheel odometers, whose frames are denoted by $\lbrace \mathbf{C} \rbrace$, $\lbrace \mathbf{I} \rbrace$, $\lbrace \mathbf{O}\rbrace$ respectively. To present transformation, we use $^{\mathbf A}\mathbf p_{\mathbf B}$ and 
$^{\mathbf A}_{\mathbf B} \mathbf R$ to denote position and rotation of frame $\lbrace\mathbf{B}\rbrace$ with respect to $\lbrace\mathbf{A}\rbrace$, and  $^{\mathbf A}_{\mathbf B}\mathbf{q}$ is the corresponding unit quaternion of $^{\mathbf A}_{\mathbf B} \mathbf R$. In addition, 
$\mathbf{I}$ denotes the identity matrix, and $\mathbf{0}$ denotes the zero matrix. 
We use $\hat{\mathbf x}$ and $\delta \mathbf x$ to represent the current estimated value and error state for variable $\mathbf x$.
Additionally, we reserve the symbol $\breve{z}$ to denote the inferred measurement value of $z$, which is widely used in observability analysis.
For the rotation matrix $^{\mathbf G}_{\mathbf O} \mathbf R$, we define the attitude error angle vector $\delta \boldsymbol{\theta}$ as follows~\cite{trawny2005indirect}:
\begin{align}
^{\mathbf G}_{\mathbf O} \mathbf R = {}^{\mathbf G}_{\mathbf O} \hat{\mathbf R} \left( \mathbf{I} + \lfloor \delta \boldsymbol{\theta} \rfloor \right)
\end{align}
We use $\lfloor \mathbf{v} \rfloor$ to denote the skew-symmetric matrix of a 3d vector $\mathbf{v}$:
\begin{align}
\lfloor \mathbf{v} \rfloor = 
\begin{bmatrix}
0 & -v_3 & v_2 \\
v_3 & 0 & -v_1 \\
-v_2 & v_1 & 0
\end{bmatrix}, 
\mathbf{v} = 
\begin{bmatrix}
v_1 \\
v_2 \\
v_3
\end{bmatrix}, 
\end{align}
\subsection{ICR-based Kinematics}
In order to design a general algorithm to localize skid-steering robots under different conditions, the corresponding kinematic models must be presented in a parametric format. 
In this work, we employ a model similar to the ones in~\cite{martinez2005approximating,pentzer2014model}, which contains five kinematic parameters: three ICR parameters and two correction factors, as shown in Fig.~\ref{fig:robots_icr}.
To describe the details,
we denote $\mathrm{\mathbf{ICR}}_v = \left( X_v, Y_v \right)$  the ICR position of the robot frame, and 
$\mathrm{\mathbf{ICR}}_l = \left( X_l, Y_l \right)$ and $\mathrm{\mathbf{ICR}}_r = \left( X_r, Y_r \right)$  the ones of the left and right wheels, respectively.
The relation between  the readings of wheel encoder measurements and the ICR parameters can be derived as follows:
\begin{align}
Y_l & =  - \frac{ o_l  - {}^{\mathbf  O}v_{x}}{{}^{\mathbf  O}\omega_z},\,\,\,Y_r  =  - \frac{ o_r  - {}^{\mathbf  O}v_{x}}{{}^{\mathbf  O}\omega_z} \notag \\
Y_v & =  \frac{{}^{\mathbf  O}v_{x}}{{}^{\mathbf  O}\omega_z},\,\,\,
X_v  = X_l = X_r=  - \frac{{}^{\mathbf  O}v_{y}}{{}^{\mathbf  O}\omega_z}
\label{eq:ICR}
\end{align}
where $o_l$ and $o_r$ are linear velocities of left and right wheels, ${}^{\mathbf  O}v_{x}$ and ${}^{\mathbf  O}v_{y}$ are robot's linear velocity along $x$ and $y$ axes represented in frame $\mathbf{O}$ respectively, and ${}^{\mathbf  O}\omega_z$ denotes the rotational rate about yaw also in frame $\mathbf{O}$. 
Those variables are also visualized in Fig.~\ref{fig:robots_icr}, 
and we use $\boldsymbol \xi_{ICR} = [X_v,\, Y_l,\, Y_r]^\top$ to represents the set of ICR parameters.
Moreover, we have used two scale factors,  $\boldsymbol \xi_{\alpha}=\left[\alpha_l, \alpha_r\right]^\top$, to compensate for effects which might cause changes in scales of wheel encoder readings. Representative situations include tire inflation, changes of road roughness, varying load of the robot, etc. 
With the ICR parameters and correction factors being defined, the skid-steering kinematic model can be written as:
\begin{align}~\label{eq:ICRmat_5par}
\begin{bmatrix} {}^{\mathbf  O}v_{x} \\ {}^{\mathbf  O}v_{y} \\ {}^{\mathbf  O}\omega_z \end{bmatrix} \!=\!
g(\boldsymbol \xi, o_l, o_r) \!= \!
\frac{1}{\Delta Y}\! 
\begin{bmatrix}
- Y_r &  Y_l\\ 
X_v   & - X_v\\
-1   &   1
\end{bmatrix}\!
\begin{bmatrix}
\alpha_l &0 \\ 0 &\alpha_r
\end{bmatrix}\!
\begin{bmatrix}
o_l \\ o_r
\end{bmatrix}
\end{align}
with
\begin{align}
\label{eq:icr def0}
\boldsymbol \xi \!\!=\!\! 
\begin{bmatrix}
\boldsymbol \xi_{ICR}^\top \!&
\boldsymbol \xi_{\alpha}^\top \end{bmatrix} \!\!=\!\! \begin{bmatrix}
X_v \!& Y_l \!& Y_r \!& \alpha_l \!& \alpha_r \end{bmatrix}^\top,
{\Delta Y}  \!\!=\!\!   Y_l \!-\!  Y_r
\end{align}
where $\boldsymbol \xi$ is the entire set of kinematic parameters.

Interestingly, as a special configuration when 
\begin{align}
\label{eq:special}
\boldsymbol \xi = 
\begin{bmatrix}
0, 0.5 b, -0.5 b, 1, 1
\end{bmatrix}^T
\end{align}
with $b$ being the distance between left and right wheels, Eq.~\eqref{eq:ICRmat_5par} can be simplified as:
\begin{align}
\label{eq:ordinarymodel}
{}^{\mathbf  O}v_{x} = \frac{o_l + o_r}{2}, ~~{}^{\mathbf  O}\omega_z = \frac{o_r - o_l}{b},~~ {}^{\mathbf  O}v_{y} =0
\end{align}
This is exactly the kinematic model for a wheeled robot moving without slippage (i.e., an ideal differential drive robot kinematics), 
and used by most existing work for localizing wheeled robots~\cite{yap2011particle,wu2017vins,quan2018tightly}. 
However, in the case of skid-steering robots, if Eq.~\eqref{eq:ordinarymodel} is employed directly in a localizer,
the pose estimation accuracy will be significantly reduced due to the incorrect conversion between wheel encoder readings and robot's motion estimates (also see experimental results in Sec.~\ref{sec:exp}).

\section{Visual-Inertial Kinematics and Pose Estimation}
\label{sec:estimator}

In this paper, we utilize a sliding-window bundle adjustment (BA) based estimator
for the kinematics and pose estimation of skid-steering robots using a monocular camera, wheel encoders, and optionally an IMU. For presentation simplicity, in this section, we describe our estimator by explicitly considering using the IMU. When the IMU is not included in the sensor system, our presented estimator can be straightforwardly modified by simply deleting the IMU related components. The architecture of our sliding-window estimator follows the pose estimation method for ground robots~\cite{zhang2021pose}, where visual constraints, IMU constraints, motion manifold (the profile of ground surface) constraints, ideal differential drive model induced odometer constraints are formulated and iteratively optimized. Notably, kinematics estimation is not touched in~\cite{zhang2021pose}, while it is the focus of this work.  
We also note that, compared to~\cite{zhang2021pose} and the other papers that focus on estimator architecture novelty, this work introduces methods to systematically handle skid-steering effects via online estimation. Our goal is to consistently and accurately estimate the motion of a moving robot as well as necessary observability-guided kinematic parameters of the robots.

\subsection{Estimator Formulation}
\subsubsection{State Vector}
To start with, we define the state vector of our estimator as:
\begin{align}
\label{eq:statevec}
\mathbf{x} =  \begin{bmatrix}
\boldsymbol{\chi}_{\mathbf{O}}^\top, {}^{\mathbf{G}}\mathbf{v}_{\mathbf{I}_k}^\top, \mathbf{b}_{a_k}^\top, \mathbf{b}_{\omega_k}^\top,
\mathbf{m}_k^\top,
\boldsymbol{\xi}_k^\top
\end{bmatrix}^\top
\end{align}
where
\begin{align} 
\boldsymbol{\chi}_{\mathbf{O}} \!\!=\!\! \begin{bmatrix}\!
{}^{\mathbf{G}}_{\mathbf{O}_{k-s}}\mathbf{q}^\top\!,\! {}^{\mathbf{G}}\mathbf{p}_{\mathbf{O}_{k-s}}^\top\!,\!\!~\dots~\!,\! {}^{\mathbf{G}}_{\mathbf{O}_{k-1}}\mathbf{q}^\top\!,\! {}^{\mathbf{G}}\mathbf{p}_{\mathbf{O}_{k-1}}^\top\!,\! {}^{\mathbf{G}}_{\mathbf{O}_{k}}\mathbf{q}^\top\!,\!{}^{\mathbf{G}}\mathbf{p}_{\mathbf{O}_{k}}^\top\!
\end{bmatrix}^\top
\end{align} 
denotes the sliding-window poses of odometer frame at times $\lbrace k-s, \dots, k\rbrace$ when keyframe images are captured.
$
{}^{\mathbf{G}}\mathbf{v}_{\mathbf{I}_k}, \mathbf{b}_{a_k}, \mathbf{b}_{\omega_k}$ are the IMU related states, including the IMU velocity in global frame, accelerometer bias, and gyroscope bias. 
If IMU is not available in the system, ${}^{\mathbf{G}}\mathbf{v}_{\mathbf{I}_k}, \mathbf{b}_{a_k}, \mathbf{b}_{\omega_k}$ will excluded from the state vector.
In addition, 
$\mathbf{m_k}$ denotes the parameters for modeling the local motion manifold of the skid-steering robots across current sliding window. The motion manifold is parameterized by a quadratic polynomial, and $\mathbf{m_k}$ is the 6-dimensional vectors to formulate the polynomial. This has been shown in~\cite{zhang2019v,zhang2021pose} to improve the estimation performance for ground robots, and we also adopt this design in our work. 
Finally, $\boldsymbol{\xi}_k$, as shown in Eq.~\eqref{eq:icr def0}, represents the skid-steering intrinsic parameter vector, which is
explicitly included in the state vector and thus estimated online.

\subsubsection{Bundle Adjustment Optimization}
Our optimization process follows the design of~\cite{zhang2021pose}. Specifically, 
the sliding-window bundle adjustment in our estimation algorithm seeks to iteratively minimize a cost function corresponding to a combination of sensor measurement constraints, kinematics constraints, and marginalized constraints.
\begin{align} \label{eq:cost}
\mathcal{C} = \mathcal{C}_{P} + \mathcal{C}_{V} + \mathcal{C}_{I} + \mathcal{C}_{O} +\mathcal{C}_{M} 
\end{align}

In what follows, we describe each of the cost terms. Firstly, 
the marginalized term $\mathcal{C}_{P}$ is critical to consistently keep the algorithm computational complexity bounded, by probabilistically removing the old states in the sliding window~\cite{Eckenhoff2019IJRR}.
The camera term $\mathcal{C}_{V}$, IMU term $\mathcal{C}_{I}$, and motion manifold term $\mathcal{C}_{M}$ used in this work are similar to that of existing literature~\cite{mourikis2007multi,li2013high,zhang2021pose}, respectively, but with dedicated design for ground robots. In general, the camera cost term models the geometrical reprojection error of point features in the keyframes (see Sec.~\ref{sec:camera cost}), the IMU term computes the error of IMU states between two consecutive keyframes, and the manifold cost term characterizes the motion smoothness across the whole sliding window.  Finally, $\mathcal{C}_{O}$ denotes 
the kinematic constraints induced by wheel odometer measurements. This term is a function of robot pose, measurement input, as well as skid-steering intrinsic kinematic parameters, and is discussed in detail in Sec.~\ref{sec:icr_pred}.
Due to limited space, we omit the details of $\mathcal{C}_{P}$,  $\mathcal{C}_{I}$, and $\mathcal{C}_{M}$ in this paper, and readers are encouraged to refer to the Appendix section. 

\subsection{Visual Constraints}\label{sec:camera cost}

In the sliding-window BA with visual constraints, the poses of only visual keyframes are optimized for computational saving. We use a simple heuristic for keyframe selection: the odometer prediction has a translation or rotation over a certain threshold (in all the experiments, 0.2 meter and 3 degrees). Since the movement form of the ground robot is simple, and it can be well predicted by the odometer in a short period of time. Unlike existing methods~\cite{qin2018vins,leutenegger2015keyframe}, which extract features and analyze the distribution of the features for keyframe selection, the non-keyframe will be dropped immediately without any extra operations in our framework. Among keyframes selected into the sliding-window, corner feature points are extracted in a fast way~\cite{rosten2006machine} and tacked with FREAK~\cite{alahi2012freak} descriptors.

The successfully tracked features across multiple keyframes will be initialized in the 3D space by triangulation.
By denoting $\mathbf{z}_{i,j}$ the visual measurement of a 3D feature ${}^{\mathbf G}\mathbf{p}_{f_j}$ observed by the $\mathbf{C}_i$th camera keyframe, the visual reprojection error~\cite{hartley2003multiple} in normalized image coordinate is given by:
\begin{align}\label{eq:repro err}
	&\mathcal{C}_{V}({}^{\mathbf G}_{\mathbf{O}_i}\mathbf{R}, {}^{\mathbf G}\mathbf{p}_{\mathbf{O}_i}) \!=\! \big{|}\big{|} \mathbf{z}_{i,j} \!- \! \pi( {}^{\mathbf G}_{\mathbf{C}_i}\mathbf{R}, {}^{\mathbf G}\mathbf{p}_{\mathbf{C}_i}, {}^{\mathbf G}\mathbf{p}_{f_j}) \big{|}\big{|}_{\bm \Lambda_{V}}^2, \notag \\ &{}^{\mathbf G}_{\mathbf{C}_i}\mathbf{R} = {}^{\mathbf G}_{\mathbf{O}_i}\mathbf{R} {}^{\mathbf O}_{\mathbf{C}}\mathbf{R}, ~~ {}^{\mathbf G}\mathbf{p}_{\mathbf{C}_i} = {}^{\mathbf G}\mathbf{p}_{\mathbf{O}_i} + {}^{\mathbf G}_{\mathbf{O}_i}\mathbf{R} {}^{\mathbf{O}}\mathbf{p}_{\mathbf{C}} 
\end{align}
In the above expressions, $\pi( \cdot )$ denotes the perspective function of an intrinsically calibrated camera, 
and ${\bm \Lambda_{V}}$ represents the inverse of noise covariance in the observation $\mathbf{z}_{i,j}$.
${}^{\mathbf O}_{\mathbf{C}}\mathbf{R}$ and ${}^{\mathbf{O}}\mathbf{p}_{\mathbf{C}}$ are the extrinsic transformation between camera and odometer. 
%
Furthermore, we choose not to incorporate visual features into the state vector due to the limited computational resources. Thus ${}^{\mathbf G}\mathbf{p}_{f_j}$ and the pose of oldest keyframe over the sliding window will be marginalized immediately after the iterative minimization.

\subsection{ICR-based Kinematic Constraints}
\label{sec:icr_pred}

This section provides details on formulating ICR-based kinematic constraints $\mathcal{C}_{O}$.
Specifically, 
by assuming the supporting manifold of the robot is locally planar between $t_k$ and $t_{k+1}$, 
the local linear and angular velocities, ${}^{\mathbf O{(t)}}{\mathbf{v}}$ and ${^{\mathbf O(t)} \boldsymbol \omega}$, 
are a function of the wheel encoders' measurements of the left and right wheels $o_{lm}(t)$ and $o_{rm}(t)$ 
as well as the skid-steering kinematic parameters $\boldsymbol \xi$ [see Eq.~\eqref{eq:ICRmat_5par}]:
\begin{align}
\label{eq:map to vw}
\begin{bmatrix}
{}^{\mathbf O_{(t)}}{\mathbf{v}} \\
{^{\mathbf O(t)} \boldsymbol \omega}
\end{bmatrix} &\!=\! \boldsymbol{\Pi} \, g(\boldsymbol{\xi}(t), o_l(t), o_r(t) ) \notag \\
&\!=\! \boldsymbol{\Pi} \,
g(\boldsymbol{\xi}(t), o_{lm}(t) \!- \!n_l(t), o_{rm}(t) \!- \!n_r(t) )
\end{align}
where $\boldsymbol{\Pi} = 
\begin{bmatrix}
\mathbf e_1 &
\mathbf e_2 &
\mathbf 0 &
\mathbf 0 &
\mathbf 0 &
\mathbf e_3
\end{bmatrix}^T$ is the selection matrix 
with    $\mathbf e_i$ being  a $3\times1$ unit vector with the $i$th element of 1,
$n_l(t)$ and $n_r(t)$ are the odometry noise  modeled as zero-mean white Gaussian. With slight abuse of notation, we define $ \mathbf{n}_o = \begin{bmatrix} n_l & n_r\end{bmatrix}^\top$.

By using ${}^{\mathbf O{(t)}}{\mathbf{v}}$ and ${^{\mathbf O(t)} \boldsymbol \omega}$, the wheel odometry based kinematic equations are given by:
\begin{subequations}
	\label{eq:odom predict evo}
	\begin{align}
	{}^{\mathbf G}\dot{\mathbf{p}}_{\mathbf O(t)} & = 
	{}^{\mathbf G}_{\mathbf O(t)}\mathbf{R} \cdot
	{}^{\mathbf O_{(t)}}{\mathbf{v}} \\
	{}^{\mathbf G}_{\mathbf O(t)}\dot{\mathbf{R}} &=
	{}^{\mathbf G}_{\mathbf O(t)}\mathbf{R} \cdot 
	\lfloor {^{\mathbf O(t)} \boldsymbol \omega} \rfloor \\
	\dot{\boldsymbol{\xi}}(t) & = \mathbf n_{ \xi}(t)
	\end{align}
\end{subequations}
where we model the noise of the ICR kinematic parameter $\boldsymbol \xi$ by using a random walk process, and $\mathbf n_{ \xi}$ is characterized by zero-mean white Gaussian noise. 
The motivation of using $\mathbf n_{ \xi}$ is to capture time-varying characteristics of $\boldsymbol \xi$, 
caused by changes in road conditions, tire pressures, center of mass. It is important to point out that, unlike sensor extrinsic calibration in which parameters can be modeled as constant parameters, e.g., $^C \dot{\mathbf p}_I = \mathbf 0$ in~\cite{kelly2011visual}, $\boldsymbol{\xi}$ must be modeled as a time-varying variable.

To propagate pose estimates in a stochastic estimator, we describe the process starting 
from the estimates $\hat{\mathbf{x}}_{ O_{k-1}} = \left[ {}^{\mathbf{G}}\hat{\mathbf{p}}^T_{\mathbf{O}_{k-1}},  {}^{\mathbf G}_{\mathbf{O}_{k-1}}\hat{ \mathbf{q}}^T, \hat{ \boldsymbol{\xi}}^T_{k-1} \right]^T$. Once the instantaneous local velocities of the robot (see Eq.~\eqref{eq:map to vw}) are available, we integrate the differential equations in Eq.~\eqref{eq:odom predict evo} over the time interval $t\in \left(t_{k-1}, t_{k} \right)$ by all the intermediate  odometer measurements $\mathcal{O}_m$, and obtain the predicted robot pose and kinematic parameters at the newest keyframe time $t_k$. This process can be characterized by $ \hat{\mathbf{x}}_{ O_k} = f \left(  {\hat{\mathbf{x}}}_{{O}_{k-1}}, \mathcal{O}_m  \right)$. 
Therefore, the odometer-induced kinematic constraint can be generically written in the following form:
\begin{align}
\label{eq:odom cost function}
& \mathcal{C}_{O}  (\mathbf{x}_{O_k}, \mathbf{x}_{O_{k-1}})= \big{|}\big{|} \mathbf{x}_{O_k} \boxminus f({\mathbf{x}}_{O_{k-1}},\mathcal{O}_m) \big{|}\big{|}_{\bm \Lambda_{O}}^2
\end{align}
%
%
where ${\bm \Lambda_{O}} $ represents the inverse covariance (information)  obtained via propagation process, and ``$\boxminus$" denotes the minus operation on manifold~\cite{barfoot2017state}. 
To formulate Eq.~\eqref{eq:odom cost function} in the stochastic estimator, error state characteristics also need to be computed since linearization of the propagation function $f$ consists of Jacobian matrices with respect to error states. We start with the continuous-time error state model, by linearizing Eq.~\eqref{eq:odom predict evo}:
\begin{align}
\dot{\delta {}^{\mathbf{G}} \mathbf p_{\mathbf O}}  &\simeq {}^{\mathbf{G}}_{\mathbf O}\hat{\mathbf R}  \left(\mathbf{I}+\lfloor\delta \boldsymbol{\theta}\rfloor \right) \left( {}^{\mathbf O } \hat{\mathbf{v}} + \mathbf{J}_{v \xi} \delta \boldsymbol{\xi}+ \mathbf{J}_{v o} \mathbf{n}_o \right) \!\! -\!\! {}^{\mathbf{G}}_{\mathbf O}\hat{\mathbf R} {}^{\mathbf O } \hat{\mathbf{v}} \notag  \\ 
&\simeq -{}^{\mathbf{G}}_{\mathbf O}\hat{\mathbf R} \lfloor {}^{\mathbf O } \hat{\mathbf{v}}  \rfloor {\delta \boldsymbol \theta} + {}^{\mathbf{G}}_{\mathbf O}\hat{\mathbf R} \mathbf{J}_{v \xi} \delta \boldsymbol{\xi} + {}^{\mathbf{G}}_{\mathbf O}\hat{\mathbf R} \mathbf{J}_{v o} \mathbf{n}_o
\\ 
\label{error evolution of theta}
\dot{\delta \boldsymbol \theta} &\simeq - \lfloor {}^{\mathbf O} \hat{\boldsymbol{\omega}} \rfloor {\delta \boldsymbol \theta} + \mathbf{J}_{\omega \xi} \delta \boldsymbol{\xi}+ \mathbf{J}_{\omega o} \mathbf{n}_o\\ 
\dot{\delta \boldsymbol \xi} &\simeq \mathbf{n}_{\xi}
\end{align}
%
We here point out that Eq.~\eqref{error evolution of theta} can be  obtained similar to Eq. (156) of \cite{trawny2005indirect}. 
In above equations, $\mathbf{J}_{v \xi}, \mathbf{J}_{\omega \xi}, \mathbf{J}_{v o}, \mathbf{J}_{\omega o}$ are the linearized Jacobian matrices, originated from:
\begin{align}
{}^{\mathbf O} {\mathbf{v}} &= {}^{\mathbf O } \hat{\mathbf{v}} + \mathbf{J}_{v \xi} \delta \boldsymbol{\xi}+ \mathbf{J}_{v o} \mathbf{n}_o \\
{}^{\mathbf O} {\boldsymbol{\omega}} &= {}^{\mathbf O } \hat{\boldsymbol{\omega}} + \mathbf{J}_{\omega \xi} \delta \boldsymbol{\xi}+ \mathbf{J}_{\omega o} \mathbf{n}_o 
\end{align} 
and
\begin{align}\setlength{\arraycolsep}{4.0pt}
\mathbf{J}_{v \xi} &= \frac{\hat{\alpha}_l o_l - \hat{\alpha}_r o_r }{\Delta \hat{Y} ^2}  \begin{bmatrix} 
0 & \hat{Y}_r & - \hat{Y}_l &  0& 0 \\
\Delta \hat{Y} &  -\hat{X}_v &  \hat{X}_v&  0& 0\\
0&0&0&0&0  \end{bmatrix} \notag\\
&~~~~~+  \frac{1}{\Delta \hat{Y}}  \begin{bmatrix} 
0 & 0& 0& - \hat{Y}_r o_l &  \hat{Y}_l o_r\\
0 & 0& 0&   \hat{X}_v o_l & - \hat{X}_v o_r \\
0 & 0& 0& 0& 0
\end{bmatrix} \\
\mathbf{J}_{v o} &= - \frac{1}{\Delta \hat{Y}} \begin{bmatrix}
-\hat{\alpha}_l \hat{Y}_r & \hat{\alpha}_r \hat{Y}_l \\
\hat{X}_v \hat{\alpha}_l & -\hat{X}_v \hat{\alpha}_r \\
0 & 0
\end{bmatrix} \\
\mathbf{J}_{\omega \xi} &= \frac{1}{\Delta \hat{Y} ^2} \begin{bmatrix} \begin{smallmatrix}
0 & 0& 0& 0& 0\\
0 & 0& 0& 0& 0\\
0& -\hat{\alpha}_r o_r + \hat{\alpha}_l o_l& \hat{\alpha}_r o_r - \hat{\alpha}_l o_l& - \Delta \hat{Y} o_l & \Delta \hat{Y} o_r 
\end{smallmatrix}\end{bmatrix}\\
\mathbf{J}_{\omega o} &= - \frac{1}{\Delta \hat{Y}} \begin{bmatrix}
0 & 0 \\
0 & 0 \\
-\hat{\alpha}_l & \hat{\alpha}_r \end{bmatrix}
\end{align}
Once continuous-time error-state equations are given, the discrete-time state transition matrices needed when minimizing Eq.~\eqref{eq:odom cost function}, can be straightforwardly calculated by numerical integration.
In general, Eq.~\eqref{eq:odom cost function} encapsulates all information related to the skid-steering effect and enables online estimation of the skid-steering parameters.
After the constraint in Eq.~\eqref{eq:odom cost function} is minimized, $\boldsymbol{\xi}_{k-1}$ will be marginalized immediately, ensuring low computational complexity of the system.

\subsection{Initialization of Kinematic Parameters}
\label{sec:init kinematic}

To allow the estimation of skid-steering kinematic parameters online, an initial estimate of the parameter vector $\boldsymbol \xi$ is required. 
Specifically, we use a simply while effective method by setting:
\begin{align}
\label{eq:special1}
\boldsymbol \xi_{initial} = 
\begin{bmatrix}
0, 0.5 b^{\dagger}, -0.5 b^{\dagger}, 1, 1
\end{bmatrix}^T
\end{align}
We emphasize that Eq.~\eqref{eq:special1} is similar but different from Eq.~\eqref{eq:special}. The parameter $b$ represents wheel distance in Eq.~\eqref{eq:special}, which can be correctly used for robot without slippage. However, skid-steering robots are designed to have slippery behaviors, and thus $b^{\dagger}$ should not be simply the wheel distance. To compute $b^{\dagger}$, we rotate the skid-steering robots and use the fact that rotational velocity reported by the IMU and odometry should be identical, which leads to the follow equation:
\begin{align}
b^{\dagger} = \frac{1}{N} \sum_{i=1}^N 
\frac{||o_{lm}(t_i) - o_{rm}(t_i)||}{
	|| \boldsymbol \omega_m(t_i)||}
\end{align}
where $\boldsymbol \omega_m(t_i)$ is gyroscope measurement, and $N$ is the number of measurements. Although this is not of high precision and the road condition of computing $b^{\dagger}$ is different from that of the testing time, this simple initialization method in combination of the proposed online calibration algorithm is able to yield accurate localization results (see our experimental results). 

\newcommand{\bO}{\mathbf{O}}
\newcommand{\bL}{\mathbf{L}}
\newcommand{\bT}{\mathbf{T}}
\newcommand{\bp}{\mathbf{p}}
\newcommand{\bl}{\mathbf{l}}
\newcommand{\ba}{\mathbf{a}}
\newcommand{\bomega}{\boldsymbol{\omega}}
\newcommand{\bo}{\mathbf{o}}
\newcommand{\bc}{\mathbf{c}}
\newcommand{\bz}{\mathbf{z}}
\newcommand{\bt}{\mathbf{t}}
\newcommand{\bQ}{\mathbf{Q}}
\newcommand{\bC}{\mathbf{C}}
\newcommand{\bA}{\mathbf{A}}
\newcommand{\bB}{\mathbf{B}}
\newcommand{\bG}{\mathbf{G}}
\newcommand{\bn}{\mathbf{n}}
\newcommand{\bI}{\mathbf{I}}
\newcommand{\bR}{\mathbf{R}}
\newcommand{\bD}{\mathbf{D}}
\newcommand{\bv}{\mathbf{v}}
\newcommand{\bJ}{\mathbf{J}}
\newcommand{\be}{\mathbf{e}}
\newcommand{\btheta}{\boldsymbol{\theta}}
\newcommand{\bepsilon}{\boldsymbol{\epsilon}}
\newcommand{\bgamma}{\boldsymbol{\gamma}}
\newcommand{\boldeta}{\boldsymbol{\eta}}
\newcommand{\bzeta}{\boldsymbol{\zeta}}

\section{Observability Analysis}
\label{sec:obser}
A critical prerequisite condition for a well-formulated estimator is to {\em only} include locally observable  (or identifiable\footnote{Since the derivative of $\boldsymbol \xi$ is modelled by zero-mean Gaussian, we here use observability and identifiability interchangeably.})~\cite{Bar-Shalom1988} states in the online optimization stage. In the skid-steering robot localization system, a subset of estimation parameters inevitably become unobservable under center circumstances, which will be analytically characterized in this section and avoided in a real-world deployment. 

Specifically, in this section, we first conduct our analysis by assuming the extrinsic parameters between sensors are perfectly known, and analyze the observability properties of the skid-steering parameters in different sensor system setup. Specifically, we consider three
cases: (\romannumeral1) monocular camera and odometer with the 3 ICR parameters and 2 correction factors; (\romannumeral2) monocular camera and odometer with 3 ICR parameter only; (\romannumeral3)  monocular camera, odometer and an IMU with 3 ICR parameters and 2 correction factors.
Subsequently, we perform the analysis under the case that extrinsic parameters between sensors are {\em unknown}. 
Since estimating extrinsic parameters online is a common estimator design choice in robotics community~\cite{guo2012analytical,heng2013camodocal,geiger2012automatic}, we also investigate the possibility of doing that for skid-steering robots.  
%

%

\subsection{Methodology Overview}
To investigate the observability properties, the analysis can be either conducted in the original nonlinear continuous-time system~\cite{li2013high} or the corresponding linearized discrete-time system~\cite{guo2013imu,yang2019observability}. 
As shown in~\cite{li2013high,hesch2013towards}, the dimension of the nullspace of the observability matrix might subject to changes due to linearization and dicrestization, and thus we conduct our analysis in the nonlinear continuous-time space in this work.

To conduct the observability analysis, we are inspired by~\cite{li2014onlineabc}, in which information provided by each sensor is investigated and subsequently combined together for deriving the final results. 
By doing this, `abstract' measurements instead of the `raw' measurements are used for analysis, which significantly simplifies our derivation and is helpful for intuitive understanding. 
Specifically, the observability analysis consists of three main steps. Firstly, we investigate the information provided by each sensor, and derive inferred `abstract' measurements from the raw measurements. Secondly, we use kinematic and measurement constraints to derive equations that indistinguishable trajectories must follow. Finally, the observability matrix is constructed by computing the derivatives of the previous derived equations with respect to the states of interests. The observability of the states can be determined by examining the rank and nullspace of the observability matrix~\cite{van2009identifiability}. 

\subsection{Inferred Measurement Model}
We first analyze the information provided by a monocular camera. It is well-known that
a monocular camera is able to provide information on rotation and up-to-scale position with respect to the initial camera frame~\cite{hartley2003multiple,li2014onlineabc} under general motion. Equivalently, the information characterized by a monocular camera can be given by: (\romannumeral1) camera's angular velocity  and (\romannumeral2) its up-to-scale linear velocity:
\begin{subequations}
	\label{eq:camera_measurement}
	\begin{align}
	\breve{{\bomega}}_C (t) &= {}^{\mathbf{C}_{(t)}}\bomega + \mathbf{n}_{\omega} (t)\\
	\breve{{\bv}}_C (t) &= s^{-1}\cdot  {}^{\mathbf C} _{\mathbf G} \mathbf R \cdot {}^{\mathbf{G}}\bv_{\mathbf{C}_{(t)}} + \mathbf{n}_v (t)
	\end{align}
\end{subequations}
where $\mathbf{n}_{\omega} (t)$ and $\mathbf{n}_{v} (t)$ are the measurement noises, ${}^{\mathbf{C}_{(t)}}{\bomega}$ denotes true local angular velocity expressed in camera frame, and ${}^{\mathbf{G}}\bv_{\mathbf{C}_{(t)}}$ is the linear velocity of camera with respect to global frame, and finally $s$ is an unknown scale factor. Additionally, $\breve{{\bomega}}_C (t)$ and $\breve{{\bv}}_C (t)$ denote the inferred rotational and linear velocity measurements.
Moreover, to make our later derivation simpler, we also introduce the rotated inferred measurements as follows:
\begin{align}
\label{eq:camera_measurement1}
\breve{{\bomega}}(t) \triangleq {}^{\mathbf O} _{\mathbf C} \mathbf R \cdot \breve{{\bomega}}_C(t),\,\,
\breve{{\bv}}(t) \triangleq {}^{\mathbf O} _{\mathbf C} \mathbf R \cdot  \breve{{\bv}}_C(t)
\end{align}
It is important to point out that in the cases when extrinsic parameter calibration between sensors is not considered in the online estimation stage, $\breve{{\bomega}} (t)$ and $\breve{{\bv}} (t)$ can be uniquely computed from the camera measurement and also treated as the inferred measurement.

\subsection{Observability of \texorpdfstring{$\boldsymbol \xi$}{} with Monocular Camera and Odometer}
\label{sec:mono_ICR_5}

We first investigate the case when a system is equipped with a monocular camera and odometers, and their extrinsic parameters are known in advance.
To perform observability analysis, we derive system equations that indistinguishable trajectories must satisfy. To start with, we note that the following geometric relationships hold for any camera-odometer system:
\begin{align}
\label{eq:angular_velocity_relation}
^{\mathbf{O}}\bomega & = {}^{\mathbf{O}}_{\mathbf{C}} \bR \cdot {}^{C}\bomega
\end{align}
which allows us to derive the following equations: 
\begin{subequations}
	\begin{align}
	^{\mathbf{G}} \bp _{\mathbf{O}}   &= 
	{} ^{\mathbf{G}} _{\mathbf{C}} \bR \cdot  ^{\mathbf{C}} \bp _{\mathbf{O}} + {}
	^{\mathbf{G}} \bp_{\mathbf{C}}\\
	^{\mathbf{G}} \bv _{\mathbf{O}}   &= ^{\mathbf{G}}\dot{\bp} _{\mathbf{O}} =
	{} ^{\mathbf{G}} _{\mathbf{C}} \bR 
	\lfloor {^{\mathbf{C}}\bomega}  \rfloor
	^\mathbf{C} \bp _{\mathbf{O}} + {}
	^{\mathbf{G}} \bv_{\mathbf{C}}\\
	{} ^{\mathbf{O}} _{\mathbf{G}} \bR {} ^{\mathbf{G}} \bv _{\mathbf{O}}   &= 
	{} ^{\mathbf{O}} _{\mathbf{C}} \bR 
	\lfloor {^{\mathbf{C}}\bomega}  \rfloor 
	^{\bC} \bp _{\bO} + {}
	^{\bO} _{\bC} \bR {} ^{\bC} _{\bG} \bR {} ^{\bG} \bv_{\bC}\\
	^{\bO}\bv  &= 
	-\lfloor { ^{\bO}\bomega} \rfloor
	^\bO \bp_{\bC}+ {}
	^{\bO} _{\bC} \bR {} ^{\bC} _{\bG} \bR  {}^\bG \bv _\bC
	\label{eq:velocity for obser}
	\end{align}
\end{subequations}
Substituting Eq.~\eqref{eq:camera_measurement1} to Eq.~\eqref{eq:velocity for obser}, we obtain the following equation:
\begin{align}
\label{eq:linear_velocity_relation}
^{\mathbf{O}}\bv  &=     -\lfloor{ { \breve\bomega}} \rfloor
^{\mathbf O} \bp_{\mathbf C}+  s \cdot \breve{\bv} 
\end{align}
where $^{\mathbf O} \bp_{\mathbf C}$ is known and $^{\mathbf{O}}\bv$ is velocity expressed in the odometer frame.
We also note that, during the observability analysis, the noise terms are ignored, following the standard procedure of performing the observability analysis.

On the other hand, as mentioned in Sec.~\ref{sec:icr-kinematics}, odometer provides observations for the speed of left and right wheels, i.e., $o_{l}$ and $o_{r}$ respectively. By linking $o_{l}$,  $o_{r}$, $\breve{{\bomega}}(t) $, $\breve{{\bv}}(t)$, and kinematic parameter vector $\boldsymbol \xi$ together, specifically substituting Eq.~\eqref{eq:linear_velocity_relation} into 
Eq.~\eqref{eq:ICRmat_5par}, we obtain:
\begin{align}\setlength{\arraycolsep}{4.0pt}
\begin{bmatrix}
\begin{bmatrix}
\breve{\omega} {}^{\mathbf O} y_{\mathbf C} \\
-\breve{\omega} {}^{\mathbf O} x_{\mathbf C} 
\end{bmatrix} \!+\! 
s \begin{bmatrix}
\breve{v} _{x} \\
\breve{v} _{y}
\end{bmatrix} \\ 
\breve{\omega}
\end{bmatrix} \!=\! \frac{1}{\Delta Y} \!
\begin{bmatrix}
- Y_r &  Y_l\\ 
X_v  & - X_v\\
-1 & 1
\end{bmatrix}\!
\begin{bmatrix}
\alpha_l &0 \\ 0 &\alpha_r
\end{bmatrix}\!
\begin{bmatrix}
o_l \\ o_r
\end{bmatrix} \notag \\
\!= \!\begin{bmatrix}
\breve{\omega} Y_l \\
\!-\! \breve{\omega} X_v \\
\frac{1}{\Delta Y}
\begin{bmatrix}
-1 & 1
\end{bmatrix} \!
\begin{bmatrix}
\alpha_l & 0 \\
0 & \alpha_r
\end{bmatrix}\!
\begin{bmatrix}
o_l \\ o_r 
\end{bmatrix} 
\end{bmatrix}
\!+\!
\begin{bmatrix}
\alpha_l o_l \\ 0 \\ 0
\end{bmatrix}
\label{eq:constraints for obser}
\end{align} 
where ${}^{\mathbf O} x_{\mathbf C}, {}^{\mathbf O} y_{\mathbf C}$ are the first and second element of ${}^{\mathbf O} \bp _{\mathbf C}$, and 
$\breve{v} _{x} , \breve{v} _{y} $ are the first and second element of $\breve{\bv}$. For brevity, we use $\breve{\omega}$ to denote the third element of $\breve{ \bomega}$.
By defining $\beta_r = \Delta Y^{-1} \alpha _r$, and
$\beta_l = \Delta Y^{-1} \alpha _l$,  we can write
\begin{align}
\label{eq:constraints_mono_5}
\begin{bmatrix}
\begin{bmatrix}
\breve{\omega} {}^{\mathbf O} y_{\mathbf C} \\
-\breve{\omega} {}^{\mathbf O} x_{\mathbf C} 
\end{bmatrix} + 
s \begin{bmatrix}
\breve{v} _{x} \\
\breve{v} _{y}
\end{bmatrix} \\ 
\breve{\omega}
\end{bmatrix} = \begin{bmatrix}
\breve{\omega} Y_l \\
- \breve{\omega} X_v \\
- \beta_l o_l + \beta_r o_r
\end{bmatrix}
\!+\!
\begin{bmatrix}
\beta_l \Delta Y o_l \\ 0 \\ 0
\end{bmatrix}
\end{align}
Note that, this equation only contains 1) sensor measurements, and 2) a combination of vision scale factor and skid-steering kinematics:
\begin{align}
\bepsilon = \begin{bmatrix}
X_v& Y_l& Y_r& \alpha_l& \alpha_r& s
\end{bmatrix}^\top \notag
\end{align}
which allows us to analyze whether indistinguishable sets of $\bepsilon$ exist subject to the provided measurement constraint equations.

The identifiability of $\epsilon$ can be described as follows:
\begin{lem}
	By using measurements from a monocular camera and wheel odometers, $\bepsilon$ is not locally identifiable.
\end{lem}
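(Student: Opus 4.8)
The plan is to exhibit, for any admissible trajectory and any consistent parameter set $\bepsilon = [X_v,\, Y_l,\, Y_r,\, \alpha_l,\, \alpha_r,\, s]^\top$, a one-parameter family of distinct parameter vectors $\bepsilon'$ that produce exactly the same sensor data, so that the constraint equations in Eq.~\ref{eq:constraints_mono_5} cannot distinguish them. Concretely, I would inspect the three scalar rows of Eq.~\ref{eq:constraints_mono_5}. The third row reads $\breve{\omega} = -\beta_l o_l + \beta_r o_r + \beta_l \Delta Y o_l$; since $\beta_l \Delta Y = \alpha_l$ and $\beta_r = \Delta Y^{-1}\alpha_r$, $\beta_l = \Delta Y^{-1}\alpha_l$, this row is simply a re-expression of $\breve{\omega} = {}^{\mathbf{O}}\omega_z$, which only constrains $\alpha_l/\Delta Y$ and $\alpha_r/\Delta Y$ through one scalar equation per time instant. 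The first two rows give $\breve{\omega}\,{}^{\mathbf O}y_{\mathbf C} + s\,\breve{v}_x = \breve{\omega} Y_l + \alpha_l o_l$ and $-\breve{\omega}\,{}^{\mathbf O}x_{\mathbf C} + s\,\breve{v}_y = -\breve{\omega} X_v$. The key observation is that all quantities entering these equations that depend on the unknowns appear in a small number of \emph{products}: $s\breve{v}_x$, $s\breve{v}_y$, $\breve{\omega}Y_l$, $\breve{\omega}X_v$, and $\alpha_l o_l$ (with the analogous $\alpha_r o_r$ hiding inside $\breve{\omega}$ via the third row). Because $\breve{v}_x, \breve{v}_y$ are themselves only known up to the scale $s$, and the odometer gives $o_l, o_r$ directly, the system is bilinear and scale-degenerate.

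The main step is to construct the explicit degeneracy. I would guess the family $s' = \lambda s$, and then ask which transformation of $(X_v, Y_l, Y_r, \alpha_l, \alpha_r)$ keeps every row of Eq.~\ref{eq:constraints_mono_5} satisfied with the \emph{same} measured $\breve{\omega}, \breve{v}_x, \breve{v}_y, o_l, o_r$. From the second row, $s'\breve{v}_y = -\breve{\omega}X_v'$ forces $X_v' = \lambda X_v$. From the first row, $s'\breve{v}_x = \breve{\omega}Y_l' + \alpha_l' o_l$; matching the $\lambda$-scaled left side requires $\breve{\omega}Y_l' + \alpha_l' o_l = \lambda(\breve{\omega}Y_l + \alpha_l o_l)$, which over generic motion (where $\breve{\omega}$ and $o_l$ vary independently) forces $Y_l' = \lambda Y_l$ and $\alpha_l' = \lambda\alpha_l$. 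The third row then requires $-\beta_l' o_l + \beta_r' o_r + \alpha_l' o_l = \breve{\omega} = -\beta_l o_l + \beta_r o_r + \alpha_l o_l$; writing $\beta_l' = \alpha_l'/\Delta Y'$, $\beta_r' = \alpha_r'/\Delta Y'$ and using $\Delta Y' = Y_l' - Y_r'$, I would check that taking $Y_r' = \lambda Y_r$ and $\alpha_r' = \lambda \alpha_r$ gives $\Delta Y' = \lambda \Delta Y$, hence $\beta_l' = \beta_l$, $\beta_r' = \beta_r$, and $\alpha_l' o_l = \lambda\alpha_l o_l$ — so the third row is satisfied identically. Thus $\bepsilon' = [\lambda X_v,\, \lambda Y_l,\, \lambda Y_r,\, \lambda\alpha_l,\, \lambda\alpha_r,\, \lambda s]^\top$ yields the same measurements for all $\lambda > 0$, giving a continuum of indistinguishable parameter vectors and proving non-identifiability.

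To turn this into the formal observability-matrix statement promised in the methodology, I would collect the constraint residual as a function $\mathbf{h}(\bepsilon)$ of the stacked parameters, differentiate along the family at $\lambda = 1$, and verify that the vector $\mathbf{n} = [X_v,\, Y_l,\, Y_r,\, \alpha_l,\, \alpha_r,\, s]^\top$ lies in the nullspace of the observability matrix $\partial \mathbf{h}/\partial \bepsilon$ evaluated along any trajectory; equivalently, I would note that $\mathbf{h}$ is homogeneous of degree one in $\bepsilon$ (each row is linear in the unknowns after the $\beta$ substitution collapses the $\Delta Y$ denominators), so Euler's identity immediately gives $(\partial\mathbf{h}/\partial\bepsilon)\,\bepsilon = \mathbf{h}(\bepsilon) = \mathbf{0}$ on the constraint set. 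This homogeneity argument is the cleanest route and sidesteps any case analysis.

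The main obstacle I anticipate is making rigorous the claim that matching the first-row identity $\breve{\omega}Y_l' + \alpha_l' o_l \equiv \lambda(\breve{\omega}Y_l + \alpha_l o_l)$ \emph{forces} $Y_l' = \lambda Y_l$ and $\alpha_l' = \lambda\alpha_l$ separately: this needs that $\breve{\omega}(t)$ and $o_l(t)$ are not proportional over the trajectory, i.e., a genuine ``general motion'' / persistency-of-excitation assumption on the skid-steering robot's wheel speeds and yaw rate. For the \emph{non}-identifiability conclusion this subtlety is actually harmless — we only need to exhibit \emph{one} indistinguishable family, and the homogeneous scaling works regardless — but I would state the excitation hypothesis explicitly so that the later (positive) observability claims in Sec.~\ref{sec:obser} rest on a clearly delineated foundation, and so the reader sees that the $s$-scale ambiguity, not a lack of motion, is what kills identifiability here. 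I would also remark that this is the expected ``monocular scale'' degeneracy: without a metric sensor (IMU), the whole geometry — camera scale and the metrically-defined ICR offsets — floats together, which motivates the IMU-aided case treated next.
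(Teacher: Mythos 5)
Your overall strategy is the same as the paper's (exhibit a one‑parameter indistinguishable family generated by the monocular scale, equivalently a nullspace vector of $\mathcal{O}_c$), but the specific family you construct is wrong, and the error traces to a single dropped term: the known camera–odometer lever arm. The first two rows of Eq.~\ref{eq:constraints_mono_5} are
\begin{align}
s\,\breve{v}_x(t) &= \breve{\omega}(t)\bigl(Y_l - {}^{\bO}y_{\bC}\bigr) + \alpha_l\, o_l(t), \qquad
s\,\breve{v}_y(t) = \breve{\omega}(t)\bigl({}^{\bO}x_{\bC} - X_v\bigr), \notag
\end{align}
whereas in your derivation you match $s'\breve{v}_y = -\breve{\omega}X_v'$ and $s'\breve{v}_x = \breve{\omega}Y_l' + \alpha_l' o_l$, i.e.\ you silently set ${}^{\bO}x_{\bC} = {}^{\bO}y_{\bC} = 0$. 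With the lever‑arm terms retained, scaling $s \mapsto \lambda s$ forces $X_v' = \lambda X_v + (1-\lambda)\,{}^{\bO}x_{\bC}$ and $Y_l' = \lambda Y_l + (1-\lambda)\,{}^{\bO}y_{\bC}$ (and likewise for $Y_r$), \emph{not} $\lambda X_v$ and $\lambda Y_l$. Your family $[\lambda X_v,\,\lambda Y_l,\,\lambda Y_r,\,\lambda\alpha_l,\,\lambda\alpha_r,\,\lambda s]$ therefore violates the constraints at every instant with $\breve{\omega}(t)\neq 0$ unless the camera happens to sit on the odometer origin; this is exactly why the paper's indistinguishable family $\bar{\bepsilon}_2$ and its nullspace vector carry the offsets $-(\lambda/s)\,{}^{\bO}y_{\bC}$ and $-(\lambda/s)\,{}^{\bO}x_{\bC}$. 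The same omission invalidates your ``cleanest route'': the constraint map is \emph{affine}, not homogeneous of degree one, in $\bepsilon$ (the terms $\breve{\omega}\,{}^{\bO}y_{\bC}$ and $-\breve{\omega}\,{}^{\bO}x_{\bC}$ are degree zero), so Euler's identity gives $(\partial\mathbf{h}/\partial\bepsilon)\,\bepsilon = \mathbf{h}(\bepsilon) - [\breve{\omega}\,{}^{\bO}y_{\bC},\,-\breve{\omega}\,{}^{\bO}x_{\bC},\,0]^\top \neq \mathbf{0}$ on the constraint set, and $\bepsilon$ is not a nullspace direction. (A minor additional slip: you read the third row of Eq.~\ref{eq:constraints_mono_5} as containing the extra $\beta_l\Delta Y o_l$ term, which in fact only enters the first component; under that misreading your own check of the third row would also fail, since $\lambda\alpha_l o_l \neq \alpha_l o_l$.)

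The repair is small but essential: keep the lever‑arm constants and take the indistinguishable family to be the affine one above (equivalently, verify that $\mathbf{k} = [\,Y_l - {}^{\bO}y_{\bC},\ \Delta Y,\ X_v - {}^{\bO}x_{\bC},\ 0,\ 0,\ s\,]^\top$ in the $\bar{\bepsilon}$ coordinates annihilates every block row of $\mathcal{O}_c$, which is the linear combination the paper exhibits). Your closing intuition — that the monocular scale ambiguity, not lack of excitation, is what destroys identifiability, and that no persistency‑of‑excitation assumption is needed for a negative result — is correct and matches the paper; it is only the explicit degeneracy direction that must be corrected.
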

\begin{proof}
	$\bepsilon$ is locally identifiable if and only if $\bar{\bepsilon}$ is locally identifiable:
	\begin{align}
	\label{eq:simple}
	\bar{\bepsilon} = \begin{bmatrix} Y_l& \Delta Y&  X_v& \beta_l& \beta_r& s \end{bmatrix}^\top \notag
	\end{align} 
	By expanding Eq.~\eqref{eq:constraints_mono_5}, we can write the following constraints:
	\begin{subequations}
		\label{eq:constraints_mono_5_ex}
		\begin{align}
		c_x(\bar{\bepsilon}, t) &= \breve{\omega}(t) {}^{\mathbf O} y_{\mathbf C} + s \breve{v} _{x}(t) - \breve{\omega}(t) Y_l - \beta_l \Delta Y o_l(t) = 0\\
		c_y(\bar{\bepsilon}, t) &=  -\breve{\omega}(t) {}^{\mathbf O} x_{\mathbf C} + s \breve{v} _{y}(t) + \breve{\omega}(t) X_v = 0\\
		c_{\omega}(\bar{\bepsilon}, t) &= \breve{\omega}(t) + \beta_l o_l(t) - \beta_r o_r(t) =0
		\end{align}
	\end{subequations}
	A necessary and sufficient condition of $\bar{\bepsilon}$ to be locally identifiable is following observability matrix has full column rank, 
	over a set of time instants $\mathcal{S}  = \lbrace  t_0, t_1, \dots, t_s \rbrace$,:
	\begin{align}
	\label{eq:occ}
	\mathbf{M} = \begin{bmatrix}
	\mathbf{D}(t_0)^\top & \mathbf{D}(t_1)^\top& \dots& \mathbf{D}(t_s)^\top
	\end{bmatrix}^\top
	\end{align}
	where
	\begin{align}\setlength{\arraycolsep}{0.1pt}
	\mathbf{D}(t) &\!=\! \begin{bmatrix}
	\frac{\partial c_x(\bar{\bepsilon}, t)}{\partial \bar{\bepsilon}} & \frac{\partial c_y(\bar{\bepsilon}, t)}{\partial \bar{\bepsilon}} & \frac{\partial c_{\omega}(\bar{\bepsilon}, t)}{\partial \bar{\bepsilon}}
	\end{bmatrix} ^\top \notag \\
	\label{eq:d}
	&\!= \!\begin{bmatrix}\begin{smallmatrix}
	- \breve{\omega}(t)& -\beta_l o_l(t)& 0& -\Delta Y o_l(t)& 0& \breve{v} _{x}(t)\\
	0& 0& \breve{\omega}(t)& 0& 0& \breve{v} _{y}(t)\\
	0& 0& 0& o_l(t)& -o_r(t)& 0
	\end{smallmatrix}\end{bmatrix} 
	\end{align}
	%
	Substituting Eq.~\eqref{eq:d} back into Eq.~\eqref{eq:occ} leads to:
	\begin{align}\setlength{\arraycolsep}{4.0pt}
	\label{eq:obser_matrix_5icr}
	\mathbf{M} = \begin{bmatrix}\begin{smallmatrix}
	- \breve{\omega}(t_0)& -\beta_l o_l(t_0)& 0& -\Delta Y o_l(t_0)& 0& \breve{v} _{x}(t_0)\\
	0& 0& \breve{\omega}(t_0)& 0& 0& \breve{v} _{y}(t_0)\\
	0& 0& 0& o_l(t_0)& -o_r(t_0)& 0 \\
	\vdots& \vdots& \vdots& \vdots& \vdots& \vdots\\
	- \breve{\omega}(t_s)& -\beta_l o_l(t_s)& 0& -\Delta Y o_l(t_s)& 0& \breve{v} _{x}(t_s)\\
	0& 0& \breve{\omega}(t_s)& 0& 0& \breve{v} _{y}(t_s)\\
	0& 0& 0& o_l(t_s)& -o_r(t_s)& 0
	\end{smallmatrix}\end{bmatrix}
	\end{align}
	By defining  $\mathbf{M}(:,i)$ the $i$th block columns of $\mathbf{M}$, the following equation holds:
	\begin{align}
	& (-{}^{\mathbf O} y_{\mathbf C} \!+\! Y_l)\cdot\mathbf{M}(:,1) \!+\! \Delta Y \cdot \mathbf{M}(:,2)  \! \notag \\ &~~~ +\! (X_v - {}^{\mathbf O} x_{\mathbf{C}}) \cdot \mathbf{M}(:,3)\!  +\! s \cdot \mathbf{M}(:,6)= \mathbf 0 \notag
	\end{align}
	The above equation demonstrates that $\mathbf{M}$ is {\em not} of full column rank, indicating that $\bepsilon$ is not identifiable.
	
	To further investigate the indistinguishable states that cause the unobservable situations, we note that for a vector $\bar{\bepsilon}_1 = [ Y_l, \Delta Y,  X, \beta_l, \beta_r, s ]^\top$ that satisfies Eq.~\eqref{eq:constraints_mono_5_ex}, another vector
	\begin{align}
	\bar{\bepsilon}_2 = [ (1+\lambda/s)Y_l- (\lambda/s) {}^\bO y_\bC, (1+\lambda/s)\Delta Y, \notag \\~~~  (1+\lambda/s)X - (\lambda/s) {}^\bO x_\bC, \beta_l, \beta_r, s+\lambda ]^\top \notag
	\end{align}
	for any $\lambda \in \mathbb{R}$ is always valid for the constraints Eq.~\eqref{eq:constraints_mono_5_ex}. Thus $\bar{\bepsilon}_1$ and $\bar{\bepsilon}_2$ are indistinguishable, and $\bar{\bepsilon}$ is not locally identifiable. This completes the proof.
\end{proof}

\subsection{Observability of \texorpdfstring{$\boldsymbol \xi_{ICR}$}{} with Monocular Camera and Odometer}
Since the full kinematic parameters $\boldsymbol{\xi} = 
[\boldsymbol{\xi}_{ICR}^T,\boldsymbol{\xi_{\alpha}}^T]^T$ with monocular camera and odometer are not locally identifiable, we look into the case of that only the 3 ICR parameters, i.e., $\boldsymbol{\xi}_{ICR}$, are estimated without the correction factors. Similar to Eq.~\eqref{eq:constraints for obser}, the following equation holds:
\begin{align}
\label{eq:constraints_mono_3}
\begin{bmatrix}
\begin{bmatrix}
\breve{\omega} {}^\bO y_\bC \\
-\breve{\omega} {}^\bO x_\bC 
\end{bmatrix} + 
s \begin{bmatrix}
\breve{v} _{x} \\
\breve{v} _{y}
\end{bmatrix} \\ 
\breve{\omega}
\end{bmatrix} 
= \begin{bmatrix}
\breve{\omega} Y_l \\
- \breve{\omega} X_v \\
\frac{1}{\Delta Y}
\left(o_r - o_l\right) 
\end{bmatrix}
+
\begin{bmatrix}
o_l \\ 0 \\ 0
\end{bmatrix}
\end{align} 
The above expression is a function of the odometer and inferred visual measurements $\breve{\omega}, \breve{v} _{x}, \breve{v} _{y}, o_l, o_r$, as well as the kinematic intrinsic parameters $\boldsymbol{\xi}_{ICR}$ and visual scale factor $s$: \begin{align}
\bgamma = 
\begin{bmatrix}
\boldsymbol{\xi}^T_{ICR} & s
\end{bmatrix}^T
= 
\begin{bmatrix}
X_v& Y_l& Y_r& s
\end{bmatrix}^\top \notag
\end{align}
The local identifiability of $\bgamma$ can be stated as follows:
\begin{lem}
	By using the monocular and odometer measurements, and the 3 ICR parameter vector $\boldsymbol{\xi}_{ICR}$ to model the kinematics, $\bgamma$ is locally identifiable except for the following degenerate cases: (\romannumeral1) the odometer linear velocity $o_l(t)$ keeps zero; (\romannumeral2) the angular velocity $\breve{\omega}(t)$ keeps zero; (\romannumeral3)  $o_r (t)$, $o_l (t)$, and $\breve{\omega}(t)$ are all constants;  (\romannumeral4) the linear velocities of two wheels $o_l(t), o_r(t)$  keeps identical to each other; (\romannumeral5) the angular velocity $\breve{\omega}(t)$ is consistently proportional to $o_l(t)$.
\end{lem}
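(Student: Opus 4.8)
The plan is to follow the same recipe as in the proof of the previous lemma: convert the constraint \eqref{eq:constraints_mono_3} into scalar equations in the unknowns, stack their Jacobians over a set of time instants, and characterize exactly when the resulting observability matrix loses full column rank. First I would expand \eqref{eq:constraints_mono_3} componentwise into
\begin{subequations}
\begin{align}
c_x(\bgamma,t) &= \breve{\omega}(t)\,{}^{\bO}y_{\bC} + s\,\breve{v}_x(t) - \breve{\omega}(t)\,Y_l - o_l(t) = 0,\\
c_y(\bgamma,t) &= -\breve{\omega}(t)\,{}^{\bO}x_{\bC} + s\,\breve{v}_y(t) + \breve{\omega}(t)\,X_v = 0,\\
c_\omega(\bgamma,t) &= \breve{\omega}(t) - \frac{o_r(t)-o_l(t)}{Y_l-Y_r} = 0,
\end{align}
\end{subequations}
and work with the reparametrization $\bar{\bgamma}=[X_v,\ Y_l,\ \Delta Y,\ s]^\top$, where $\Delta Y = Y_l-Y_r$; since this is an invertible linear change of coordinates, $\bgamma$ is locally identifiable iff $\bar{\bgamma}$ is. As before, a necessary and sufficient condition for local identifiability is that the stacked matrix $\mathcal{O}_c = [\mathbf{D}(t_0)^\top\ \cdots\ \mathbf{D}(t_s)^\top]^\top$ has full column rank~$4$ over a sufficiently exciting set of instants, where $\mathbf{D}(t)$ collects the gradients of $c_x,c_y,c_\omega$ with respect to $\bar{\bgamma}$.

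Next I would compute $\mathbf{D}(t)$. Using $c_\omega=0$ to simplify $\partial c_\omega/\partial \Delta Y = (o_r-o_l)/\Delta Y^2 = \breve{\omega}/\Delta Y$, the Jacobian takes the sparse form
\begin{align}
\mathbf{D}(t) = \begin{bmatrix}
0 & -\breve{\omega}(t) & 0 & \breve{v}_x(t)\\
\breve{\omega}(t) & 0 & 0 & \breve{v}_y(t)\\
0 & 0 & \breve{\omega}(t)/\Delta Y & 0
\end{bmatrix},
\end{align}
whose columns correspond to $X_v,\ Y_l,\ \Delta Y,\ s$ (here $\Delta Y\neq0$ by well-posedness of the model).

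The core step is to determine the null space, i.e. to find $\mathbf{u}=[u_1,u_2,u_3,u_4]^\top$ with $\mathbf{D}(t)\mathbf{u}=\mathbf{0}$ for all sampled $t$. The third row gives $\breve{\omega}(t)u_3=0$, which splits the analysis. (a) If $\breve{\omega}\equiv0$ — equivalently, by $c_\omega$, if $o_l\equiv o_r$ — the first three columns of $\mathbf{D}(t)$ vanish identically and full column rank is impossible; this is the union of cases (ii) and (iv). (b) Otherwise $u_3=0$, and the first two rows reduce to $\breve{v}_x u_4=\breve{\omega}u_2$ and $\breve{v}_y u_4=-\breve{\omega}u_1$. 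Eliminating $\breve{v}_x,\breve{v}_y$ through $c_x=0$ and $c_y=0$ (which give $s\breve{v}_x=\breve{\omega}(Y_l-{}^{\bO}y_{\bC})+o_l$ and $s\breve{v}_y=\breve{\omega}({}^{\bO}x_{\bC}-X_v)$) turns these into $s u_1+({}^{\bO}x_{\bC}-X_v)u_4=0$ and $\mu\,\breve{\omega}(t)+u_4\,o_l(t)=0$, with $\mu:=(Y_l-{}^{\bO}y_{\bC})u_4-s u_2$ a constant; the first relation pins $u_1$ to $u_4$, and the second admits a nonzero $u_4$ — hence a nontrivial $\mathbf{u}$ — precisely when $o_l(t)$ is a constant multiple of $\breve{\omega}(t)$, and forces $\mathbf{u}=\mathbf{0}$ otherwise. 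That last condition is exactly the union of cases (i) ($o_l\equiv0$, the multiple being zero), (v) ($\breve{\omega}$ proportional to $o_l$) and (iii) ($o_l,o_r,\breve{\omega}$ all constant, which via $c_\omega$ makes $o_l/\breve{\omega}$ constant). Hence $\bgamma$ fails to be locally identifiable exactly in cases (i)--(v), and outside them the null space is trivial, so $\bgamma$ is identifiable. I would also make the degenerate direction constructive by exhibiting, in each case, an explicit one-parameter family $\bgamma(\lambda)$ — shifting $s$ by $\lambda$ and compensating $X_v,Y_l,Y_r$ accordingly — that reproduces the identical measurement stream, in the spirit of $\bar{\bepsilon}_2$ in the preceding proof.

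The main obstacle I expect is the bookkeeping rather than the linear algebra (which is small and sparse): correctly re-expressing the camera-derived quantities $\breve{\omega},\breve{v}_x,\breve{v}_y$ appearing in $\mathbf{D}(t)$ in terms of the raw signals $o_l,o_r,\breve{\omega}$ named in the statement, and then verifying that the five listed conditions are jointly necessary and sufficient. In particular one has to recognize that (iii) and (iv) are physically transparent sub-cases of (ii) and (v) rather than independent degeneracies, and one must handle with care the standing assumption that the sampled instants — and the underlying analytic motion — are rich enough that function-level non-degeneracy is already witnessed by finitely many samples, the same mild excitation caveat used throughout the section.
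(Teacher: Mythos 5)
Your proposal is correct and follows essentially the same route as the paper: reparametrize to $(Y_l,\Delta Y, X_v, s)$, stack the Jacobians of $c_x, c_y, c_\omega$ into an observability matrix, and characterize its null space, with your substitution of the constraints $c_x=c_y=c_\omega=0$ into the null-space equations playing exactly the role of the paper's explicit column operations (which produce the simplified matrix with rows $[-\breve{\omega},0,0,o_l]$, $[0,0,\breve{\omega},0]$, $[0,o_r-o_l,0,0]$). Your additional observation that the five listed cases collapse to ``$\breve{\omega}\equiv 0$'' or ``$o_l$ a constant multiple of $\breve{\omega}$'' is a correct and slightly cleaner packaging of the same conclusion.
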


\begin{proof}
	We first note that the local identifiability of $\bgamma$ is equivalent to that of $\bar{\bgamma}$,
	$$\bar{\bgamma} =\begin{bmatrix}
		Y_l & \Delta Y& X_v& s
	\end{bmatrix}^T $$ 
	By expanding~\ref{eq:constraints_mono_3} and considering all the measurements at different time $t$, we can derive following system constraints:
	\begin{subequations}
		\label{eq:constraints_mono_3_ex}
		\begin{align}
			c_x(\bar{\bgamma}, t) &= \breve{\omega}(t) {}^\bO y_\bC + s \breve{v}_x(t) - \breve{\omega} Y_l - o_l(t) = 0\\
			c_y(\bar{\bgamma}, t) &=  -\breve{\omega}(t) {}^\bO x_\bC + s \breve{v}_y(t) + \breve{\omega}(t) X_v = 0\\
			c_{\omega}(\bar{\bgamma}, t) &= \breve{\omega}(t) + \frac{o_l(t) - o_r(t)}{\Delta Y} =0
		\end{align}
	\end{subequations}
	Similar to the case of using full kinematic parameters $\boldsymbol{\xi}$ in Section.~\ref{sec:mono_ICR_5}, we derive the following observability matrix for $\bar{\bgamma}$ (using Eq.~\eqref{eq:occ} and.~\ref{eq:d}):
	\begin{align}
		\label{eq:obser_matrix_3icr}
		\mathbf{M} = \begin{bmatrix}
			- \breve{\omega}(t_0)& 0& 0& \breve{v}_x(t_0)\\
			0& 0& \breve{\omega}(t_0)& \breve{v}_y(t_0)\\
			0& \frac{o_r(t_0) - o_l(t_0)}{(\Delta Y)^2}& 0& 0 \\
			\vdots& \vdots& \vdots& \vdots\\ 
			- \breve{\omega}(t_s)& 0& 0& \breve{v}_x(t_s)\\
			0& 0& \breve{\omega}(t_s)& \breve{v}_y(t_s)\\
			0& \frac{o_r(t_s) - o_l(t_s)}{(\Delta Y)^2}& 0& 0     
		\end{bmatrix}
	\end{align}
	To simplify the structure of the observability matrix, we apply the following linear operations without changing the observability properties: 
	\begin{align}
		\mathbf{M}(:,2) &\!\leftarrow \!(\Delta Y)^2  \mathbf{M}(:,2) \notag \\ \mathbf{M}(:,4) &\!\leftarrow\! s  \mathbf{M}(:,4) \!\!+\!\! (Y_l \!\!-\!\! {}^{\bO}y_{\bC}) \mathbf{M}(:,1) \!\!+\!\! (X_v\!\! -\!\! {}^{\bO}x_\bC)  \mathbf{M}(:,3) \notag
	\end{align} 
	where $(\cdot)\leftarrow(\cdot)$ represents the operator to replace the left side by the right side. As a result, Eq.~\eqref{eq:obser_matrix_3icr} can be simplified as:
	\begin{align}
		\label{eq:obser_matrix_3icr_af}
		\mathbf{M} = \begin{bmatrix}
			- \breve{\omega}(t_0)& 0& 0& o_l(t_0)\\
			0& 0& \breve{\omega}(t_0)& 0\\
			0& {o_r(t_0) - o_l(t_0)}& 0& 0 \\
			\vdots& \vdots& \vdots& \vdots\\ 
			- \breve{\omega}(t_s)& 0& 0& o_l(t_s)\\
			0& 0& \breve{\omega}(t_s)& 0\\
			0& o_r(t_s) - o_l(t_s)& 0& 0     
		\end{bmatrix}
	\end{align}
	To investigate the observability of the matrix in Eq.~\eqref{eq:obser_matrix_3icr_af}, 
	we inspect the existence of the non-zero vector $\mathbf k$ such that $\mathbf{M} \, \mathbf k = \mathbf 0, \mathbf k = \begin{bmatrix}
		k_1&k_2&k_3&k_4
	\end{bmatrix}^\top \neq\mathbf 0$. 
	If such a vector $\mathbf k$ exists, all of the following conditions must be satisfied:
	\begin{align}
		- \breve{\omega}(t)  k_1 + o_l(t) k_4 = 0
		,(o_r(t) - o_l(t)) k_2 = 0
		,\breve{\omega}(t) k_3 = 0 \notag
	\end{align}
	To allow the above equations to be true, one of the following conditions is required:  \begin{itemize}
		\item $o_l(t)$ keeps constantly zero, $\mathbf k = \begin{bmatrix}
			0&0&0&\rho
		\end{bmatrix}^\top $,
		\item $\breve{\omega}(t)$ keeps constantly zero, $\mathbf k = \begin{bmatrix}
			0&0&\rho&0
		\end{bmatrix}^\top $,
		\item $o_r (t)$, $o_l (t)$, and $\breve{\omega}(t)$ are all constants, $\mathbf k = \begin{bmatrix}
			0&\rho&0&0
		\end{bmatrix}^\top $,
		\item $o_l(t)$ keeps identical to $ o_r(t)$, $\mathbf k = \begin{bmatrix}
			0&\rho&0&0
		\end{bmatrix}^\top $,
		\item $\breve{\omega}(t)$ keeps proportional to $o_l(t)$, $\mathbf k =
		\begin{bmatrix}
			\rho o_l/\breve{\omega}&0&0&\rho
		\end{bmatrix}^\top$.
	\end{itemize}    
	where $\rho$ can be any non-zero value that is used to generate valid non-zero vector $\mathbf k$ such that $\mathbf{M} \, \mathbf k = \mathbf 0$. We note that, all above cases are {\em special} conditions. When a robot moves under general motion, none of those conditions can be satisfied. Therefore, in a camera and odometers only skid-steering robot localization system, $\boldsymbol{\xi}_{ICR}$ is observable unless entering the specified special conditions listed above.
\end{proof}

\subsection{Observability of \texorpdfstring{$\boldsymbol \xi$}{} with a Monocular Camera, an IMU, and Odometer}

So far, we have shown that when a robotic system is equipped with a monocular camera and wheel odometer, estimating $\boldsymbol \xi$ is not feasible, and the alternative solution is to include $\boldsymbol \xi_{ICR}$ in the online stage only. However, it is not an ideal solution to calibrate $\boldsymbol \xi_{\alpha}$ offline and fixed in the online stage since it is subject to the changes in road conditions and tire conditions, etc.

To tackle this problem, we investigate the observability of $\boldsymbol \xi$ when an IMU and camera are fused with odometer measurements.
Once IMU  and camera are used, similar to the previous analysis, we start by introducing the `inferred measurement'. Instead of focusing on visual measurement only, we provide `inferred' measurement by considering the visual-inertial system together. As analyzed in rich existing literature,
visual-inertial estimation provides: camera's local (\romannumeral1) angular velocity and (\romannumeral2) linear velocity, 
similar to vision only case (Eq.~\eqref{eq:camera_measurement1}) without having the unknown scale factor~\cite{hesch2013towards,li2014onlineabc,schneider2019observability}.
Similarly to Eq.~\eqref{eq:simple}, to simplify the analysis, we prove identifiability of $\bar{\boldsymbol \xi}$
instead of ${\boldsymbol \xi}$, since properties of $\bar{\boldsymbol \xi}$
and ${\boldsymbol \xi}$ are interchangeable:
\begin{align}
\bar{\boldsymbol \xi} = \begin{bmatrix} Y_l& \Delta Y&  X_v& \beta_l& \beta_r\end{bmatrix}^\top \notag
\end{align}

\begin{lem}
	By using measurements from a monocular camera, an IMU, and wheel odometer, $\bar{\boldsymbol \xi}$ is locally identifiable, except for following degenerate cases: (\romannumeral1) velocity of one of the wheels, $o_l(t)$ or $o_r(t)$, keeps zero; (\romannumeral2) $\breve{\omega}(t)$ keeps zero; (\romannumeral3) $o_r (t)$, $o_l (t)$, and $\breve{\omega}(t)$ are all constants; (\romannumeral4) $o _{l}(t)$ is always proportional to $o_r(t)$;(\romannumeral5) $\breve{\omega}(t)$ is always proportional to $o_l(t)$.
\end{lem}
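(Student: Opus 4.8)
The plan is to run the same three-step argument used for the two preceding lemmas, the only structural change being that the visual--inertial front end now delivers the inferred linear velocity $\breve{\bv}(t)$ at a \emph{known} scale, so the object to be tested is the five-dimensional $\bar{\boldsymbol\xi}$ rather than a six-dimensional vector carrying an extra scale unknown. First I would note that the purely geometric relations of Sec.~\ref{sec:mono_ICR_5}, which use only the (assumed known) camera--odometer extrinsics, still hold, so $^{\bO}\bv = -\lfloor\breve{\bomega}\rfloor\,{}^{\bO}\bp_{\bC} + \breve{\bv}$, i.e.\ the $s=1$ specialization of \eqref{eq:linear_velocity_relation}. Substituting this into the kinematic model \eqref{eq:ICRmat_5par} and reusing the reparametrization $\beta_l=\alpha_l/\Delta Y$, $\beta_r=\alpha_r/\Delta Y$ --- a diffeomorphism on the admissible set $\Delta Y\neq 0,\ \alpha_l,\alpha_r\neq 0$, so identifiability of $\boldsymbol\xi$ and of $\bar{\boldsymbol\xi}$ coincide, exactly as argued for $\bepsilon$ in Sec.~\ref{sec:mono_ICR_5} --- yields three scalar constraints $c_x,c_y,c_\omega$ at each time $t$, of the same form as \eqref{eq:constraints_mono_5_ex} with $s$ replaced by $1$.

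Next I would stack the Jacobians $\mathbf D(t)=\partial(c_x,c_y,c_\omega)^\top/\partial\bar{\boldsymbol\xi}$ over a set of time instants to form $\mathcal{O}_c$ as in \eqref{eq:occ}. Since $s$ is no longer a free variable, $\mathcal{O}_c$ now has five columns (the $\breve v_x/\breve v_y$ column is gone), and the lemma becomes the statement that $\mathcal{O}_c$ has full column rank under general motion. To expose the structure I would rescale the $\Delta Y$ and $\beta_l$ columns by the nonzero factors $\beta_l$ and $\Delta Y$ and subtract one from the other, cancelling the $o_l$ term they share in the $c_x$ block; the surviving nonzero entries are then only $-\breve\omega$ and $-o_l$ in the $c_x$ rows, $\breve\omega$ in the $c_y$ rows, and $o_l,-o_r$ in the $c_\omega$ rows. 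Equivalently, this reveals a triangular dependence: $c_y$ determines $X_v$, $c_\omega$ determines $(\beta_l,\beta_r)$, and, given those, $c_x$ determines $(Y_l,\Delta Y)$.

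I would then look for a nonzero $\mathbf k=[k_1,\dots,k_5]^\top$ with $\mathcal{O}_c\mathbf k=\mathbf 0$, which after the column operations is equivalent to $\breve\omega(t)k_1+o_l(t)k_2=0$, $\breve\omega(t)k_3=0$, and $o_l(t)k_4-o_r(t)k_5=0$ holding for all $t$ in the window. Checking which trajectories admit such a $\mathbf k$ reproduces the stated list: the second equation forces $k_3=0$ unless $\breve\omega\equiv 0$ (case ii); a nonzero $(k_4,k_5)$ forces $o_l$ and $o_r$ to be proportional, or one of them to vanish identically (cases iv, i); a nonzero $(k_1,k_2)$ forces $\breve\omega$ proportional to $o_l$, or $o_l\equiv 0$ (cases v, i); and if $o_l,o_r,\breve\omega$ are all constant the first and third equations each leave a one-dimensional free direction (case iii). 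When none of (i)--(v) holds, $\mathbf k=\mathbf 0$, $\mathcal{O}_c$ has full column rank, and $\bar{\boldsymbol\xi}$ --- hence $\boldsymbol\xi$ --- is locally identifiable; for each degenerate case I would also write down the explicit perturbation of $\bar{\boldsymbol\xi}$ induced by the corresponding $\mathbf k$ to confirm these are genuine indistinguishable pairs.

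The main obstacle I anticipate is bookkeeping rather than insight: one cannot simply set $s=1$ in \eqref{eq:obser_matrix_5icr}, because removing the estimated scale also deletes a column of the observability matrix and can in principle alter the rank verdict, so $c_x,c_y,c_\omega$ and $\mathbf D(t)$ must be re-derived from scratch; and one must verify that the column rescalings are legitimate over the whole admissible parameter set (no degeneracy hidden by dividing through by $\beta_l$ or $\Delta Y$) and that the case analysis of the three scalar conditions above is genuinely exhaustive.
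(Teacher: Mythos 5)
Your proposal is correct and follows essentially the same route as the paper's proof: the IMU removes the scale unknown, the constraints of Eq.~\ref{eq:constraints_mono_5_ex} are rewritten with $s$ eliminated to give Eq.~\ref{eq:constraints_stereo_5_ex}, the five-column observability matrix is simplified by the same column rescaling/combination that cancels the shared $o_l$ term, and the null-space analysis of the three resulting scalar conditions yields exactly the degenerate cases (i)--(v). The only differences are immaterial sign and ordering conventions in the column operations and in the $(k_4,k_5)$ condition.
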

\begin{proof}
	Similarly to Eq.~\eqref{eq:constraints_mono_5_ex}, by removing the scale factor, the corresponding system constraints can be derived as:
	\begin{subequations}
		\label{eq:constraints_stereo_5_ex}
		\begin{align}
			c_x(\bar{\boldsymbol \xi}, t) &= \breve{\omega}(t) {}^{\mathbf O} y_{\mathbf C} +  \breve{v} _{x}(t) - \breve{\omega}(t) Y_l - \beta_l \Delta Y o_l(t) = 0\\
			c_y(\bar{\boldsymbol \xi}, t) &=  -\breve{\omega}(t) {}^{\mathbf O} x_{\mathbf C} +  \breve{v} _{y}(t) + \breve{\omega}(t) X_v = 0\\
			c_{\omega}(\bar{\boldsymbol \xi}, t) &= \breve{\omega}(t) + \beta_l o_l(t) - \beta_r o_r(t) =0
		\end{align}
	\end{subequations}
	Therefore, the observability matrix for $\bar{\boldsymbol \xi}$ can be computed by:
	\begin{align}\setlength{\arraycolsep}{4.0pt}
		\label{eq:obser_matrix_5icr_stereo}
		\mathbf{M} = \begin{bmatrix}
			- \breve{\omega}(t_0)& -\beta_l o_l(t_0)& 0& -\Delta Y o_l(t_0)& 0\\
			0& 0& \breve{\omega}(t_0)& 0& 0\\
			0& 0& 0& o_l(t_0)& -o_r(t_0) \\
			\vdots& \vdots& \vdots& \vdots& \vdots\\
			- \breve{\omega}(t_s)& -\beta_l o_l(t_s)& 0& -\Delta Y o_l(t_s)& 0\\
			0& 0& \breve{\omega}(t_s)& 0& 0\\
			0& 0& 0& o_l(t_s)& -o_r(t_s)
		\end{bmatrix}
	\end{align}
	After the following linear operations:
	\begin{align}
		\mathbf{M}(:,2) &\!\leftarrow\! -\mathbf{M}(:,2)/\beta_l \notag\\
		\mathbf{M}(:,4) &\!\leftarrow\! \mathbf{M}(:,4) + \Delta Y \mathbf{M}(:,2) \notag
	\end{align} 
	Eq.~\eqref{eq:obser_matrix_5icr_stereo} can be simplified as:
	\begin{align}\setlength{\arraycolsep}{4.0pt}
		\label{eq:obser_matrix_5icr_stereo_af}
		\mathbf{M} = \begin{bmatrix}
			- \breve{\omega}(t_0)& o_l(t_0)& 0& 0& 0\\
			0& 0& \breve{\omega}(t_0)& 0& 0\\
			0& 0& 0& o_l(t_0)& -o_r(t_0) \\
			\vdots& \vdots& \vdots& \vdots& \vdots\\
			- \breve{\omega}(t_s)& o_l(t_s)& 0& 0& 0\\
			0& 0& \breve{\omega}(t_s)& 0& 0\\
			0& 0& 0& o_l(t_s)& -o_r(t_s)         
		\end{bmatrix}
	\end{align}
	Similarly, we investigate the existence of the non-zero vector $\mathbf{k}$ such that $\mathbf{M} \mathbf{k} = 0$. If such a vector $\mathbf{k} = \begin{bmatrix}    k_1& k_2& k_3& k_4& k_5 \end{bmatrix}$ exists, all of the following must be satisfied:
	\begin{align}
		- \breve{\omega}(t)  k_1 + o_l(t) k_2 \!=\! 0, \breve{\omega}(t) k_3 \!=\! 0 
		,o_r(t) k_4 - o_l(t) k_5 \!=\! 0
		\notag
	\end{align}
	which requires one of the following conditions to be true:
	\begin{itemize}
		\item $o_l(t)$ is constantly zero, $\mathbf k = \begin{bmatrix}
			0& \rho_1& 0& \rho_2& 0
		\end{bmatrix}^\top $, or $o_r(t)$ is constantly zero, $\mathbf k = \begin{bmatrix}
			0& 0& 0& 0& \rho
		\end{bmatrix}^\top $
		\item $\breve{\omega}(t)$ is constantly zero, $\mathbf k = \begin{bmatrix}
			\rho_1& 0& \rho_2& 0& 0
		\end{bmatrix}^\top$,
		\item $o_r (t)$, $o_l (t)$, and $\breve{\omega}(t)$ are all constants, $\mathbf k = \begin{bmatrix}
			0& 0& \rho& 0& 0
		\end{bmatrix}^\top$,
		\item $o_l(t)$ keeps proportional to $ o_r(t)$, $\mathbf k = \begin{bmatrix}
			0& 0& 0& \rho& 0
		\end{bmatrix}^\top$,
		\item $\breve{\omega}(t)$ keeps proportional to $o_l(t)$, $\mathbf k = \begin{bmatrix}
			\rho o_l/\breve{\omega}& \rho& 0& 0& 0
		\end{bmatrix}^\top$.
	\end{itemize}    
	where $\rho, \rho_1, \rho_2$ can be any non-zero value that is used to generate valid non-zero vector $\mathbf k$ such that $\mathbf{M} \, \mathbf k = \mathbf 0$. All the above cases are {\em special} conditions. Therefore, in a  skid-steering robot localization system equipped with a camera, an IMU, and odometers, $\boldsymbol{\xi}$ is observable unless entering the specified special conditions listed above. This completes the proof.
	%
	%
\end{proof}

\subsection{Observability of \texorpdfstring{$\boldsymbol \xi$}{} with a Monocular Camera, an IMU, an Odometer and with Online Extrinsics Calibration}
\label{sec:obser with extrin}
%

It is essential to know the extrinsic transformations between different sensors in a multi-sensor fusion system. 
Since IMU-camera extrinsic parameters are widely investigated in the existing literature, and practically the IMU-camera system is frequently manufactured as an integrated sensor suite, we here focus on camera-odometer extrinsic parameters.
%
%
We first define the parameter state when camera-odometer extrinsics are included:
\begin{align}\setlength{\arraycolsep}{4.0pt}
\boldeta \!=\! \begin{bmatrix}
X_v& Y_l& Y_r& \alpha_l& \alpha_r& {}^{\bO} x_\bC& {}^{\bO} y_\bC& {}^{\bO} z_\bC& {}^{\bO}_{\bC}\delta\btheta^\top 
\end{bmatrix}^\top \notag
\end{align}
where ${}^{\bO}\bp_\bC = \begin{bmatrix}    {}^{\bO} x_\bC& {}^{\bO} y_\bC& {}^{\bO} z_\bC    \end{bmatrix}^\top$ and $^{\bO}_{\bC}\delta\btheta \in \mathbb{R}^3$ are the translational and rotational part of the extrinsic transformation between odometer and camera. $^{\bO}_{\bC}\delta\btheta$ is error state (or Lie algebra increment) of the 3D rotation matrix $^{\bO}_{\bC}\bR$.

Since extrinsic translation and rotation components might be subject to different observability properties, we also define state parameters that contain each of them separately:
\begin{align}\setlength{\arraycolsep}{4.0pt}
\boldeta_p \!=\! \begin{bmatrix}
X_v& Y_l& Y_r& \alpha_l& \alpha_r& {}^{\bO} x_\bC& {}^{\bO} y_\bC& {}^{\bO} z_\bC
\end{bmatrix}^\top \notag
\end{align}
and
\begin{align}\setlength{\arraycolsep}{4.0pt}
\boldeta_{\theta} \!=\! \begin{bmatrix}
X_v& Y_l& Y_r& \alpha_l& \alpha_r& {}^{\bO}_{\bC}\delta\btheta^\top 
\end{bmatrix}^\top \notag
\end{align} 
To summarize, the objective of this section is to demonstrate the observability properties of $\boldeta$, $\boldeta_p$, and $\boldeta_{\theta}$.
\begin{lem}
	By using measurements from a monocular camera, IMU and wheel odometers, $\boldeta_p$ and $\boldeta$ are not identifiable. Specifically, the vertical direction of translation in the extrinsics, $ {}^{\bO} z_\bC$ is always unidentifiable for any type of ground robot, and $ {}^{\bO} x_\bC$ and ${}^{\bO} y_\bC$ become unidentifiable if the skid-steering kinematic parameters are estimated online.
\end{lem}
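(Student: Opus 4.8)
The plan is to follow exactly the three-step methodology used in the preceding lemmas: (i) write down the system constraints that indistinguishable trajectories must satisfy, now with the camera--odometer extrinsics promoted to unknowns; (ii) assemble the observability matrix $\mathcal{O}_c$ by stacking the Jacobians of these constraints over a set of time instants; and (iii) exhibit explicit nontrivial null directions of $\mathcal{O}_c$ together with the corresponding families of indistinguishable states. As in the earlier lemmas, I would first reduce the assertion on $\boldeta_p$ to the equivalent assertion on the reparametrized vector $\bar{\boldeta}_p = \begin{bmatrix} Y_l & \Delta Y & X_v & \beta_l & \beta_r & {}^{\bO}x_{\bC} & {}^{\bO}y_{\bC} & {}^{\bO}z_{\bC}\end{bmatrix}^\top$ with $\beta_l = \Delta Y^{-1}\alpha_l$, $\beta_r = \Delta Y^{-1}\alpha_r$, since this change of variables is a local diffeomorphism and does not alter identifiability.

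The key structural observation, and the source of the $^{\bO}z_{\bC}$ result, is that for \emph{any} ground robot the body angular velocity is instantaneously a pure yaw rotation, so $^{\bO}\bomega$ is aligned with the vertical axis. Then in $^{\bO}\bv = -\lfloor {}^{\bO}\bomega\rfloor\, {}^{\bO}\bp_{\bC} + {}^{\bO}_{\bC}\bR\,{}^{\bC}_{\bG}\bR\,{}^{\bG}\bv_{\bC}$ the lever-arm term $-\lfloor {}^{\bO}\bomega\rfloor\, {}^{\bO}\bp_{\bC}$ contributes only $\begin{bmatrix}\breve{\omega}\,{}^{\bO}y_{\bC} & -\breve{\omega}\,{}^{\bO}x_{\bC} & 0\end{bmatrix}^\top$, which does not involve $^{\bO}z_{\bC}$, while the locally planar motion assumption ($^{\bO}v_z = 0$) makes the third row of the velocity equation carry no information about $^{\bO}\bp_{\bC}$. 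Hence $^{\bO}z_{\bC}$ does not appear in any constraint, giving an identically zero column in $\mathcal{O}_c$ and establishing its unobservability unconditionally. With the scale factor removed by the IMU, the remaining constraints reduce to the same three equations as in Eq.~\ref{eq:constraints_stereo_5_ex}, namely $c_x = \breve{\omega}\,{}^{\bO}y_{\bC} + \breve{v}_x - \breve{\omega} Y_l - \beta_l\Delta Y o_l = 0$, $c_y = -\breve{\omega}\,{}^{\bO}x_{\bC} + \breve{v}_y + \breve{\omega} X_v = 0$, $c_\omega = \breve{\omega} + \beta_l o_l - \beta_r o_r = 0$, but now differentiated with respect to $\bar{\boldeta}_p$. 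Reading off the Jacobian $\mathbf{D}(t)$ one sees that its $Y_l$ column is the negative of its $^{\bO}y_{\bC}$ column and its $X_v$ column is the negative of its $^{\bO}x_{\bC}$ column, so $\mathcal{O}_c$ admits the null directions $\mathcal{O}_c(:,Y_l)+\mathcal{O}_c(:,{}^{\bO}y_{\bC})=\mathbf 0$ and $\mathcal{O}_c(:,X_v)+\mathcal{O}_c(:,{}^{\bO}x_{\bC})=\mathbf 0$; together with the zero $^{\bO}z_{\bC}$ column this gives a nullspace of dimension at least three, so $\bar{\boldeta}_p$, and hence $\boldeta_p$, is not identifiable.

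To make the ambiguity concrete, mirroring the closing argument of the first lemma, I would verify directly that whenever $\bar{\boldeta}_p$ satisfies the three constraint equations, so does $\begin{bmatrix} Y_l+\nu & \Delta Y & X_v+\mu & \beta_l & \beta_r & {}^{\bO}x_{\bC}+\mu & {}^{\bO}y_{\bC}+\nu & z\end{bmatrix}^\top$ for arbitrary $\mu,\nu\in\mathbb R$ and \emph{any} $z$: the shifts cancel inside the combinations $\breve{\omega}({}^{\bO}y_{\bC}-Y_l)$ and $\breve{\omega}(X_v-{}^{\bO}x_{\bC})$ (equivalently, in the original parameters one shifts $Y_l,Y_r$ together so $\Delta Y$ and $\alpha_l,\alpha_r$ are untouched), and $z$ is free because $^{\bO}z_{\bC}$ never enters. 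This proves all three asserted facts at once: $^{\bO}z_{\bC}$ is unidentifiable regardless of motion, and once the kinematic entry $X_v$ (equivalently $Y_l$) is estimated online rather than fixed, $^{\bO}x_{\bC}$ and $^{\bO}y_{\bC}$ inherit the same ambiguity. Finally, the claim for $\boldeta$ follows immediately: appending the extrinsic rotation to the two indistinguishable parameter vectors above yields two indistinguishable values of $\boldeta$; equivalently, treating the rotation as unknown only removes information, so $\boldeta$ cannot be identifiable once $\boldeta_p$ is not.

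I expect the main obstacle to be the careful bookkeeping in step (i): arguing rigorously that the extrinsic translation enters the constraint equations \emph{only} through $\breve{\omega}({}^{\bO}y_{\bC}-Y_l)$ and $\breve{\omega}(X_v-{}^{\bO}x_{\bC})$, and that $^{\bO}z_{\bC}$ drops out entirely. This rests squarely on the ``ground robot $\Rightarrow$ yaw-only angular velocity'' and ``locally planar motion'' assumptions, which must be stated explicitly so the degeneracy is not mistaken for an artifact of a particular trajectory. The remaining steps --- computing $\mathbf{D}(t)$, spotting the dependent columns, and checking the indistinguishable family --- are routine linear algebra that parallels the earlier lemmas.
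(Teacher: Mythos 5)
Your proposal is correct and follows essentially the same route as the paper: reparametrize to $\bar{\boldeta}_p$, stack the Jacobians of the constraints $c_x,c_y,c_\omega$ into $\mathcal{O}_c$, and exhibit the three null directions (the $Y_l$/${}^{\bO}y_{\bC}$ pair, the $X_v$/${}^{\bO}x_{\bC}$ pair, and the identically zero ${}^{\bO}z_{\bC}$ column), which match the paper's $\mathbf{k}_1,\mathbf{k}_2,\mathbf{k}_3$. Your additions --- the explicit yaw-only-angular-velocity justification for why ${}^{\bO}z_{\bC}$ drops out, the concrete indistinguishable family of parameter vectors, and the remark that unidentifiability of $\boldeta_p$ immediately implies that of $\boldeta$ --- are all sound and only make the argument more complete than the paper's.
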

\begin{lem}
	By using measurements from a monocular camera, IMU and wheel odometers, $\boldeta_{\theta}$ is identifiable in general motion, which allows an estimation algorithm to perform online calibration on extrinsic rotational parameters.
\end{lem}
\begin{proof}
	First of all, $\boldeta$ is locally identifiable if and only if $\bar{\boldeta}$ is locally identifiable:
	\begin{align}\setlength{\arraycolsep}{4.0pt}
		\bar{\boldeta} = \begin{bmatrix} Y_l& \Delta Y&  X_v& \beta_l& \beta_r & {}^{\bO} x_\bC& {}^{\bO} y_\bC& {}^{\bO} z_\bC& {}^{\bO}_{\bC}\delta\btheta^\top \end{bmatrix}^\top \notag
	\end{align}
	By substituting $\breve{{\bv}}(t) \triangleq {}^{\mathbf O} _{\mathbf C} \mathbf R \cdot  \breve{{\bv}}_C(t)$ in Eq.~\eqref{eq:camera_measurement1}, we are able to derive constraints similar to Eq.~\eqref{eq:constraints_stereo_5_ex}, as
	\begin{subequations}
		\label{eq:constraints_stereo_5_excalib}
		\begin{align}
			c_x(\bar{\boldeta}, t) &\!=\! \breve{\omega}(t) {}^\bO y_\bC \!+\!  {\be_1}^\top {}^\bO _\bC \bR \breve{\bv}_C (t) \!-\! \breve{\omega} Y_l \!-\! \beta_l \Delta Y o_l(t) \!=\! 0\\
			c_y(\bar{\boldeta}, t) &\!=\!  -\breve{\omega}(t) {}^\bO x_\bC +  {\be_2}^\top {}^\bO _\bC \bR \breve{\bv}_C (t) + \breve{\omega}(t) X_v \!=\! 0\\
			c_{\omega}(\bar{\bzeta}, t) &\!=\! \breve{\omega}(t) + \beta_l o_l(t) - \beta_r o_r(t) \!=\!0
		\end{align}
	\end{subequations}
	Considering the constraints in a set of time instants $\mathcal{S}  = \lbrace  t_0, t_1, \dots, t_s \rbrace$, we compute the following observability matrices for the systems with calibrating $\bar{\boldeta}_p$ and $\bar{\boldeta}_\theta$, given by Eq.~\eqref{eq:obser_matrix_5icr_excalib_p} and Eq.~\eqref{eq:obser_matrix_5icr_excalib_theta_1},  respectively.
	Eq.~\eqref{eq:obser_matrix_5icr_excalib_theta_1} can be converted to Eq.~\eqref{eq:obser_matrix_5icr_excalib_theta_2} by linear operations:
	\begin{align}
		\mathbf{M}(:,4) &\!\leftarrow\! \mathbf{M}(:,4) - \Delta Y/\beta_l \mathbf{M}(:,2)  \notag\\
		\mathbf{M}(:,2) &\!\leftarrow\! - \mathbf{M}(:,2)/\beta_l \notag    
	\end{align} 
	\newcounter{mytempeqncnt0}
	\begin{figure*}[!ht]
		\normalsize
		\setcounter{mytempeqncnt0}{\value{equation}}
		\begin{align}\setlength{\arraycolsep}{4.0pt}
			\label{eq:obser_matrix_5icr_excalib_p}
			\mathbf{M} \!=\! \begin{bmatrix}
				- \breve{\omega}(t_0)& -\beta_l o_l(t_0)& 0& -\Delta Y o_l(t_0)& 0& 0& \breve{\omega}(t_0)& 0\\
				0& 0& \breve{\omega}(t_0)& 0& 0& - \breve{\omega}(t_0)& 0& 0\\
				0& 0& 0& o_l(t_0)& -o_r(t_0)& 0& 0& 0\\
				\vdots& \vdots& \vdots& \vdots& \vdots& \vdots& \vdots& \vdots \\
				- \breve{\omega}(t_s)& -\beta_l o_l(t_s)& 0& -\Delta Y o_l(t_s)& 0& 0& \breve{\omega}(t_s)& 0 \\
				0& 0& \breve{\omega}(t_s)& 0& 0&  - \breve{\omega}(t_s)& 0& 0\\
				0& 0& 0& o_l(t_s)& -o_r(t_s)& 0& 0& 0
			\end{bmatrix}
		\end{align}    
		\setcounter{equation}{\value{mytempeqncnt0}+1}
	\end{figure*}
	\newcounter{mytempeqncnt1}
	\begin{figure*}[!ht]
		\normalsize
		\setcounter{mytempeqncnt1}{\value{equation}}
		\begin{subequations}
			\begin{align}\setlength{\arraycolsep}{4.0pt}
				\label{eq:obser_matrix_5icr_excalib_theta_1}
				\mathbf{M} \!=\! \begin{bmatrix}
					- \breve{\omega}(t_0)& -\beta_l o_l(t_0)& 0& -\Delta Y o_l(t_0)& 0\textsf{}& -{\be_1}^\top {}^\bO _\bC  \bR \lfloor \breve{\bv}_C (t_0) \rfloor\\
					0& 0& \breve{\omega}(t_0)& 0& 0& -{\be_2}^\top {}^\bO _\bC  \bR \lfloor \breve{\bv}_C (t_0) \rfloor\\
					0& 0& 0& o_l(t_0)& -o_r(t_0)& \mathbf{0}_{3 \times 1} \\
					\vdots& \vdots& \vdots& \vdots& \vdots& \vdots\\
					- \breve{\omega}(t_s)& -\beta_l o_l(t_s)& 0& -\Delta Y o_l(t_s)& 0& -{\be_1}^\top {}^\bO _\bC  \bR \lfloor \breve{\bv}_C (t_s) \rfloor\\
					0& 0& \breve{\omega}(t_s)& 0& 0& -{\be_2}^\top {}^\bO_\bC\bR \lfloor \breve{\bv}_C (t_s) \rfloor\\
					0& 0& 0& o_l(t_s)& -o_r(t_s)& \mathbf{0}_{3 \times 1}     
				\end{bmatrix}
			\end{align} \\
			\begin{align}\setlength{\arraycolsep}{4.0pt}
				\label{eq:obser_matrix_5icr_excalib_theta_2}
				\mathbf{M} \!=\! \begin{bmatrix}
					- \breve{\omega}(t_0)&  o_l(t_0)& 0& 0& 0\textsf{}& -{\be_1}^\top {}^\bO _\bC  \bR \lfloor \breve{\bv}_C (t_0) \rfloor\\
					0& 0& \breve{\omega}(t_0)& 0& 0& -{\be_2}^\top {}^\bO _\bC  \bR \lfloor \breve{\bv}_C (t_0) \rfloor\\
					0& 0& 0& o_l(t_0)& -o_r(t_0)& \mathbf{0}_{3 \times 1} \\
					\vdots& \vdots& \vdots& \vdots& \vdots& \vdots\\
					- \breve{\omega}(t_s)& - o_l(t_s)& 0& 0& 0& -{\be_1}^\top {}^\bO _\bC  \bR \lfloor \breve{\bv}_C (t_s) \rfloor\\
					0& 0& \breve{\omega}(t_s)& 0& 0& -{\be_2}^\top {}^\bO_\bC\bR \lfloor \breve{\bv}_C (t_s) \rfloor\\
					0& 0& 0& o_l(t_s)& -o_r(t_s)& \mathbf{0}_{3 \times 1}     
				\end{bmatrix}
			\end{align}
		\end{subequations}
		
		\setcounter{equation}{\value{mytempeqncnt1}+1}
	\end{figure*}

	Similar to previous proofs, to look into the properties of $\mathbf{M}$ in Eq.~\eqref{eq:obser_matrix_5icr_excalib_p}, we investigate non-zero vector $\mathbf k$ such that $\mathbf{M} \, \mathbf k = \mathbf 0$. We can easily find the following non-zero solutions: 
	\begin{align}
		\mathbf{k}_1 &= \begin{bmatrix} \rho& 0& 0& 0& 0& 0& \rho& 0    \end{bmatrix}^\top \notag\\ \mathbf{k}_2 &= \begin{bmatrix} 0& 0& \rho& 0& 0& \rho& 0& 0    \end{bmatrix}^\top \notag\\
		\mathbf{k}_3 &= \begin{bmatrix} 0& 0& 0& 0& 0& 0& 0& \rho \end{bmatrix}^\top \notag
	\end{align}
	where $\rho$ can be any non-zero value.
	We can find that $\mathbf{k_1},~\mathbf{k_2}$ are related with the kinematic parameters, while $\mathbf{k_3}$ always holds, which results from no constraints on ${}^{\bO}z_{\bC}$ for ground robots and it has no matter with the kinematic parameters. Through the found null spaces, we can draw the following conclusions: (\romannumeral1) $Y_l$ and $ {}^{\bO}y_{\bC}$ are indistinguishable; (\romannumeral2) $X_v$ and $ {}^{\bO}x_{\bC}$ are indistinguishable; (\romannumeral3) the vertical direction of extrinsic parameters ${}^{\bO}z_{\bC}$ is always unidentifiable for skid-steering robot moving on ground, no matter whether the kinematic parameters are calibrated online.

	However, $\mathbf{M}$ in Eq.~\eqref{eq:obser_matrix_5icr_excalib_theta_2} is under quite different properties. 
	Similarly, we investigate the non-zero $\mathbf{k}$ that satisfies  $\mathbf{M} \, \mathbf k = \mathbf 0$, which requires the all of the following to be true:
	\begin{align}
		& - \breve{\omega}(t)  k_1 + o_l(t) k_2 -{\be_1}^\top {}^\bO _\bC  \bR \lfloor \breve{\bv}_C (t_0) \rfloor \cdot   \begin{bmatrix} k_6 & k_7 & k_8 \end{bmatrix}^\top   \!=\! 0 \notag\\ & \breve{\omega}(t) k_3 -{\be_2}^\top {}^\bO _\bC  \bR \lfloor \breve{\bv}_C (t_0) \rfloor \cdot  \begin{bmatrix} k_6 & k_7 & k_8 \end{bmatrix}^\top \!=\! 0 \notag \\
		& o_r(t) k_4 - o_l(t) k_5 \!=\! 0 \notag
	\end{align}
	However, since $\breve{\omega}(t), \breve{\bv}_C (t), o_l(t), o_r(t)$ are time variant under general motion, we can not find such a non-zero vector $\mathbf{k}$.
	We can draw the conclusion: (\romannumeral4) the rotation between camera and odometer are identifiable under general motion.     
\end{proof}

Based on the derived observability properties, it is important to point out the following algorithm design issues: (\romannumeral1) Unlike online calibration algorithms in other literature~\cite{li2014onlineabc,qin2018online,schneider2019observability}, extrinsic parameters between camera and odometer are not observable and cannot be calibrated online. (\romannumeral2) The extrinsic rotation between sensors can be observable in general motion and thus can be included in the state vector of a state estimation algorithm for online calibration. Since extrinsic rotation usually does not undergo severe changes, and its calibration also suffers from degenerate motions, we do not recommend calibrating it online.

\section{Experimental Results}
\label{sec:exp}

In this section, we provide experimental results that support our claims in both algorithm design and theoretical analysis. 
Specifically, we conducted real-world experiments and simulation tests to demonstrate: (\romannumeral1) the advantages and necessities of online estimating kinematic parameters in visual (inertial) localization systems for skid-steering robots, (\romannumeral2) the observability and convergence properties of the skid-steering kinematic parameters under different settings, (\romannumeral3) the robustness of the proposed kinematics and pose estimation method against the changes of mass center, tire inflation condition, terrains, etc., to enable the long-term mission completeness of the robots without performance reduction.
%
%
%
%
In our experiments, we used the adapted skid-steering robots based on the commercially available Clearpath Jackal robot~\cite{Clearpath} (see Fig.~\ref{fig:robots_icr}), with both `kinematics and pose estimation' sensors and `ground-truth sensors' equipped.
For `kinematics and pose estimation' sensors, we used a $10$Hz monocular global shutter camera at a resolution of $640 \times 400$,  a $200$Hz Bosch BMI160 IMU, and $100$Hz wheel encoders~\footnote{We point out that, we used customized wheel encoder hardware instead of the on-board one on Clearpath robot, to allow accurate hardware synchronization between sensors.}.
The `ground truth' sensor mainly relies on RTK-GPS with centimeter-level precision. 
All sensors used in our experiment are synchronized by hardware and calibrated offline via~\cite{yiming2019}.
We note that the offline calibration procedure is an important prerequisite in our experiments since the extrinsic translation between the odometer and camera has shown to be constantly unobservable.
All the experiments are conducted on an Intel Core i7-8700 @ 3.20GHz CPU for comparisons. 
To fully examine the proposed method, we only evaluate the poses from the odometry system without loop closures and prior maps to standout the performance of our proposed kinematic and pose estimation method.
The sliding-window size of the estimator is $8$ in all the evaluations presented in this paper.
\subsection{Real-world Experiment}
\begin{figure} [htb]
		\vspace*{-1em} 
	\centering
	\includegraphics[width = \columnwidth]{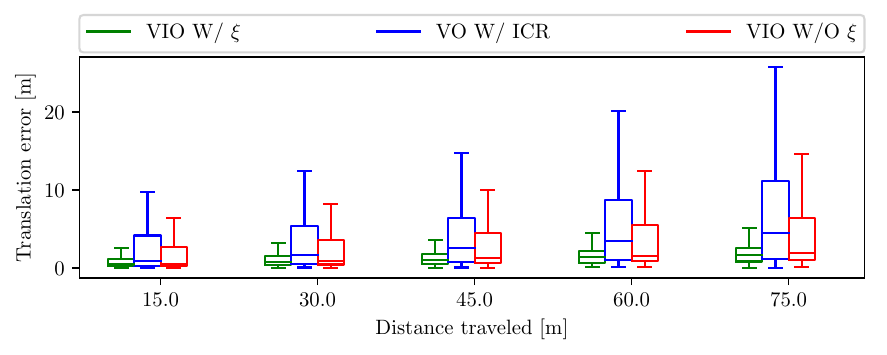}
	\captionsetup{justification=justified}
	\caption{Boxplot of the relative trajectory error (RPE) statistics over all the sequences where RTK-GPS measurements are available. This plot best seen in color.
	}
	\vspace*{-1em} 
	\label{fig:gps rpe} 
\end{figure}
In the first set of experiments, we focus on validating the effectiveness of the proposed skid-steering model as well as the localization algorithm. Specifically, we investigated the localization accuracy by estimating skid-steering kinematic parameters $\boldsymbol \xi$ (Eq.~\eqref{eq:ICRmat_5par}) online and compared that to the competing methods.
To demonstrate the generality of our method, we conducted experiments under various environmental conditions.
The environments involved in our robotic data collection include (a) lawn, (b) cement brick, (c) wooden bridge, (d) muddy road, (e) asphalt road, (f) ceramic tiles, (g) carpet, and (h) wooden floor. The representative figures in 8 types of terrains are also shown in the experiments of~\cite{zuo2019visual}.
%

We note that since GPS signal is not always available in all tests (e.g., indoor tests), we use both final drift and root-mean-squared error (RMSE) of absolute translational error (ATE)~\cite{zhang2018tutorial} as our metrics.
It is also important to point out that, in the research community, it is preferred to use publicly-available datasets to conduct experiments to facilitate comparison between different researchers. However, most localization datasets publicly available either utilize passenger cars (KITTI~\cite{Geiger2013IJRR}, Kaist Complex Urban~\cite{jeong2019complex}, Oxford Robotcar~\cite{RobotCarDatasetIJRR}) or lack of one or multiple synchronized low cost sensors (NCLT~\cite{carlevaris2016university}, and Canadian 3DMap~\cite{tong2013canadian}). To this end, we also plan to release a comprehensive dataset specifically with low-cost sensors, as our future work.
\begin{table*}[th]
	\renewcommand{\arraystretch}{1.5}
	\caption{ RMSE of ATE (m) on the Sequences with RTK-GPS Measurements. }
	\label{tb:gpsseq_ate}
	\vspace{-1em}
	\begin{center}
		\resizebox{0.8\textwidth}{!}
		{  \begin{tabular}{C{3.5cm}C{1.0cm}C{1.0cm}C{1.0cm}C{1.0cm}C{1.0cm}C{1.0cm}C{1.0cm}C{1.0cm}C{1.0cm}C{1.0cm}C{1.0cm}} \toprule
				& SEQ21-CP01 & SEQ22-CP01 & SEQ23-CP01 & SEQ3-CP01 & SEQ4-CP01 & SEQ5-CP01 & SEQ8-CP01 & SEQ12-CP01 & SEQ14-CP01 & SEQ17-CP01& \textbf{Mean} \\ \hline
				{{\textbf{Length (m)}}}& 629.16& 633.53& 651.94& 632.64& 629.96& 626.83& 204.81& 436.19& 372.15& 110.55\\
				{{\textbf{Terrain}}}& (b,f)& (b,f)& (b)& (b,f)& (b,f)& (b,f)& (e)& (e)& (b)& (b)\\ \hline
				{\textbf{VIO W/ $\boldsymbol \xi$ }} &     \textbf{1.36} & \textbf{2.35} & 1.50 & \textbf{2.81} & \textbf{1.17} & \textbf{2.70} & \textbf{0.26} & \textbf{0.80} & 1.65 &  0.32& \textbf{1.492}\\
				{\textbf{VO W/ $\mathrm{\mathbf{ICR}}$  }} &    \clr{ 43.12} & \clr{38.49} & \textbf{0.61} & \clr{48.04} & \clr{39.62} & \clr{40.60} & 0.44 & 3.12 &\textbf{0.74} &  \textbf{0.20}& \clr{21.498}\\
				{\textbf{VIO W/O $\boldsymbol \xi$}~\cite{zhang2021pose}} &     5.88 & 6.02 & 1.62 & 9.82 & 10.43 & 8.61 & 0.50 & 1.86 & 4.97 &  0.45& 5.016\\
				{\textbf{VINS-on-Wheels }~\cite{wu2017vins}} & 6.52  & 8.37 & 3.81 & 8.97 & \clr{13.39} & 8.93 & 0.52 & 1.78 & 5.69    &  0.56 & 5.854   \\
				{\textbf{VINS-Mono }~\cite{qin2018vins}}  & 10.36 & \clr{14.84} & 3.19 & \clr{12.53} & \clr{18.29} & 11.37 & 0.62 & 3.04 & 5.04 &  0.52 & 7.980  \\
				{\textbf{Wheel Odo. W/ $\boldsymbol \xi$}}  & \clr{36.38}  & \clr{48.26} & 4.07 & \clr{63.33} & \clr{43.91} & \clr{54.94} & 1.85 & 4.62 & 7.71 &  0.85 & \clr{26.592}   \\
				{\textbf{Inertial Aided Wheel Odo.}}   & \clr{39.35}  & \clr{45.38} & 4.29 & \clr{58.21} & \clr{40.05} & \clr{48.92} & 1.63 & 3.63 & 5.93 &  0.69 & \clr{24.808}   \\
			 \hline
			\end{tabular}
		}
	\end{center}
	\vspace{-2em}
\end{table*}
\begin{figure*}[t]
	\centering 
	\subfigure{ 
		\includegraphics[width=0.63\columnwidth]{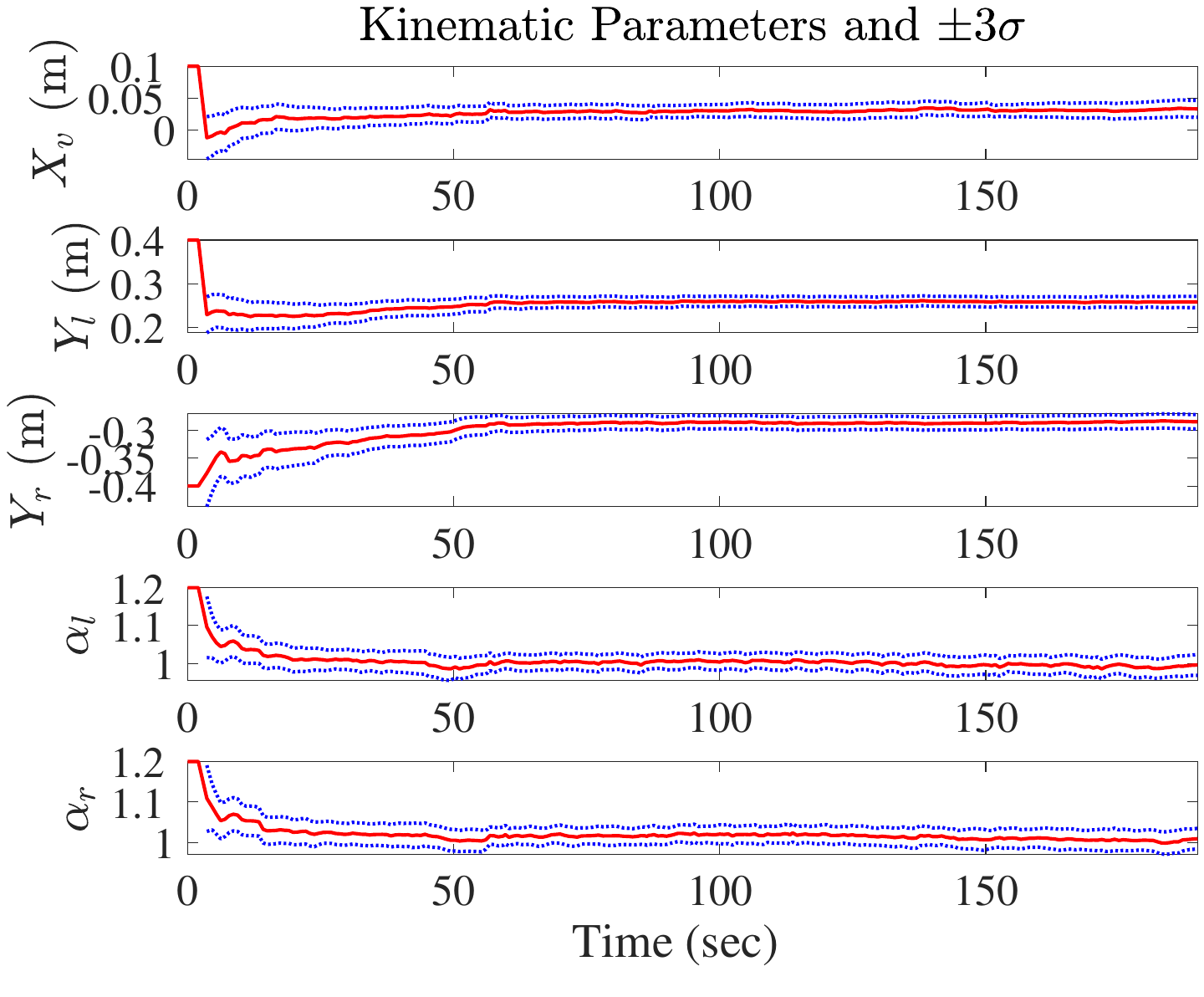} 
	} 
	\subfigure{ 
		\includegraphics[width=0.63\columnwidth]{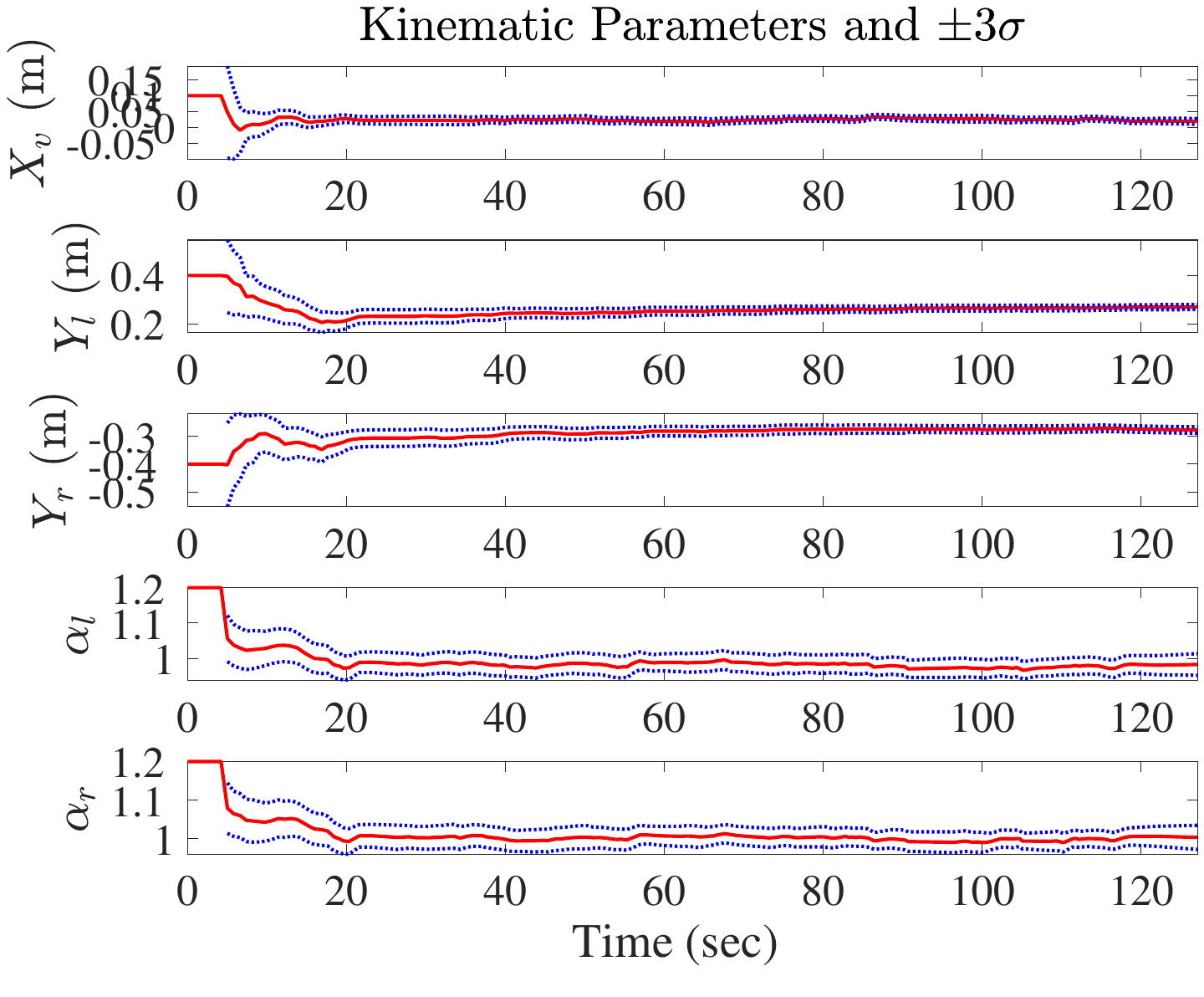} 
	}
	\subfigure{ 
		\includegraphics[width=0.66\columnwidth]{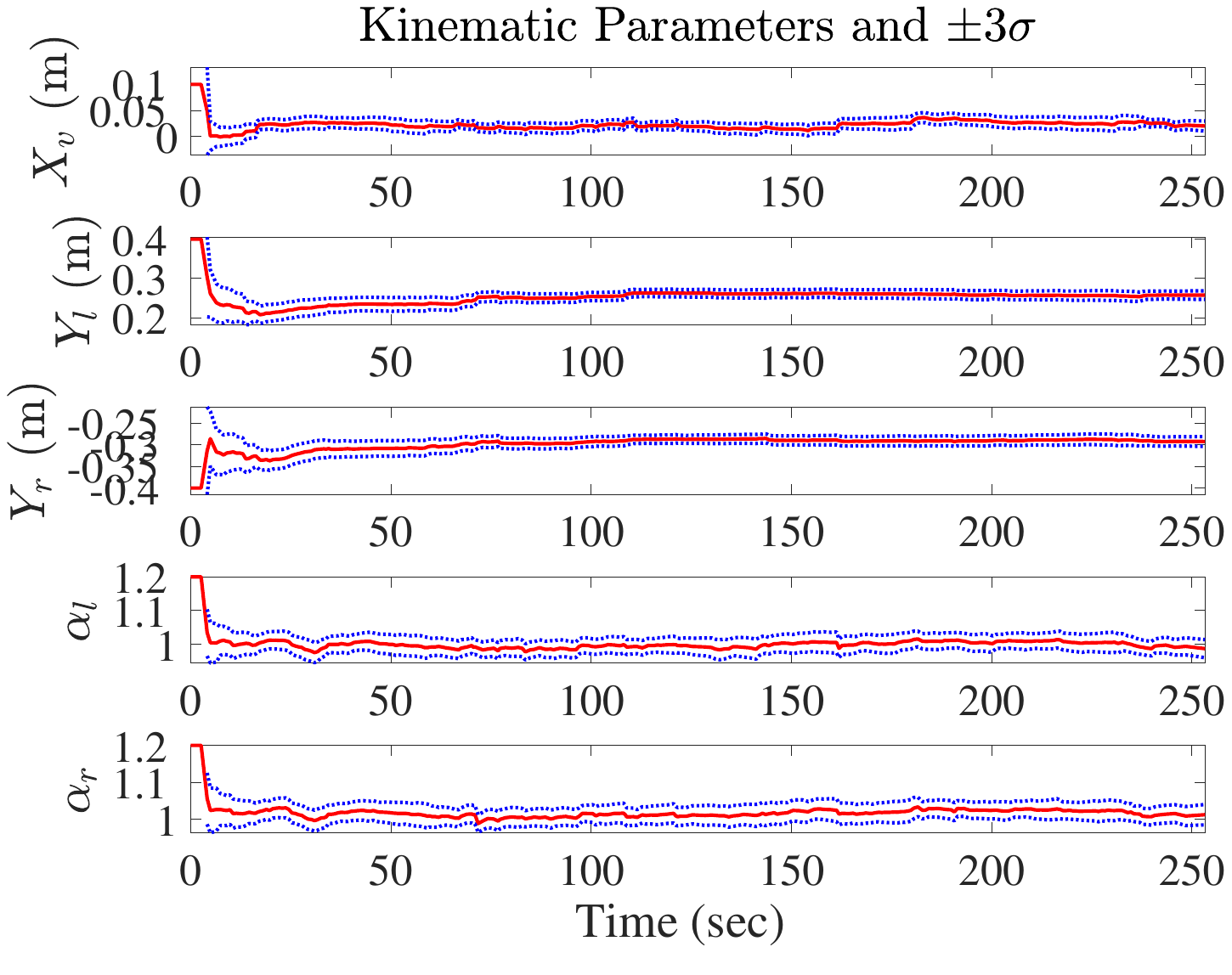} 
	}
	\captionsetup{justification=justified}    
		\vspace{-1em}      
	\caption{While given bad initial values, the kinematic parameters $\boldsymbol{\xi}$ are able to converge to the reasonable values in the visual-inertial navigation system. The online estimated kinematic parameters (red) and the associated $\pm 3 \sigma$ envelopes (blue) are shown on sequences `` SEQ8-CP01, SEQ18-CP01, SEQ19-CP01" from left to right.
	}
	~\label{fig:icr_converge_vio_xi}
	\vspace{-1em} 
\end{figure*}
\begin{figure*}[t]
	\centering 
	\subfigure{ 
		\includegraphics[width=0.63\columnwidth]{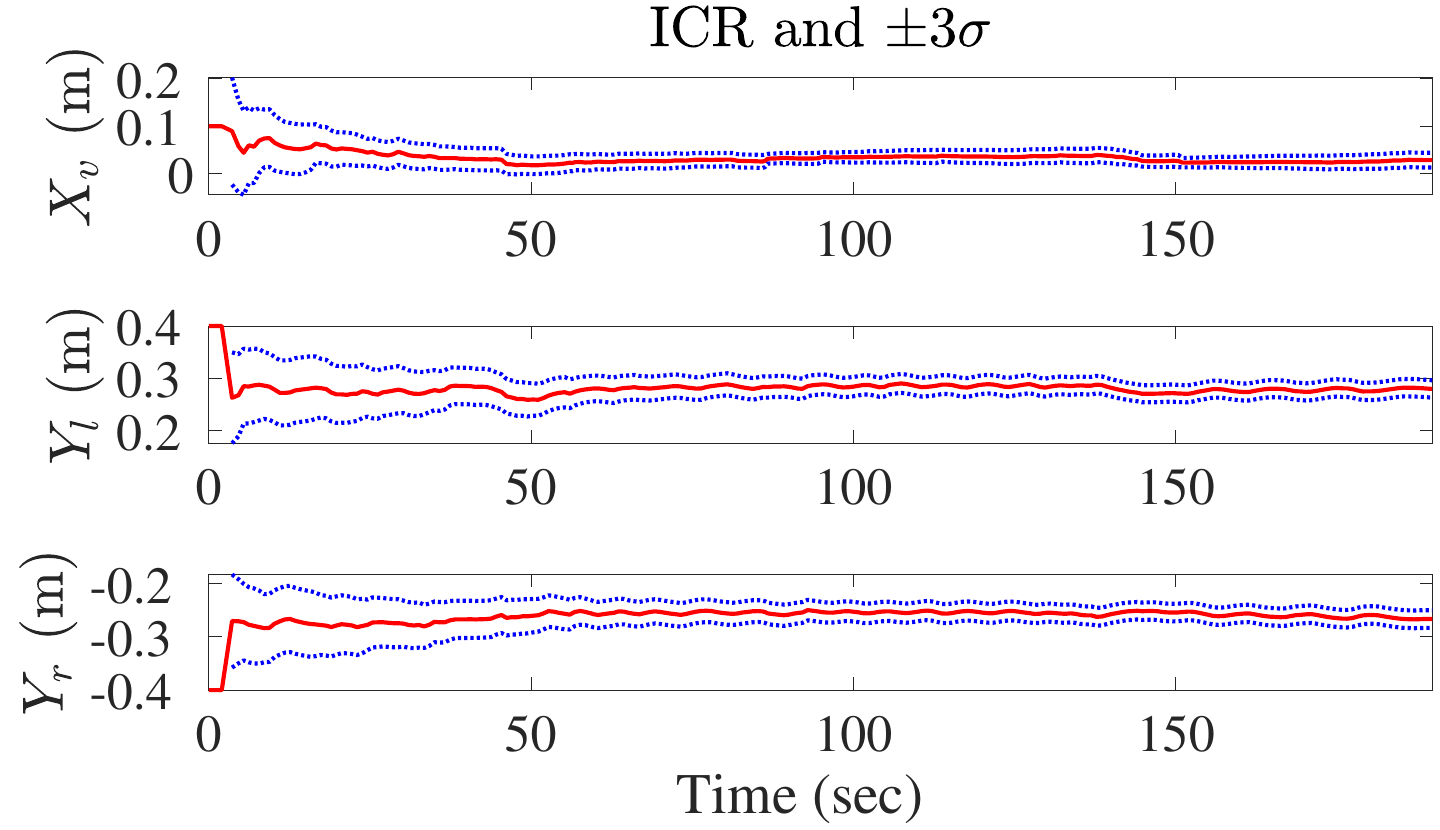} 
	} 
	\subfigure{ 
		\includegraphics[width=0.63\columnwidth]{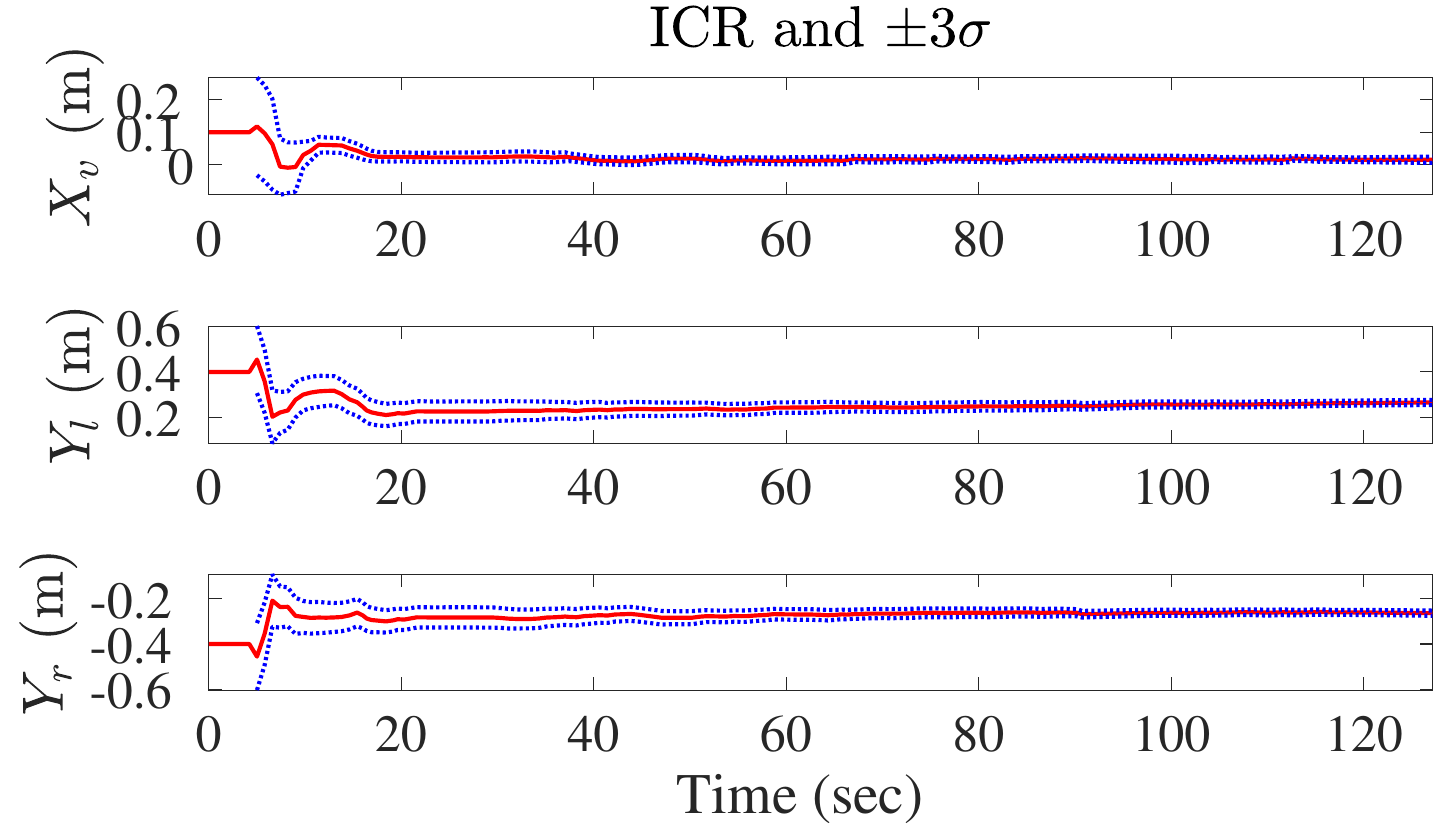} 
	}
	\subfigure{ 
		\includegraphics[width=0.63\columnwidth]{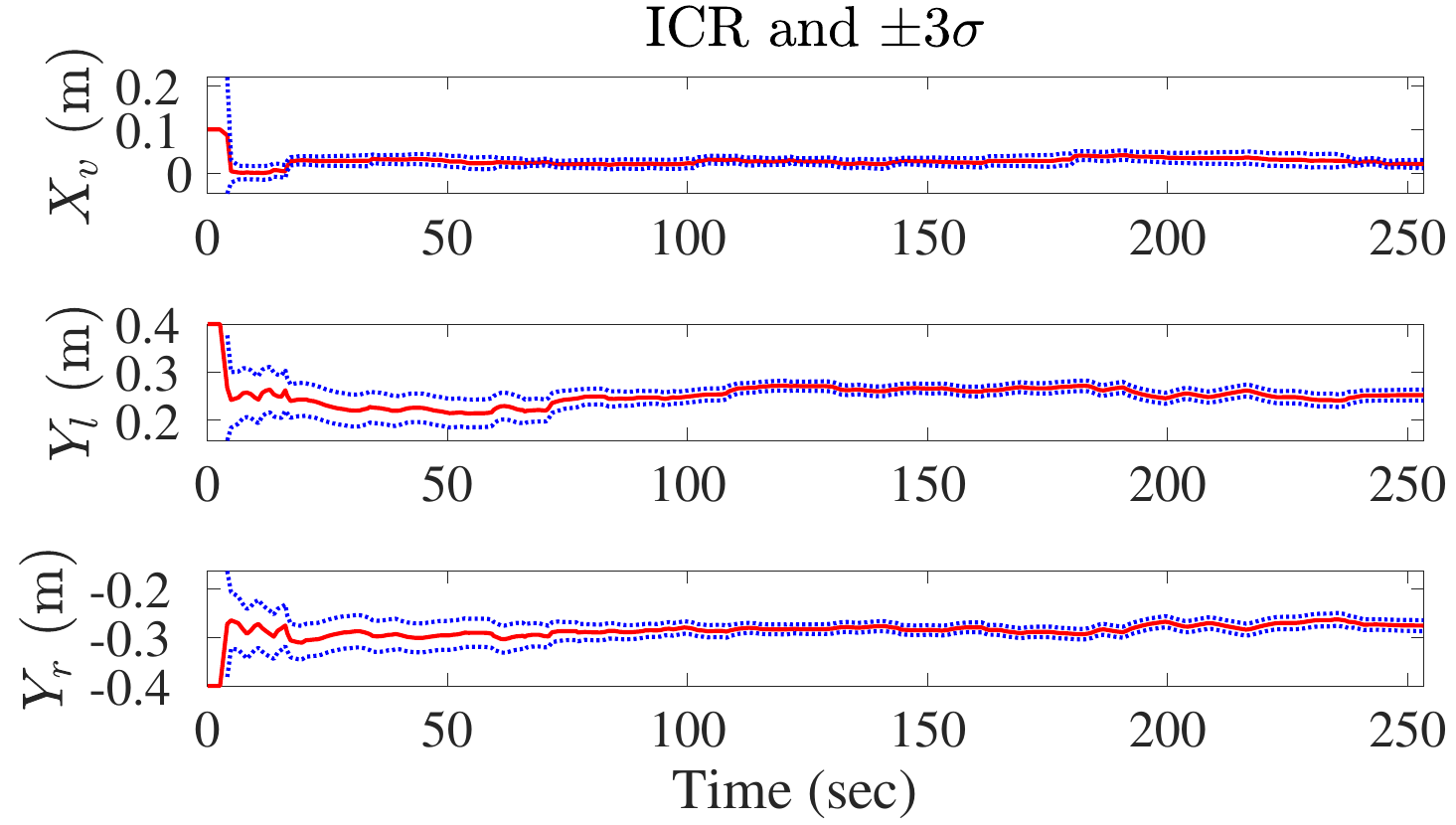} 
	}
	\captionsetup{justification=justified} 
		\vspace{-1em}
	\caption{While given bad initial values, the ICR parameters $\boldsymbol{\xi}_{ICR}$ are able to converge to the reasonable values in the kinematics-constrained VO system. The online estimated $\boldsymbol{\xi}_{ICR}$ parameters (red) and the associated $\pm 3 \sigma$ envelopes (blue) are shown on sequences `` SEQ8-CP01, SEQ18-CP01, SEQ19-CP01" from left to right.}
	~\label{fig:icr_converge_vo_icr}
	\vspace{-2em}
\end{figure*}

\subsubsection{Pose Estimation Accuracy}\label{sec:pose accuracy}
We first conducted an experiment to show the benefits gained by modeling and estimating skid-steering parameters online. In this experiment, three setups are compared, i.e., two provably observable methods and one baseline method. Specifically, those methods are 1) \textbf{VIO (visual-inertial odometry) W/ $\boldsymbol \xi$ :}using measurements from a monocular camera, an IMU, and odometer via the proposed estimator by estimating the full 5 skid-steering kinematic parameters $\boldsymbol \xi$ online; 2) \textbf{VO (visual odometry) W/ $\mathrm{\mathbf{ICR}}$ :} using monocular camera and odometer measurements (without an IMU), and performing localization by estimating the 3 ICR parameters $\boldsymbol{\xi}_{ICR}$ online; 3) \textbf{VIO W/O $\boldsymbol \xi$ :} using measurements from monocular camera, an IMU and odometer, and utilizing differential drive kinematics in Eq.~\eqref{eq:ordinarymodel} for localization~\cite{zhang2021pose} without explicitly modeling $\boldsymbol \xi$. 
%
Notably, the configuration \textbf{VIO W/O  $\boldsymbol \xi$} is exactly our baseline method~\cite{zhang2021pose} for ground robots. In~\cite{zhang2021pose}, similar to the optimized cost function in Eq.~\eqref{eq:cost}, the constraints from prior term $\mathcal{C}_{P}$, camera term $\mathcal{C}_{V}$, IMU term  $\mathcal{C}_{I}$, and motion manifold term $\mathcal{C}_{M}$ are taken into account, while the odometer constraint $\mathcal{C}_{O}$ is simply induced from ideal differential drive model (see Eq.~\eqref{eq:ordinarymodel}) instead of leveraging the ICR-based kinematics constraint (see Section.~\ref{sec:icr_pred}) proposed in this work.

We show the final drift errors on 23 representative sequences in the Table~\ref{tb:drift} of the Appendix section, which cover all eight types of terrains (a)-(g).
Notably,  some sequences also cover multiple types of terrains. 
In the sequences where GPS signals were available across the entire data sequence, we also evaluated the root mean square errors (RMSE)~\cite{Bar-Shalom1988} of absolute translational error (ATE)~\cite{zhang2018tutorial}. To compute that, we interpolated the estimated poses to get the ones corresponding to the timestamp of the GPS measurements.
In addition to the aforementioned configurations, \textbf{VIO W/ $\boldsymbol \xi$},  \textbf{VO W/ $\mathrm{\mathbf{ICR}}$}, and \textbf{VIO W/O $\boldsymbol \xi$}, we also compare to the state-of-the-art wheel odometer aided VIO methtod \textbf{VINS-on-Wheels}~\cite{wu2017vins}, the state-of-the-art VIO method \textbf{VINS-Mono}~\cite{qin2018vins}, and also the vision-free methods, including \textbf{Wheel Odo. W/ $\boldsymbol \xi$} and \textbf{Inertial Aided Wheel Odo.}.
In the method, \textbf{Wheel Odo. W/ $\boldsymbol \xi$}, only the wheel odometer measurements are used by propagating poses forward based on the ICR kinematics with the full 5 kinematic parameters $\boldsymbol \xi$ (see Sec.~\ref{sec:icr_pred}). The kinematic parameters are kept fixed at the given initial values, and will not be updated due to the lack of constraints from the other sensor modalities. The fixed values can not reflect the real-time robot kinematic status, and can lead to significant errors in pose estimation.
As for the method, \textbf{ Inertial Aided Wheel Odo.}, both the measurements from IMU and wheel odometers are fused. Besides the forward propagation based on the full ICR-based kinetic model with wheel odometers measurements, IMU measurements are also propagated forward to formulate the relative pose constraints between two virtual keyframes. Here we selected virtual keyframes once the wheel odometer pose prediction has a translation over 0.2 meters or rotation over 3 degrees. The IMU velocity and biases, as well as the kinematic parameters, are optimized in this method.

The RMSE errors of compared methods are shown in Table.~\ref{tb:gpsseq_ate}, where we highlight the best results in bold, while the bad results (RMSE of ATE is over 12m) by underlines.
The results clearly demonstrate that when skid-steering kinematic parameter $\boldsymbol \xi$ is estimated online, the localization accuracy can be significantly improved. 
This validates our claim that, in order to use odometer measurements of skid-steering robots, the complicated mechanism must be explicitly modelled to avoid accuracy loss. 
We also note that, the method of using an IMU and estimating the full 5 kinematic parameters, \textbf{VIO W/ $\boldsymbol \xi$},  performs best among those methods, by modeling the time-varying scale factors. 
In fact, the method of estimating only 3 ICR parameters with visual and odometer sensors works well for a portion of the dataset while fails in others (e.g., the datasets under (b,f) categories). This is due to the fact that those datasets involve terrain conditions changes, and the scale factor also changes. If those factors are not model, the performance will drop. Moreover, we note that, under those conditions (e.g., (b,f)), the best performing method still works not as good as the performance in other data sequences. This is due to the fact that we used `random walk' process to model the `environmental condition' changes, which is not the `best' assumption when there are rapid road surface changes.   
We will also leave the terrain detection as future work. 
The pure VIO method without wheel odometers, \textbf{VINS-Mono}~\cite{qin2018vins} has a poor performance because of the degeneration motion of ground robots. The state-of-the-art odometer aided VIO method, \textbf{VINS-on-Wheels}~\cite{wu2017vins}, shows relatively good pose estimation results, while it is inferior to the proposed method, due to the planar ground assumption and the ideal differential drive model assumption, which does not address the slippage issue. The vision-free methods, \textbf{Wheel Odo. W/ $\boldsymbol \xi$} and \textbf{Inertial Aided Wheel Odo.}, shows large pose estimation error on most of the sequences, which stands out the substantial help from visual measurements.
Representative Trajectory estimates on representative sequences are also shown in the Appendix.
In order to provide insight into how the error of each algorithm grows with the trajectory length, we also calculate the calculated relative pose
error (RPE) averaged over all the sequences when GPS measurements are available. The RPE results are shown in Table.~\ref{tb:gpsseq_rpe} and Fig.~\ref{fig:gps rpe}, which also support our algorithm claims. 

To further examine the advantages of online estimating the full kinematic parameters $\boldsymbol{\xi}$ in the kinematics-constrained VIO systems, we conduct ablation study and compare the three variants of the proposed method:  1) estimating 3 parameters 
$\boldsymbol{\xi}_{ICR}$ only;  2) estimating the full $\boldsymbol{\xi}$ with 5 parameters; 3) used fixed  $\boldsymbol{\xi}$ with a relatively good initial guess. 
Besides, in the ablation study, the skid-steering robot is tested under the following practically commonly-seen mechanism configurations during its life-long service: (\romannumeral1) normal; (\romannumeral2) carrying a package with the weight around 3 kg; (\romannumeral3) under low tire pressure; (\romannumeral4) carrying a 3-kg package and with low tire pressure. The experimental results can be found in the Appendix, which demonstrates the advantage of estimating the full  kinematic parameters $\boldsymbol{\xi}$ and the effectiveness of the proposed methods under the aforementioned mechanism configurations.

\begin{table}[ht]
	\centering
	\caption{
		Mean of RPE (m) for Different Segment Length on the Sequences with RTK-GPS Measurements.  
	}
	\resizebox{0.9\columnwidth}{!}
	{
		\begin{tabular}{C{2.0cm}C{1.5cm}C{1.5cm}C{1.5cm}} \toprule
			\textbf{Segment Length (m)} &  {\textbf{VIO W/ $\boldsymbol \xi$ }} & {\textbf{VO W/ $\mathrm{\mathbf{ICR}}$  }} & {\textbf{VIO W/O $\boldsymbol \xi$}} \\ \hline
			15.00 &      0.78& 10.27&  2.37 \\
			30.00 &     1.11& 11.33&   2.82 \\
			45.00 &     1.42& 13.14&   3.40 \\
			60.00 &     1.75& 15.61&   4.07 \\
			75.00 &     2.00& 17.76&   4.66 \\ \hline
		\end{tabular}
	}
	\label{tb:gpsseq_rpe}
	\vspace{-1em}
\end{table}

\subsubsection{Convergence of Kinematic Parameters}
In this section, we show experimental results to demonstrate the convergence properties of $\boldsymbol{\xi}$ and $\boldsymbol{\xi}_{ICR}$, in systems that we theoretically claim observable. 
Unlike the experiments in the previous section, which utilized the method described in Sec.~\ref{sec:init kinematic} for kinematic parameter initialization, we manually added extra errors to the kinematic parameter for the tests in this section, to better demonstrate the observability properties.
Specifically, for the kinematics-constrained VIO system, we added the following extra error terms to initial kinematic parameters
\begin{align}
\delta X_v \!=\! 0.08, \delta Y_l \!=\! 0.14, \delta Y_r \!=\! -0.1, \delta  \alpha_l \!=\!0.2, \delta  \alpha_r \!=\! 0.2  \notag
\end{align}
For the kinematics-constrained VO system, we only add error terms to $\boldsymbol \xi_{ICR}$.
%
To show details in parameter convergence properties,
We carried out experiments on representative indoor and outdoor sequences, ``SEQ8-CP01, SEQ18-CP01, SEQ19-CP01".
%
In Fig.~\ref{fig:icr_converge_vio_xi}, the estimates of the full kinematic parameters $\boldsymbol \xi$ in VIO are shown, along with the corresponding $\pm 3\sigma$ uncertainty envelopes ( where $\sigma$ is the square root of the corresponding diagonal component of the estimated covariance matrix).
The convergence of $\boldsymbol \xi_{ICR}$ in VO are also shown in Fig.~\ref{fig:icr_converge_vo_icr}.
The results demonstrate that the kinematic parameters $\boldsymbol \xi$ in the VIO quickly converge to stable values, and remains slow change rates for the rest of the trajectory. 
Similar behaviours can also be observed for $\boldsymbol \xi_{ICR}$ when only a monocular camera and odometer sensors are used. 
The results exactly meet our theoretical expectations that $\boldsymbol \xi$ in VIO  and  $\boldsymbol{\xi}_{ICR}$ in VO are both locally identifiable under general motion. 
We also note that, since it is not feasible for obtaining high-precision ground truth for $\boldsymbol \xi$, the correctness of those values cannot be `directly' verified. Instead, they can be evaluated either based on the overall estimation results shown in the previous section or simulation results in Sec.~\ref{sec:sim} where ground truth $\boldsymbol \xi$ is known.

\subsubsection{Time efficiency}
The proposed ICR-kinematic based method is computationally effective to run in real time, thus is applicable in various robotic applications. In this section, we showcase the running time of the proposed method on a typical sequence ``SEQ21-CP01" with a traversed distance of 629.16 meters, and there are about 2400 cycles of sliding-window optimization occurring in our estimators.
As noted previously, the run time is also evaluated on a desktop with Intel Core i7-8700 @ 3.20GHz CPU. The implementation of our method is in C++ and adequately optimized for high efficiency. 
We disclose the average running time and its standard deviation of four main processing stages: feature detection, feature tracking, BA optimization, and marginalization. The ICR-based kinematic constraints are formulated and leveraged in BA optimization, as well as the other constraints.  The results are shown in Table~\ref{table:time}, including the three configurations  \textbf{VIO W/ $\boldsymbol \xi$},  \textbf{VO W/ $\mathrm{\mathbf{ICR}}$}, and \textbf{VIO W/O $\boldsymbol \xi$}. Comparing the runtime of \textbf{VIO W/ $\boldsymbol \xi$ } to the one of {\textbf{VIO W/O $\boldsymbol \xi$}}, it is obvious to find that introducing the  ICR-based kinematic constraints into VIO has a negligible burden on the computation. Considering the significant improvement in the pose estimation accuracy by incorporating kinematic constraints (see experiments in Sec.~\ref{sec:pose accuracy}), it is quite worthwhile to estimate the kinematic parameters and poses jointly by the proposed method.

\begin{table}[t] 
	\centering  
		\caption{Mean and (Standard Deviation) of the Running Time for the Main Processing Stages.}  
		\label{table:time}
		\setlength{\tabcolsep}{4pt}
		\resizebox{1.0\columnwidth}{!}{
		\begin{tabular}{c c c c 
			}  
			\toprule
			\multicolumn{1}{c}{\multirow{2}*{\bf Stages} }
			& \multicolumn{3}{c}{Running Time (ms)} \\
			\cline{2-4} 
			\multicolumn{1}{c}{}
			& \multicolumn{1}{c} {\textbf{VO W/ $\mathrm{\mathbf{ICR}}$  }}
			& \multicolumn{1}{c} {\textbf{VIO W/ $\boldsymbol \xi$ }}
			& \multicolumn{1}{c} {\textbf{VIO W/O $\boldsymbol \xi$}} \\ 
			
			\midrule
			{  { Feature Detection} } 
			&2.09(0.67) & 2.06(0.66) & 2.09(0.67) \\

			\midrule
			{  { Feature Tracking } } 
			&4.35(1.04) & 4.29(1.00) & 4.40(1.06)  \\

			\midrule
			{  { BA Optimization } } 
			& 4.09(1.33) & 4.77(2.11) & 4.35(1.54) \\
			
			\midrule
			{  { Marginalization } } 
			& 2.22(0.83) & 2.71(0.90) & 2.63(0.91) \\
			\bottomrule  
		\end{tabular} }
\end{table} 

\subsection{Simulation Experiments}
\label{sec:sim}
We also perform Monte-Carlo simulations to investigate our proposed method specifically for parameter calibration precision, since this cannot be verified in real-world tests.
The synthetic trajectory is generated by simulating a real-world trajectory with a length of $205.4 m$, using the method introduced in~\cite{li2014vision}. 
To generate noisy sensory measurements, we have used zero-mean Gaussian vector for all sensors with the following standard deviation (std) values. 
Pixel std for visual measurements is $0.6$ pixels, odometer stds for the left and right wheels are both $0.0245$ m/s, gyroscope and accelerometer measurement stds are $9\cdot10^{-4}$ rad/s and $1\cdot10^{-2}$ m/$s^2$, and finally the stds representing the random walk behavior of gyroscope and accelerometer biases are $1\cdot10^{-2}$ rad/$s^2$ and $1\cdot10^{-2}$ m/$s^3$ respectively. 
Additionally, since skid-steering kinematic parameters can not be known in advance, we initialize $\boldsymbol \xi$ in our simulation tests by adding an error vector to the ground truth values. The noise vector is sampled from zero-mean Gaussian distribution with std $8\cdot10^{-2}$ for all elements in $\boldsymbol \xi$.

To collect algorithm statistics, we conducted 15 Monte-Carlo tests and compute parameter estimation results for $\boldsymbol \xi$. Specifically, we computed the mean and std of calibration errors for all elements in $\boldsymbol \xi$, averaged from the Monte-Carlo tests. The estimation error for each element in $\boldsymbol \xi$ are:
$-0.0211 \pm 0.0095,~ 0.0102 \pm 0.0030,~ -0.0081 \pm  0.0026,~    0.0212 \pm  0.0109,~    0.0216 \pm 0.0108$. 
Those results indicate that, the skid-steering parameters can be accurately calibrated by significantly reducing uncertainty values.
It is also interesting to look into a representative run, in which the initial estimate of $\boldsymbol{\xi}$ is subject to the following error vector $ \delta\boldsymbol{\xi} = \begin{bmatrix} 0.15& 0.15& -0.15& 0.1& 0.1 \end{bmatrix} ^\top$. In this case, the calibration errors averaged over the second half of the trajectory are: $ -0.0276 \pm 0.0067,~   0.0199 \pm 0.0118,~     0.0054 \pm  0.0026,~     0.0192 \pm 0.0157,~    0.0189 \pm 0.0157 $.
%

Since simulation tests provide absolute ground truth, it is also interesting to investigate the accuracy gain by estimating $\boldsymbol \xi$ online. Fig.~\ref{fig:simu traj} demonstrates the estimated trajectory when $\boldsymbol \xi$ is estimated online, or $\boldsymbol \xi$ is fixed during estimation as well as the ground truth. This clearly demonstrates that, by the online estimation process, the localization accuracy can be significantly improved. The averaged RMSE of rotation and translation for those two competing methods in this Monte-Carls tests are
$ 0.042 \pm 0.023  rad,~ 2.051 \pm 0.830 m$ and $0.154 \pm 0.0635 rad,~ 4.617 \pm 2.563 m $, respectively.

\begin{figure}[th]
	\centering
	\includegraphics[width = 0.5\columnwidth]{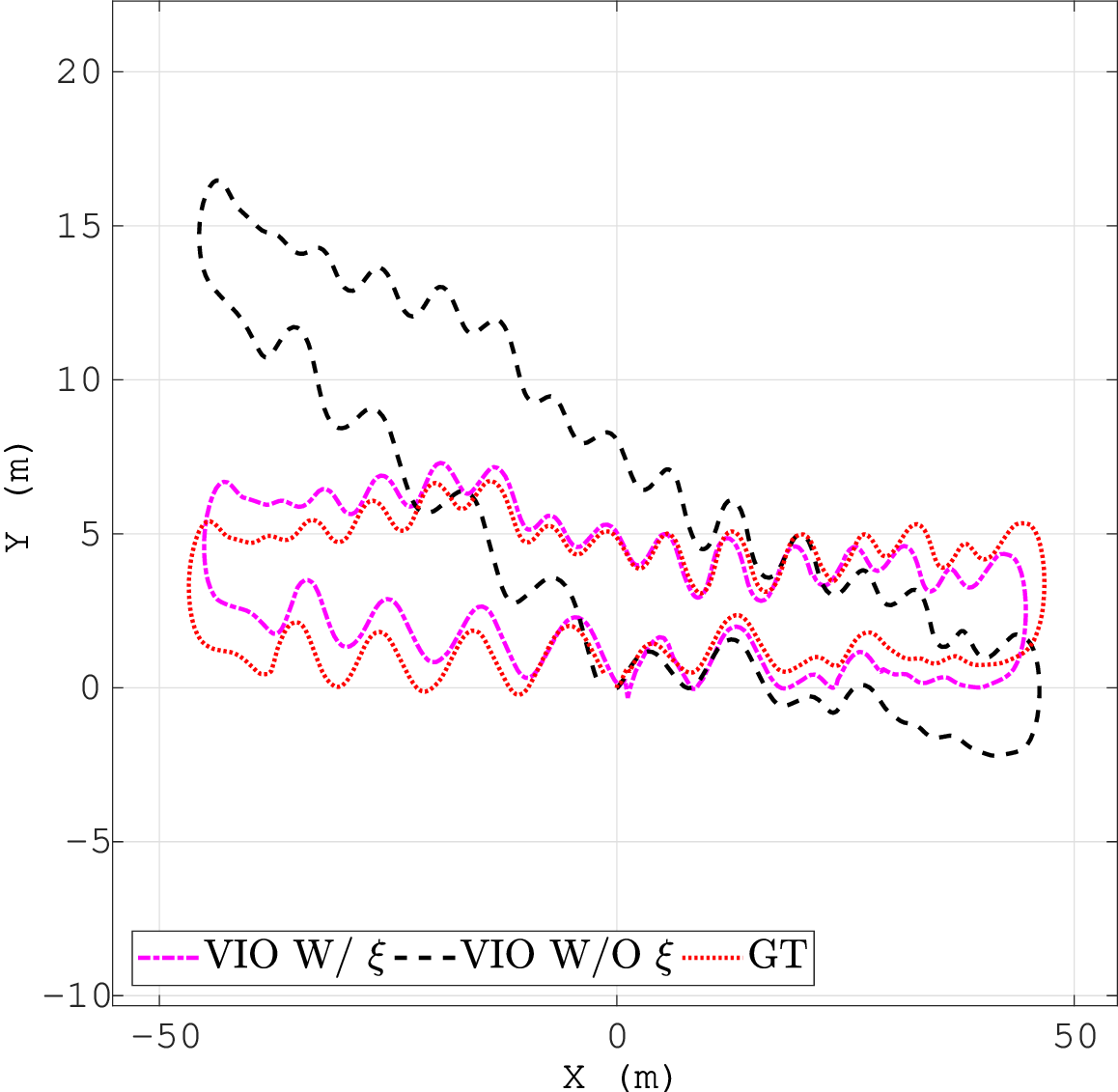}
	\captionsetup{justification=justified}
	\caption{ In simulation experiments, estimated Trajectories aligned with the ground truth trajectory.
	} 
	\label{fig:simu traj} 
	\vspace{-2em}
\end{figure}
\section{Conclusions and Future Work}\label{seq:conclusion}
In this paper, we propose a novel kinematics and pose estimation method specialized for skid-steering robots, where multi-modal measurements are fused in a tightly-coupled sliding-window BA.  
In particular, in order to compensate for the complicated track-to-terrain interactions, the imperfectness of mechanical design, mass center changes, tire inflation changes, and terrain conditions, we explicitly model the kinematics of skid-steering robots by using both track ICRs and correction factors, which are online estimated to prevent performance reduction in the long-term mission of skid-steering robots.
To guide the estimator design, we conduct detailed observability analysis for the proposed algorithm under different setup conditions. Specifically, we show that the kinematic parameter vector $\boldsymbol \xi$ is observable under general motion when measurements from an IMU are added and odometer-to-camera extrinsic parameters are calibrated offline. In other situations, degenerate cases might be entered and reduced precision might be incurred. Extensive real-world experiments including ablation study and simulation tests are also provided, which demonstrate that the proposed method is able to compute skid-steering kinematic parameters online and yield accurate pose estimation results.

There are also limitations to the proposed method. Notably, ICR-based kinematic model is only valid when a vehicle is operated in low dynamics. Although low dynamics are common for skid-steering robots, the feasibility at high speed is also essential. Moreover, abrupt changes of the traversed terrain or robot mechanism status will cause significant changes in the kinematic parameters. In that case, estimators will have some latency to converge. In the future, it is worthwhile investigating a more general kinematic model for mobile robots through data-driven deep-learning methods. A general kinematic model is supposed to function well regardless of the moving speed, interacted terrains, and robot mechanism status. It is also interesting to actively detect the abrupt changes in terrains and robot status. State estimation can converge quickly by enlarging the uncertainty of kinematic parameters when abrupt changes are detected.
{\tiny }

%

{
	\bibliographystyle{IEEEtran} 
	\bibliography{main}
}
\appendix
\subsection{Notation}
In order to ease the reading and understanding, we summarize the notations used in the main paper and the this appendix, which is shown in Table~\ref{tb:notation}.
\begin{table*}[htp]
	\centering
	\caption{Notation Glossary}
	\resizebox{\textwidth}{!}{
		\begin{tabular}{cc|cc|cc} \hline
			Symbol & Meaning & Symbol & Meaning  & Symbol & Meaning  \\ \hline    
			$\lbrace \mathbf{G} \rbrace$ & Global frame &  $\lbrace \mathbf{C} \rbrace$ & Camera frame & $\lbrace \mathbf{I} \rbrace$ & IMU frame \\ 
			$\lbrace \mathbf{O}\rbrace$ & Odometer frame & $^{\mathbf A}{\mathbf p}_{\mathbf B}$  & Position of $\lbrace\mathbf{B}\rbrace$ in $\lbrace\mathbf{A}\rbrace$ & $^{\mathbf A}_{\mathbf B} \mathbf R$ & Rotation from $\lbrace\mathbf{B}\rbrace$ to $\lbrace\mathbf{A}\rbrace$ \\
			$^{\mathbf A}_{\mathbf B}\mathbf{q}$ & Quaternion equivalent of $^{\mathbf A}_{\mathbf B} \mathbf R$ & $\mathbf{I}$ & Identity matrix & $\mathbf{0}$ & Zero matrix \\ 
			$\mathbf{x}$ & State vector & $\hat{\mathbf x}$ & Estimate of $\mathbf x$ &  $\delta \mathbf x$ & Error state of  $\mathbf x$ \\
			$\breve{\mathbf z}$ & Inferred measurement value of $\mathbf z$ &  $\delta \boldsymbol{\theta}$ & Error state of rotation matrix & $\lfloor \mathbf{v} \rfloor$ &  Skew-symmetric matrix of 3D vector $\mathbf{v}$ \\
			${}^{\mathbf  O}\omega_z$ & Angular velocity along $z$ axis in odometer frame & ${}^{\mathbf  O}v_{x}$ & Linear velocity along $x$ axis in odometer frame &  $\boldsymbol{\chi}_{\mathbf{O}}$ & A sliding window of odometer poses \\ 
			$\boldsymbol \xi$ & Full ICR-based kinematic parameters (5D) & $\boldsymbol \xi_{ICR}$ / $\mathrm{\mathbf{ICR}}$  & ICR kinematic parameters (3D) & $\boldsymbol \xi_{\alpha}$ & Scale factors (2D) of encoder readings  \\
			${}^{\mathbf{G}}\mathbf{v}_{\mathbf{I}_k}$ & Velocity of IMU in global frame at time $t_k$ & $\mathbf{b}_{a}$ & Bias of accelerometer &  $\mathbf{b}_{\omega}$ & Bias of gyroscope \\
			$o_{l}$ and $o_r$  &  Linear velocity of left and right wheels &  $\alpha_l$ and $\alpha_r$  & Scale factor of $o_l$ and $o_r$ & $o_{lm}$ and $o_{rm}$ & Measurements of $o_l$ and $o_r$ \\
			$\mathbf{m}$ & Motion manifold parameters & $\mathcal{C}$ & Cost function & $\mathbf{M}$ & Observability Matrix  \\ 
			$\mathbf{n}$ & Noise vector & $\mathbf{J}$ & Jacobian matrix & $\bm \Lambda$ & Information matrix \\
			$\mathcal{O}_m$ & Set of odometer measurements & $\mathbf{\mathcal{W}}_{m}$ & Set of gyroscope measurements & $\mathbf{\mathcal{A}}_{m}$ & Set of accelerometer measurements \\ \hline
	\end{tabular}}
	\label{tb:notation}
\end{table*}

\subsection{Bundle Adjustment Optimization}
Our optimization process closely follows the design of~\cite{Eckenhoff2019IJRR,zhang2019v}. Specifically, 
as illustrated in Fig.~\ref{fig:factorgraph}, the sliding-window bundle adjustment (BA) in our estimation algorithm seeks to iteratively minimize a cost function corresponding to a combination of sensor measurement constraints, motion kinematic constraints, and marginalized constraints.
\begin{align} \label{eq_ap:cost}
	\mathcal{C} = \mathcal{C}_{P} + \mathcal{C}_{V} + \mathcal{C}_{I} + \mathcal{C}_{O} +\mathcal{C}_{M} 
\end{align}
In what follows, we describe each of the cost terms. Firstly, 
the marginalized term $\mathcal{C}_{P}$ is critical to consistently keep the algorithm computational complexity bounded, by probabilistically removing the old states in the sliding window.
For a constraint $\mathcal{C}(\mathbf{x}_r, \mathbf{x}_m)$ involved with the old states needed to be marginalized  $\mathbf{x}_m$ and the remaining states $\mathbf{x}_r$, we compute the Hessian and gradient matrices with respect to $\begin{bmatrix} \mathbf{x}_m ^\top & \mathbf{x}_r ^\top \end{bmatrix}^\top$, which are denoted as:
\begin{align}
	\begin{bmatrix}
		\boldsymbol{\Lambda}_{r r} & \boldsymbol{\Lambda}_{r m} \\
		\boldsymbol{\Lambda}_{m r} & \boldsymbol{\Lambda}_{m m}
	\end{bmatrix}
	,\quad 
	\begin{bmatrix}
		\mathbf{g}_{r} \\
		\mathbf{g}_{m}
	\end{bmatrix}
\end{align}
The marginalization can be conducted by computing the marginalized Hessian and gradient matrices, i.e., ${\bm \Lambda}_{marg} = \boldsymbol{\Lambda}_{r r}-\boldsymbol{\Lambda}_{r m} \boldsymbol{\Lambda}_{m m}^{-1} \boldsymbol{\Lambda}_{m r}$ and ${\mathbf{g}}_{marg} = \mathbf{g}_{r}-\mathbf{\Lambda}_{r m} \mathbf{\Lambda}_{m m}^{-1} \mathbf{g}_{m}$, which represent the uncertainty information for the remaining states $\mathbf{x}_r$ in the current sliding window~\cite{Eckenhoff2019IJRR}. 
Once marginalization is performed, the prior cost function can be formulated to ensure the remaining states are characterized by the computed uncertainties:
\begin{align}
	\mathcal{C}_{P}(\mathbf{x}_r) = \frac{1}{2}\big{|}\big{|}\mathbf{x}_r\boxminus \hat{\mathbf{x}}_r \big{|}\big{|}_{\bm \Lambda_{marg}}^2+ \mathbf{g}_{marg}^{\top}\left(\mathbf{x}_r\boxminus \hat{\mathbf{x}}_r\right) 
\end{align}
The “boxminus” operator $\boxminus$ denotes the generalized minus operation, since we need to perform computations on the manifold~\cite{barfoot2017state}.
%
%
For the marginalization, it should be noted that, as shown in Fig.~\ref{fig:factorgraph}, for limiting the computational complexity, we only leverage the constraints from IMU $\mathcal{C}_{I}$ and odometer $\mathcal{C}_{O}$ between the latest frame $k$ and the second latest frame $k-1$. After $\mathcal{C}_{I}$ and $\mathcal{C}_{O}$ are minimized in the optimization, 
the information contained in them and the related states will be marginalized into the prior cost term. 

The camera term $\mathcal{C}_{P}$, IMU term $\mathcal{C}_{I}$, and motion manifold term $\mathcal{C}_{M}$ used in this work are similar to that of existing literature~\cite{li2013high,Eckenhoff2019IJRR,zhang2019v} but with dedicated design for ground robots. In general, the camera cost term models the geometrical reprojection error of point features in the keyframes, the IMU term computes the error of IMU states between two consecutive keyframes, and the manifold cost term characterizes the motion smoothness across the whole sliding window. The exact cost terms from camera, IMU, and manifold we used will be elaborated in the following sections. Finally, $\mathcal{C}_{O}$ denotes 
the error induced by wheel odometer measurements. This term is a function of robot pose, measurement input, as well as skid-steering intrinsic parameters, and have been discussed in details in the paper.

It should also be noted that in this work, we assume that the IMU, the wheel odometers, and the camera are synchronized by hardware. Integration of IMU and odometer measurements between the time instants of captured images are required in the constraints $\mathcal{C}_{I}$ and $\mathcal{C}_{O}$. However, since different types of measurements come at varying frequencies, it is unlikely to get IMU/odometer measurements at the exact time instants when capturing the images. 
Thus, we perform the linear interpolations of IMU and odometer measurements at the image capturing time for performing integration. 

\begin{figure} [bt]
	\centering
	\includegraphics[width = \columnwidth]{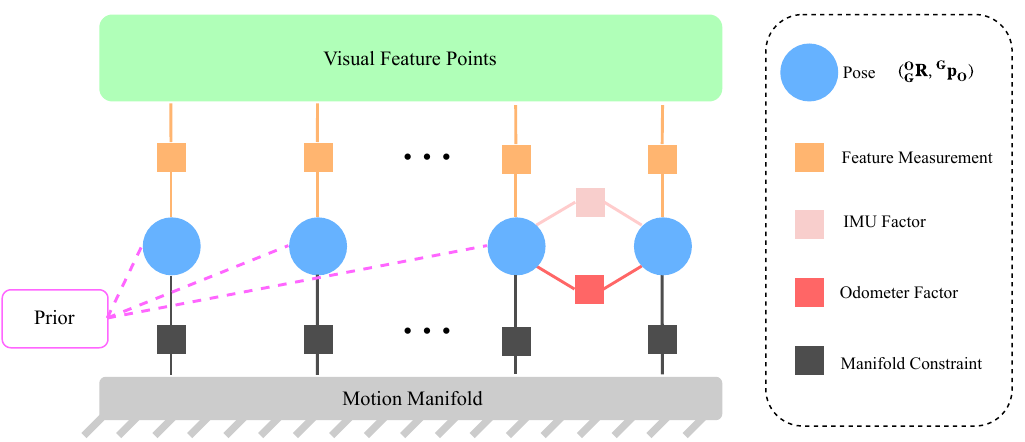}
	\captionsetup{justification=justified}
	\caption{In the proposed kinematics-constrained visual localization system for skid-steering robots, five different constraints are used in the sliding-window BA~\cite{zuo2019visual}: A prior encapsulates  the information about the current states due to marginalization of states and measurements (prior factor are related to all states which have marginalized measurements);
		Visual feature measurements connect the feature points in the map and the robot pose at the time when the image was recorded; 
		IMU integration factor summarizes the sequential IMU raw measurements between the two images (keyframes); 
		Odometry-induced kinematic factor summaries the sequential odometer measurements between the two images; 
		Motion manifold constraints enforce local smooth planar motions.
		Note that the IMU factor and the Odometry-induced kinematic factor only existing in the newest keyframe and the second newest keyframe. 
	} 
	\label{fig:factorgraph} 
	\vspace{-1em}
\end{figure}

\subsubsection{IMU Constraints}
The IMU provides readings of both accelerometer and gyroscope as follows:
\begin{subequations}
	\begin{align}
		\boldsymbol{\omega}_{m}&=\boldsymbol{\omega}_{\mathbf I}+\mathbf{b}_{\omega}+\mathbf{n}_{\omega} \\
		\mathbf{a}_{m}&=\mathbf{a}_{\mathbf{I}} - {}^{\mathbf I}_{\mathbf G} \mathbf{R} ^{\mathbf G} \mathbf{g}+\mathbf{b}_{a}+\mathbf{n}_{a}
	\end{align}
\end{subequations}
where $^{\mathbf G} \mathbf{g}$ is the known global gravity vector, $\mathbf{b}_{\omega}$ and $\mathbf{b}_{a}$ the time-varying gyroscope and accelerator bias vectors, and $\mathbf{n}_{\omega}$ and $\mathbf{n}_{a}$ denote white Gaussian measurement noise.
The IMU integration process is characterized by:
\begin{align}
	\hat{\mathbf{x}}_{I_k} \!\!=\!\! \left[ {}^{\mathbf G}\hat{\mathbf{p}}^T_{\mathbf{O}_k}, {}^{\mathbf G}_{\mathbf{O}_k}\hat{ \mathbf{q}}^T, {}^{\mathbf G}\hat{\mathbf{v}}^T_{\mathbf{I}_k}, \hat{\mathbf{b}}^T_{a_{k}}, \hat{\mathbf{b}}^T_{\omega_{k}} \right]^T \!\!=\!\! f(\hat{\mathbf{x}}_{I_{k-1}}, \mathbf{\mathcal{W}}_{m},\mathbf{\mathcal{A}}_{m}) 
\end{align}
where 
$\mathbf{\mathcal{W}}_{m},\mathbf{\mathcal{A}}_{m}$ are the gyroscope and accelerometer measurements during the time interval $t \in \left(t_{k-1}, t_k \right)$, and $f(\cdot)$ is the IMU integration function.
Since the IMU integration is widely  investigated in research communities~\cite{mourikis2007multi,li2013high,Eckenhoff2019IJRR}, and we here ignore the details on $f(\cdot)$.
%
%
The associated uncertainty matrix (i.e., linearized noise information matrix) of the prediction process $\bm \Lambda_{I}$ can also be obtained 
by linearizing the function $f(\cdot)$. 
As a result, the IMU cost term can be summarized by:
\begin{align}\label{eq:imu error}
	\mathcal{C}_{I}(\mathbf{x}_{I_k}, \mathbf{x}_{I_{k-1}})  = \left|\left| {\mathbf{x}}_{I_k} \boxminus f({\mathbf{x}}_{I_{k-1}},\mathbf{\mathcal{W}}_{m},\mathbf{\mathcal{A}}_{m}) \right| \right|_{\bm \Lambda_{I}}^2
\end{align}
which provides pose constraints between consecutive keyframes. We also note that, the IMU cost function requires odometer to IMU extrinsic parameters to transform states in odometer frame to IMU frame, which are also calibrated offline. After minimizing  Eq.~\ref{eq:imu error}, the states $\begin{bmatrix}
	{}^{\mathbf G}\hat{\mathbf{v}}^T_{\mathbf{I}_{k-1}},& \hat{\mathbf{b}}^T_{a_{k-1}},& \hat{\mathbf{b}}^T_{\omega_{k-1}}  \end{bmatrix}^T$ will be marginalized, and the contained information will be incorporated into the prior cost term.

\subsubsection{Motion Mainfold constraints}
\label{app:manifold}
Finally, 
since the skid-steer robot navigates on ground surfaces,
its trajectories can also be constrained by the prior knowledge about the shape of surface manifold. Specifically, we utilize our method presented in~\cite{zhang2019v} to approximate ground surfaces using quadratic polynomials, the following holds:
\begin{align}~\label{eq:manifold}
	& m_p({}^{\mathbf G}\mathbf{p}_{\mathbf O}) \!\!=\!\!\frac{1}{2} \! {}^\mathbf { G}\mathbf{p}_{\mathbf O_{xy}}^\top  \! \begin{bmatrix}
		a_1 & a_2\\ a_2 & a_3
	\end{bmatrix} \! {}^\mathbf { G}\mathbf{p}_{\mathbf O_{xy}} \!\!+\!\! \begin{bmatrix}
		b_1 \\ b_2
	\end{bmatrix}^\top \!\!  {}^\mathbf { G}\mathbf{p}_{\mathbf O_{xy}} \notag\\ & +\!\! {}^{\mathbf G}\mathbf{p}_{\mathbf{O}z} \!\!+ \!\! c 
	,{}^\mathbf{G}\mathbf{p}_{\mathbf O} = \begin{bmatrix}
		{}^\mathbf{ G}\mathbf{p}_{\mathbf O_{xy}}^\top & {}^ \mathbf { G}\mathbf{p}_{\mathbf O_z}^\top \end{bmatrix}^\top
\end{align}
where $\mathbf{m} = \left[  a_1, a_2, a_3, b_1, b_2, c \right]^\top$ to denote the manifold parameters. 
By utilizing the quadratic surface approximation, we are able to define the following cost function for both rotation and position terms~\cite{zhang2019v}:
\begin{align}
	\mathbf{m}_r({}^{\mathbf G}_\mathbf{O}\mathbf{R}, {}^{\mathbf G} \mathbf{p}_\mathbf{O} ) = \lfloor {}^{\mathbf G}_\mathbf{O}\mathbf{R} \mathbf{e}_3  \rfloor_{12} \frac{\partial m_p }{\partial {}^{\mathbf G} \mathbf{p}_\mathbf{O} }^\top \!=\! \mathbf{0} ,\text{and}\,m_p({}^{\mathbf G}\mathbf{p}_{\mathbf O}) \!\!=\!\!0 \notag
\end{align}
where $\lfloor \mathbf{v} \rfloor_{12}$ denotes the first and second rows of a symmetric matrix of the 3D vector $\mathbf{v}$. 
The above constraints reflect the fact that, the motion
manifold $\mathbf{m}$ has explicitly defined roll and pitch of a ground robot, which should be in consistent with the rotation ${}^{\mathbf G}_\mathbf{O}\mathbf{R}$.
Therefore, the motion manifold cost term for keyframes  in the sliding window can be written as:
\begin{align}
	\mathcal{C}_{M}({}^{\mathbf G}_{\mathbf{O}_i}{ \mathbf{R}}, {}^{\mathbf G}{\mathbf{p}}_{\mathbf{O}_i},  \mathbf{m}_k, \mathbf{m}_{k-1} ) \!=\! \Bigg{|} \Bigg{|} \begin{bmatrix}
		\mathbf{m}_{k} - \mathbf{m}_{k-1}\\
		m_p({}^{\mathbf G}\mathbf{p}_{\mathbf O_i}) \\ \mathbf{m}_r( {}^{\mathbf G}_{\mathbf{O}_i}\mathbf{R}, {}^{\mathbf G} \mathbf{p}_{\mathbf{O}_i}) 
	\end{bmatrix} \Bigg{|} \Bigg{|}_{\bm \Lambda_{m}}
\end{align}
for all $i \in \left[k-s+1, k\right]$. $\mathbf{m}_{k-1} $ and $\mathbf{m}_{k} $ denotes the manifold parameters characterize the motion manifold across the last and the current sliding window, respectively.
Moreover, ${\bm \Lambda_{m}}$ is the information matrix describing the uncertainties in both localization states and the surface manifold approximation itself, which is described in detail in~\cite{zhang2019v}.

\subsection{More Experimental Results}
\label{ap_sec:exp}

In our experiments, we used two testing skid-steering robots based on the commercially available Clearpath Jackal robot with both `localization' sensors and `ground-truth sensors' equipped.
All the experiments are conducted by first dataset collection and subsequently offline processing using an Intel Core i7-8700 @ 3.20GHz CPU, to allow repeatable comparison between different methods.

\subsubsection{Real-world Experiment}
In the first set of experiments, we focus on validating the effectiveness of the proposed skid-steering model as well as the localization algorithm. Specifically, we investigated the localization accuracy by estimating skid-steering kinematic parameters $\boldsymbol \xi$  online and compared that to the competing methods.
To demonstrate the generality of our method, we conducted experiments under various environmental conditions.
The experimental environments involved in our robotic data collection include (a) lawn, (b) cement brick, (c) wooden bridge, (d) muddy road, (e) asphalt road, (f) ceramic tiles, (g) carpet, and (h) wooden floor.

We note that since GPS signal is not always available in all tests (e.g., indoor tests), we use both final drift and root-mean-squared error (RMSE) of absolute translational error (ATE)~\cite{zhang2018tutorial} as our metrics. To make this possible, we started and terminated each experiment in the same position. 
In the Tables of experimental results,  we highlight the best pose estimations in bold, while the bad pose estimations (final drift or RMSE of ATE over 12m) by underlines.
\begin{table*}[t]
	\renewcommand{\arraystretch}{1.5}
	\caption{Final Drift for three different setups on 23 sequences which covers 8 types of terrain.}
	\label{tb:drift}
	\vspace{-1em}
	\begin{center}
		\resizebox{0.95\textwidth}{!}
		{
			\begin{tabular}{c|c|c|c|c|c|c|c|c|c|c|c|c|c|c}\cline{4-15}
				\multicolumn{3}{c|}{}&\multicolumn{4}{c|}{\textbf{VIO W/ $\boldsymbol \xi$ }}&\multicolumn{4}{c|}{\textbf{VO W/ $\mathrm{\mathbf{ICR}}$  }}&\multicolumn{4}{c}{\textbf{VIO W/O $\boldsymbol \xi$}}\\\cline{1-15}
				\textbf{Sequence}&\textbf{ Length(m)}& \textbf{Terrain}&\textbf{Norm(m)}&\textbf{x(m)}&\textbf{y(m)}&\textbf{z(m)}&\textbf{Norm(m)}&\textbf{x(m)}&\textbf{y(m)}&\textbf{z(m)}&\textbf{Norm(m)}&\textbf{x(m)}&\textbf{y(m)}&\textbf{z(m)}\\\hline
				SEQ1-CP02 & 232.30& (b) & \textbf{3.644} & 0.340& 3.604& 0.420&        4.070 & 0.393& 4.048& 0.146 &        6.246& 0.733& 4.841& 3.878 \\
				SEQ2-CP01& 193.63& (f) &    \textbf{0.800} & -0.544& 0.420& 0.409&     \clr{16.698} & -10.613& 12.886& 0.345  &    3.705& 2.779& -1.155& 2.162 \\
				SEQ3-CP01& 632.64& (b,f) & \textbf{7.150} & 6.446& 2.732& 1.457&       \clr{146.033} & 142.976& -26.395& 13.670 &         \clr{28.911}& -27.506& -6.272& 6.315 \\
				SEQ4-CP01& 629.96& (b,f) & \textbf{1.427} & 0.268& 0.815& 1.139&    \clr{139.484}& 135.732& -30.425& -10.344 &        \clr{35.958}& -33.928& -10.114& 6.291 \\
				SEQ5-CP01& 626.83& (b,f) &\textbf{ 8.109} & 7.589& 2.391& 1.563&      \clr{157.703}& 153.233& -37.044& -4.188 &        \clr{31.044}& -29.317& -7.951& 6.405 \\

				SEQ6-CP01& 212.59& (g) & \textbf{7.399} & 4.353& -5.959& 0.528&           11.269& -11.171& 1.073& -1.017 &          10.005& 8.170& -5.289& 2.317 \\
				SEQ7-CP01& 51.44& (a) &  0.206 &-0.144& -0.117& -0.090&      \textbf{0.201}&  -0.194& -0.045& 0.028 &        0.835& -0.302& -0.156& 0.763 \\
				SEQ8-CP01& 204.81& (e) &  0.766 &-0.271& -0.005& 0.716&         \textbf{0.626}& -0.575& 0.111& 0.223     &    2.218    & -0.440& 0.066& 2.173 \\
				SEQ9-CP01& 77.63& (c) &  0.319 & 0.013& 0.318& -0.020&        \textbf{0.271}&  -0.020& 0.270& 0.016 &      1.013& 0.246& 0.240& 0.953 \\
				SEQ10-CP01& 27.09& (a) &  0.204 & -0.114& -0.005& 0.170&     \textbf{ 0.077}&  -0.072& 0.027& 0.008 &        0.519&  -0.137& -0.154& 0.476 \\
				
				SEQ11-CP01& 270.41& (e,b) & \textbf{0.644} & -0.103& -0.387& 0.504&      1.298& -1.284& -0.162& 0.103 &     3.148& 0.262& -0.326& 3.120 \\
				SEQ12-CP01& 436.19& (e) & \textbf{0.734} & 0.116& -0.143& 0.710&      11.614& -2.261& -11.136& 2.403 &     7.084& 0.241& 5.484& 4.478 \\
				SEQ13-CP01& 28.64& (d) &\textbf{ 0.093} & -0.062& 0.054& 0.043&     0.161&  -0.098& 0.116& 0.053 &        0.350    & 0.059& 0.101& 0.330 \\
				SEQ14-CP01& 372.15& (b) & 10.016 & 9.903& 1.027& 1.099&            \textbf{3.838}&  -3.835& 0.071& 0.129 &        \clr{13.261} &12.707& 1.603& 3.440 \\
				SEQ15-CP02& 81.03& (h)  & 2.573 & -2.428& 0.787& -0.320&      \textbf{2.179}&  -1.078& 1.882& -0.206 &        2.564    &-2.268& 0.201& 1.179 \\
			
				SEQ16-CP02& 53.49& (h)  & 0.702 & -0.501& 0.468& -0.153&         \textbf{0.558}& -0.200& 0.515& -0.079 &    1.093    & -0.758& 0.111& 0.780 \\
				SEQ17-CP01& 110.55& (b) & 1.048 & -0.083& -1.039& 0.106&         \textbf{0.420}& 0.048& -0.409& -0.086 &    1.346& -0.286& -0.827& 1.023 \\
				SEQ18-CP01& 104.63& (h) & \textbf{0.488} &0.228& -0.404& 0.152&        0.584& 0.088& 0.530& 0.228 &        1.378    & 0.422& -0.769& 1.062 \\
				SEQ19-CP01& 214.66& (b,h) & \textbf{0.999} & -0.315& -0.685& 0.655&       1.743& 1.334& 1.121& 0.040 &           2.492 & -0.319& -1.392& 2.04 \\
				SEQ20-CP01& 254.30& (b,h) & \textbf{0.838} & -0.195& -0.038& 0.814&      2.584& 1.608& 2.022& 0.045 &        3.179 &-0.616& -2.036& 2.362 \\
				
				SEQ21-CP01& 629.16& (b,f) & \textbf{1.829} & 0.008 & 1.257 & 1.329 &     \clr{133.916}& 132.305& -12.902& 16.198 &           \clr{16.918}& 14.959& 5.358& 5.810 \\
				SEQ22-CP01& 633.53& (b,f) & \textbf{4.405} & 3.782 & 1.882 & 1.249 &        \clr{119.478}& 117.966& -14.111& -12.644 &         \clr{13.728}& 11.988& 3.693& 5.577 \\
				SEQ23-CP01& 651.94& (b) & 3.428 & -1.287 & -2.768 & 1.560 &        \textbf{1.379}& -0.413& 1.313& 0.095 &          5.197& -0.781& 0.888& 5.060
				\\
				\textbf{Mean}& & &\textbf{2.514}& 1.174& 0.183& 0.610& 32.878& \clr{28.429}&
				-4.637& 0.225& 8.356& -1.917& -0.602& 2.956
				\\\hline
			\end{tabular}
		}
	\end{center}
	\vspace{-1em}
\end{table*}

%
We  conducted experiments to show the benefits gained by modeling and estimating skid-steering parameters online. In this experiment, three sets of setup are compared, i.e., two provably observable methods and one baseline method. Specifically, those methods are 1) \textbf{VIO (visual-inertial odometry) W/ $\boldsymbol \xi$ :}using measurements from a monocular camera, an IMU, and odometer via the proposed estimator by estimating the full 5 skid-steering kinematic parameters $\boldsymbol \xi$ online; 2) \textbf{VO (visual odometry) W/ $\mathrm{\mathbf{ICR}}$ :} using monocular camera and odometer measurements (without an IMU), and performing localization by estimating the 3 ICR parameters $\boldsymbol{\xi}_{ICR}$ online; 3) \textbf{VIO W/O $\boldsymbol \xi$ :} using measurements from monocular camera, an IMU and odometer, and utilizing the traditional differential drive kinematics for localization without explicitly modeling $\boldsymbol \xi$. We note that, in traditional methods when $\boldsymbol \xi$ is not modeled,  differential drive kinematics model can be considered as one-parameter (i.e., $b$) approximation of skid-steering kinematics.

In Table.~\ref{tb:drift}, we show the final drift errors on 23 representative sequences, which cover all the eight types of terrains (a)-(g).
%
We also note that some sequences cover multiple types of terrains. 
Fig.~\ref{fig:visual_map} shows the trajectory and visual features estimated by the proposed method on sequence ``CP01-2019-05-27-14-50-49", in which the robot traversed the outdoors under terrain (b) and indoors under terrain (h).
Since the two robots were used for data collection, we use the notation ``CP01, CP02" to denote the robot names in Table.~\ref{tb:drift}. In addition, we highlight the results with severe drift (error of norm is over 12m) by underlines.

In some sequences where GPS signals were available across the entire data sequence, we also evaluated the root mean square errors (RMSE)~\cite{Bar-Shalom1988} of absolute translational error (ATE)~\cite{zhang2018tutorial}. To compute that, we interpolated the estimated poses to get the ones corresponding to the timestamp of the GPS measurements. The RMSE errors are shown in Table.I in our paper. The results demonstrate that estimating $\boldsymbol \xi$ is beneficial for trajectory tracking. 
Representative Trajectory estimates on representative sequences are shown in Fig.~\ref{fig:traj_gps}.

\begin{figure}[bth] 
	\centering
	\includegraphics[scale=.3]{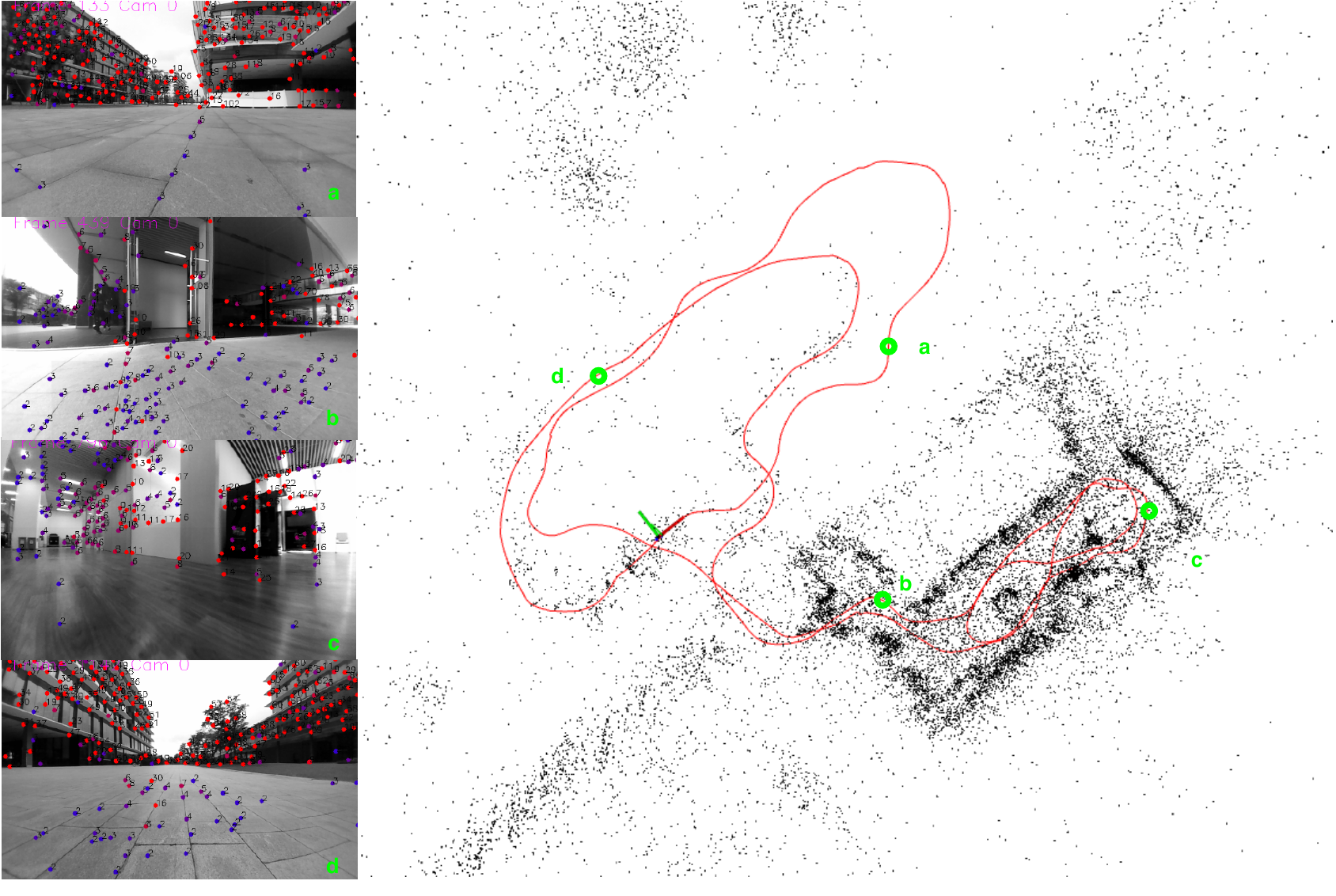} 
	\captionsetup{justification=justified}
	\caption{Skid-steering robots traversed outdoors and indoors~\cite{zuo2019visual}. The left part shows the representative images with visual features recorded at positions marked by green circles respectively. The right part shows the estimated trajectory red curve, and recovered 3D visual landmarks by black dots.} \label{fig:visual_map}
\end{figure}
\begin{figure*}[hbt]
	\centering 
	\subfigure[]{ 
		\includegraphics[width=1.6in]{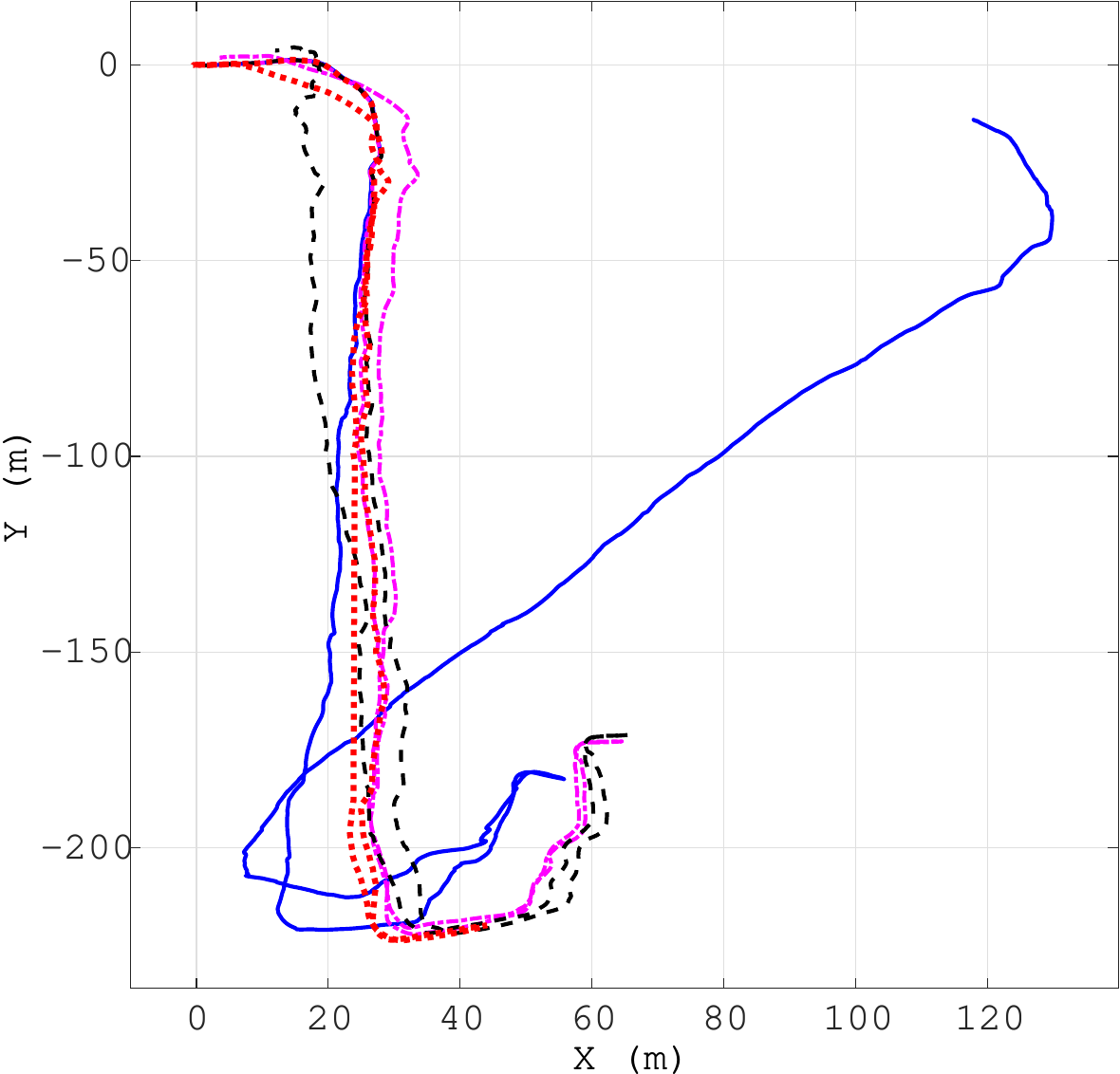} 
	} 
	\subfigure[]{ 
		\includegraphics[width=1.6in]{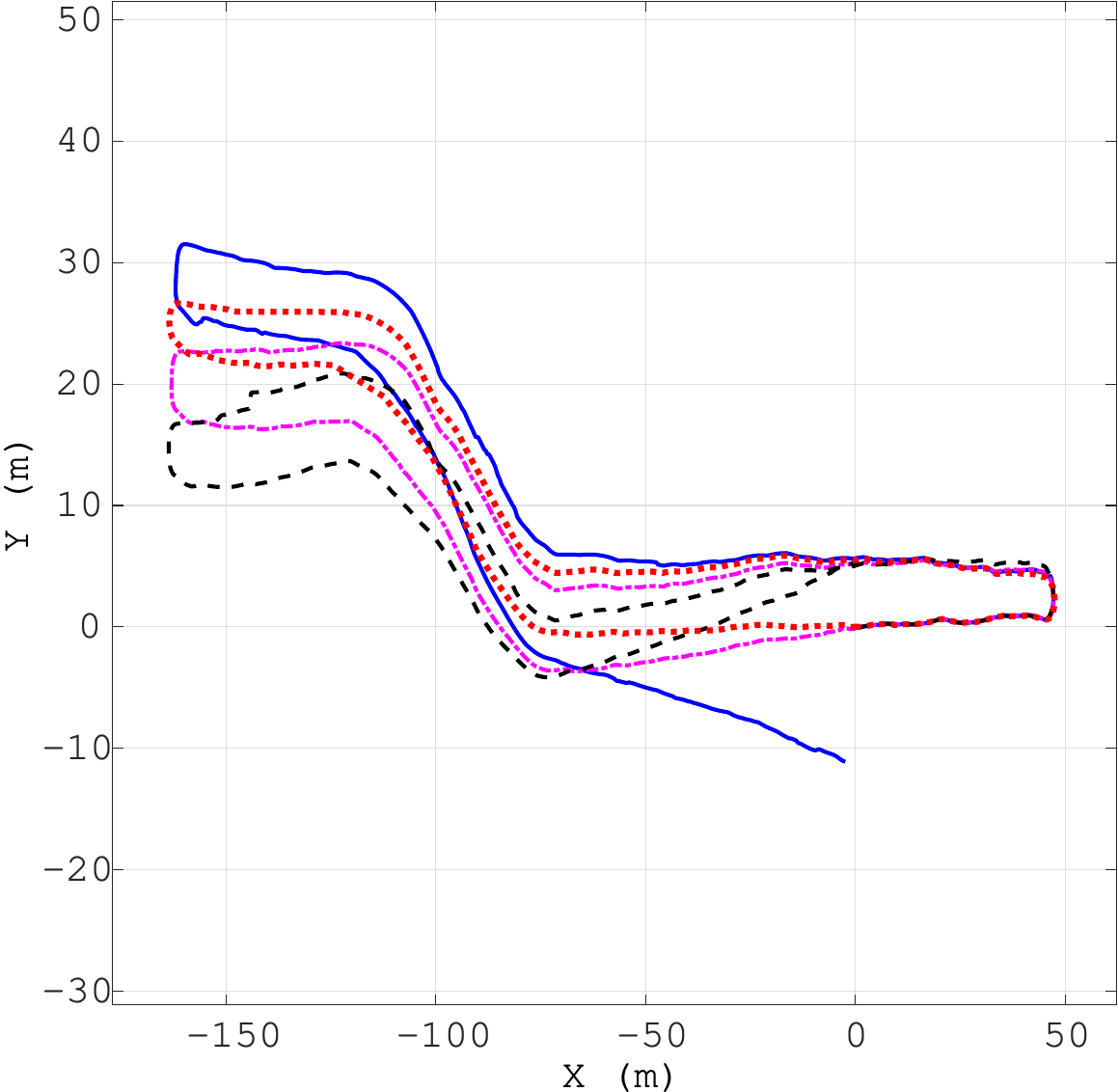} 
	}
	\subfigure[]{ 
		\includegraphics[width=1.6in]{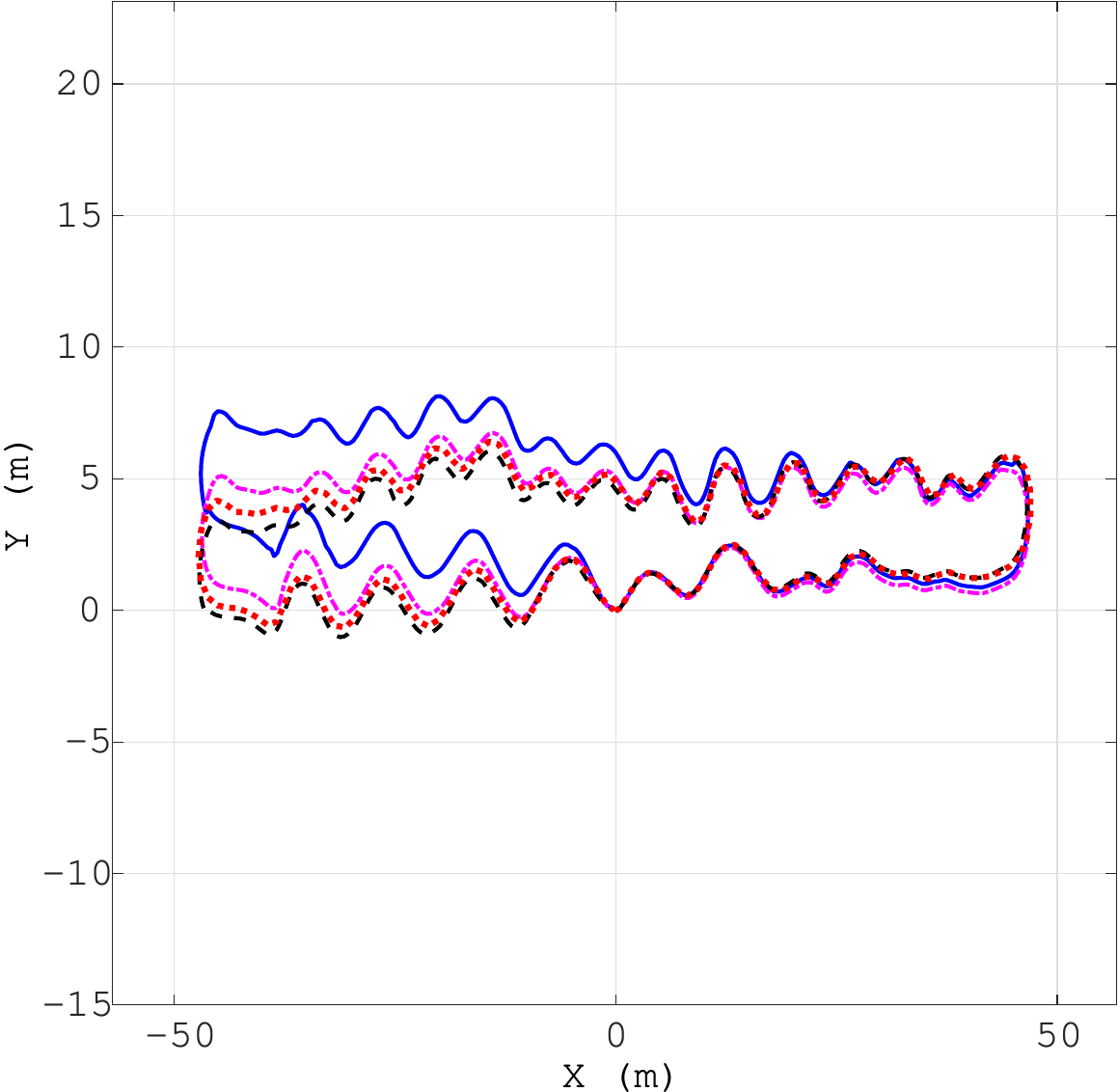} 
	} 
	\subfigure[]{ 
		\includegraphics[width=1.6in]{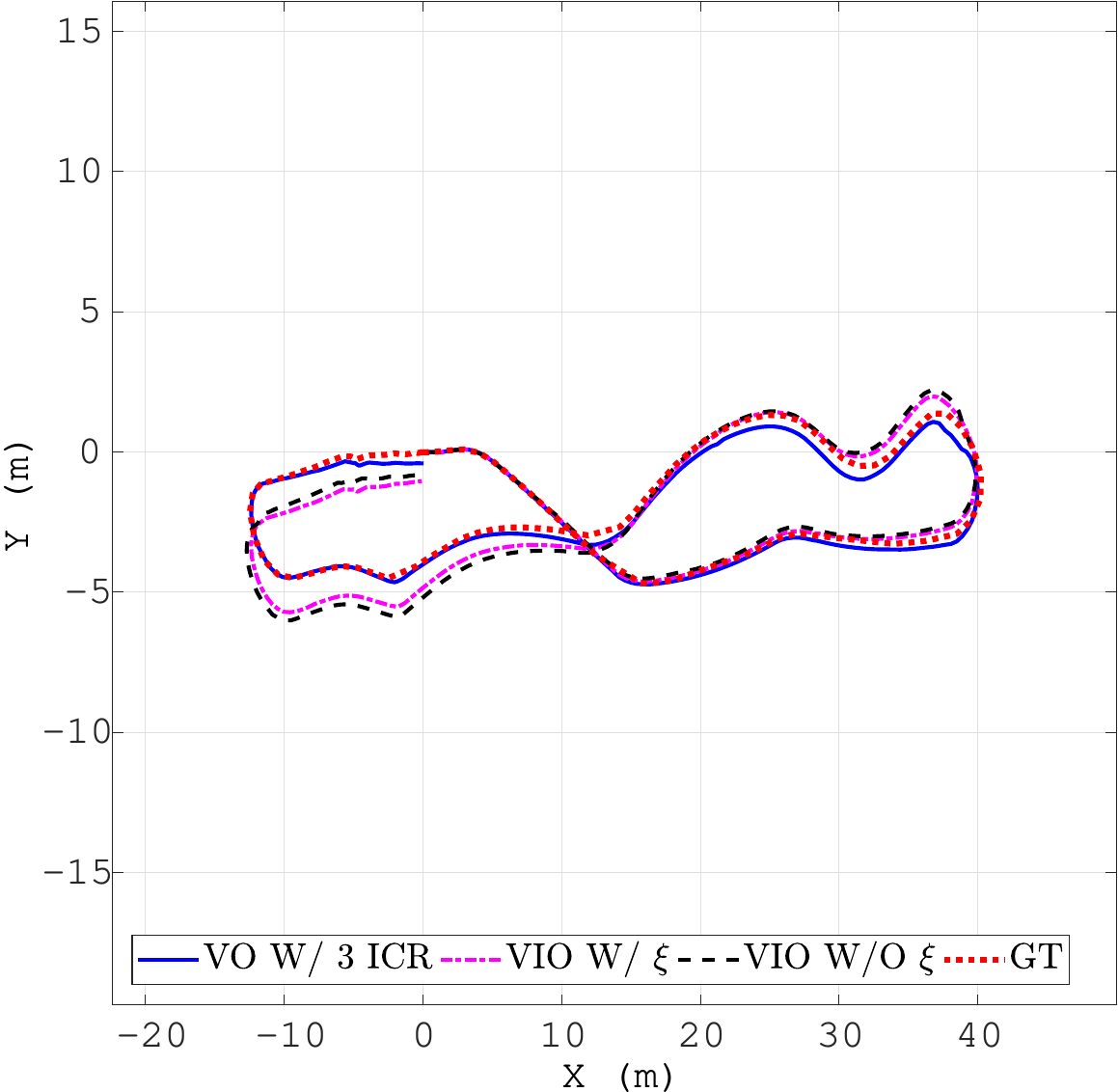} 
	}
	\captionsetup{justification=justified}     
	\caption{The trajectories of RTK-GPS (ground truth) and the estimated trajectory by the localization method: 1) VIO with online estimating the kinematic parameters $\boldsymbol \xi$; 2) VO with estimating $\boldsymbol{\xi}_{ICR}$ online; 3) VIO without estimating  $\boldsymbol \xi$.    
	}
	~\label{fig:traj_gps}
\end{figure*}

\subsubsection{Ablation Study}
In this section, we examine the advantages of online estimating the full kinematics parameters $\boldsymbol{\xi}$ in the kinematics-constrained VIO systems, which enables life-long high-precision localization for real-world robotic deployment. 
In fact, the mechanical parameters of a real robot can not be constants all the time. Some components might be of slow changes (e.g., height and width), and some drift relatively fast (e.g., weights or tire pressures). All those factors might lead to varying kinematic parameters, and we here verify the effectiveness of our method for handling them.

Specifically, we collected datasets under the following practically commonly-seen configurations for our skid-steering robot: (\romannumeral1) normal; (\romannumeral2) carrying a package with the weight around 3 kg; (\romannumeral3) under low tire pressure; (\romannumeral4) carrying a 3-kg package and with low tire pressure. In configurations (\romannumeral1) to (\romannumeral4), the actual kinematic parameters vary significantly and also deviate from our initial guess. 
Similar to the previous tests, three algorithms were conducted here by using the measurements from a camera, an IMU and odometer: 1) estimating 
$\boldsymbol{\xi}_{ICR}$;  2) estimating $\boldsymbol{\xi}$; 3) used fixed  $\boldsymbol{\xi}$ with a relatively good initial guess, obtained by the final estimate of running our online estimation algorithm.
%

%
We conducted experiments on 8 sequences named ABL-SEQ1 $\sim$ ABL-SEQ8, and each configuration corresponds to two sequences in ascending order (e.g., ABL-SEQ1 and ABL-SEQ2 correspond to the `normal' condition).
The evaluation methods used here are as same as the ones used in the previous section, which include both final drift and RMSE.
In Table.~\ref{tb:drift disturb}, we show the final drift of three different localization methods. On the other hand, the RMSE of ATE is given in Table.~\ref{tb:gpsseq_ate_disturb}. Additionally, RPE was shown in Fig.~\ref{fig:gps rpe disturb}.
Those results demonstrate that, when a robot is in normal mechanical condition, and the road condition is without large variance, there are minor differences between estimating the full 5 kinematic parameters $\boldsymbol{\xi}$ and online estimating only the 3 parameters $\boldsymbol{\xi}_{ICR}$, when good correction factors $\alpha_l, \alpha_r$ are given and kept constant. This is due to the fact that the correction factors reflect the transmission efficiency of the robot and are not subject to fast changes in the general case. However, if there are noticeable changes in the robotic mechanical condition, e.g., weight and center of mass change by carrying a large package or tire pressure changes after long-term usage, the correction factors $\alpha_l, \alpha_r$ will be changed significantly. In such cases, the overall estimation algorithm benefits significantly by online estimating $\boldsymbol \xi$. 
We also show the estimated trajectories compared with RTK-GPS measurement in Fig.~\ref{fig:traj_gps_disturb}, for the representative runs.
\begin{table*}[tbh]
	\renewcommand{\arraystretch}{1.5}
	\caption{Ablation Experiments Results: Final drift.}
	\label{tb:drift disturb}
	\begin{center}
		\resizebox{0.95\textwidth}{!}
		{
			\begin{tabular}{c|c|c|c|c|c|c|c|c|c|c|c|c|c|c|c}\cline{5-16}
				\multicolumn{4}{c|}{}&\multicolumn{4}{c|}{\textbf{VIO W/ $\mathrm{\mathbf{ICR}}$ }}&\multicolumn{4}{c|}{\textbf{VIO W/ $\boldsymbol{\xi}$}}&\multicolumn{4}{c}{\textbf{ VIO W/ Fixed $\boldsymbol{\xi}$ }}\\\cline{1-16}
				\textbf{Sequence}&\textbf{ Length(m)}& \textbf{Terrain}&\textbf{Config.}&\textbf{Norm(m)}&\textbf{x(m)}&\textbf{y(m)}&\textbf{z(m)}&\textbf{Norm(m)}&\textbf{x(m)}&\textbf{y(m)}&\textbf{z(m)}&\textbf{Norm(m)}&\textbf{x(m)}&\textbf{y(m)}&\textbf{z(m)}\\\hline
				ABL-SEQ1 & 167.30& (e) & (\romannumeral1) & 0.316& 0.038& 0.167& 0.266&            \textbf{0.304}& 0.038& 0.144& 0.265&            0.832& 0.029& -0.796& 0.240\\
				ABL-SEQ2 & 147.76 & (e)  & (\romannumeral1) & 0.349& -0.109& 0.235& 0.233&            \textbf{0.336}& -0.110& 0.216& 0.233&         0.564& -0.190& -0.495& 0.191\\
				
				ABL-SEQ3 & 152.23& (e) & (\romannumeral2) & 0.318&  -0.193&  -0.195& 0.162       & \textbf{0.311}&   -0.192&  -0.183&  0.162        & 0.819&   -0.240&  -0.771&  0.137 \\
				ABL-SEQ4 & 152.80& (e) & (\romannumeral2) & 0.406&  -0.240&  -0.252&  0.208     & \textbf{0.400}&  -0.240&  -0.243&  0.209        & 0.629&  -0.185&  -0.578&  0.165 \\

				ABL-SEQ5 &237.36 & (e)   & (\romannumeral3) & 8.700& 0.046 & -8.699& 0.144&         \textbf{7.222}& -0.069& -7.220& 0.143&        9.755& 0.037& -9.753& 0.163\\ 
				ABL-SEQ6 & 232.43 & (e)   & (\romannumeral3) & 8.102& -0.161& -8.098& 0.198&         \textbf{6.696}& -0.242& -6.689& 0.195&            8.846& -0.143& -8.842& 0.233\\
				ABL-SEQ7 & 232.54 & (e)   & (\romannumeral4) & 9.509& 0.189& -9.505& 0.196&         \textbf{7.771}& 0.053& -7.769& 0.199&             10.502& 0.161& -10.498& 0.243\\
				ABL-SEQ8 & 233.07& (e)   & (\romannumeral4) & 10.055& 0.256& -10.050& 0.182&           \textbf{8.204}& 0.080& -8.202& 0.164&         10.601& 0.173& -10.598& 0.198
				\\
				\textbf{Mean}& & & & 4.719& -0.022& -4.550& 0.199& \textbf{3.906}&   -0.085& -3.743&
				0.196&   5.319&   -0.045&  -5.291&  0.196 
				\\\hline
			\end{tabular}
		}
	\end{center}
\end{table*}
\begin{table*}[hbt]
	\renewcommand{\arraystretch}{1.5}
	\caption{Ablation Experiments Results: RMSE of ATE (m).}
	\label{tb:gpsseq_ate_disturb}
	\begin{center}
		{
			\begin{tabular}{C{2.6cm}C{1.2cm}C{1.2cm}C{1.2cm}C{1.2cm}C{1.2cm}C{1.2cm}C{1.2cm}C{1.2cm}C{1.2cm}C{1.2cm}C{1.2cm}} \toprule
				& ABL-SEQ1& ABL-SEQ2& ABL-SEQ3& ABL-SEQ4& ABL-SEQ5& ABL-SEQ6& ABL-SEQ7& ABL-SEQ8& \textbf{Mean} \\\hline
				{\textbf{VIO W/ $\mathrm{\mathbf{ICR}}$ }} &    0.24 & 0.17 & 0.14 & 0.16 & 2.20 & 2.20 & 2.54 &  2.54& 1.27 \\
				{\textbf{VIO W/ $\boldsymbol{\xi}$}} &    0.23 & 0.17 & 0.14 & 0.16 & 1.85 & 1.87 & 2.13 &  2.09&\textbf{1.08}   \\
				{\textbf{ VIO W/ Fixed $\boldsymbol{\xi}$ }} &    0.28 & 0.24 & 0.27 & 0.23 & 2.49 & 2.40 & 2.82 &  2.75&1.44  \\  \hline
			\end{tabular}
		}
	\end{center}
\end{table*}
\begin{figure}[hbt]
	\centering
	\includegraphics[width = \columnwidth]{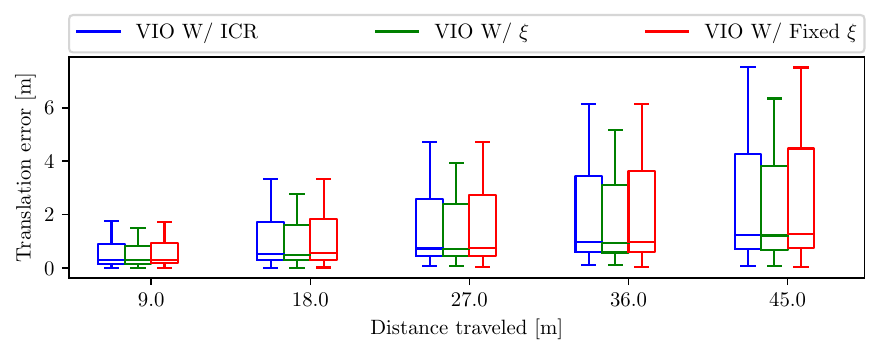}
	\captionsetup{justification=justified}
	\caption{Ablation Experiments Results: boxplot of the relative trajectory error statistics over all the sequences where RTK-GPS measurements are available. This plot will be best seen in color.
	} 
	\label{fig:gps rpe disturb} 
\end{figure}
\begin{table}[hbt]
	\centering
	\captionsetup{justification=justified}
	\caption{
		Ablation Experiments Results: Mean of RPE (m) for Different Segment Length.
	}
	{
		\begin{tabular}{C{2.0cm}C{1.5cm}C{1.5cm}C{1.50cm}} \toprule
			\textbf{Segment Length (m)} & {\textbf{VIO W/ $\mathrm{\mathbf{ICR}}$ }} & {\textbf{VIO W/ $\boldsymbol{\xi}$}} & {\textbf{ VIO W/ Fixed $\boldsymbol{\xi}$ }} \\ \hline
			9.00m &     0.53 & 0.48 &  0.54\\
			18.00m &    1.00 & 0.91 &  1.01  \\
			27.00m &     1.47 & 1.33 &  1.49  \\
			36.00m &     1.92 & 1.75 &  1.95 \\ 
			45.00m &     2.36 & 2.14 &  2.39 \\ \hline
		\end{tabular}    
	}
	\label{tb:gpsseq_rpe_disturb}
\end{table}
\begin{figure}[hbt]
	\centering 
	\subfigure[]{ 
		\includegraphics[width=1.6in]{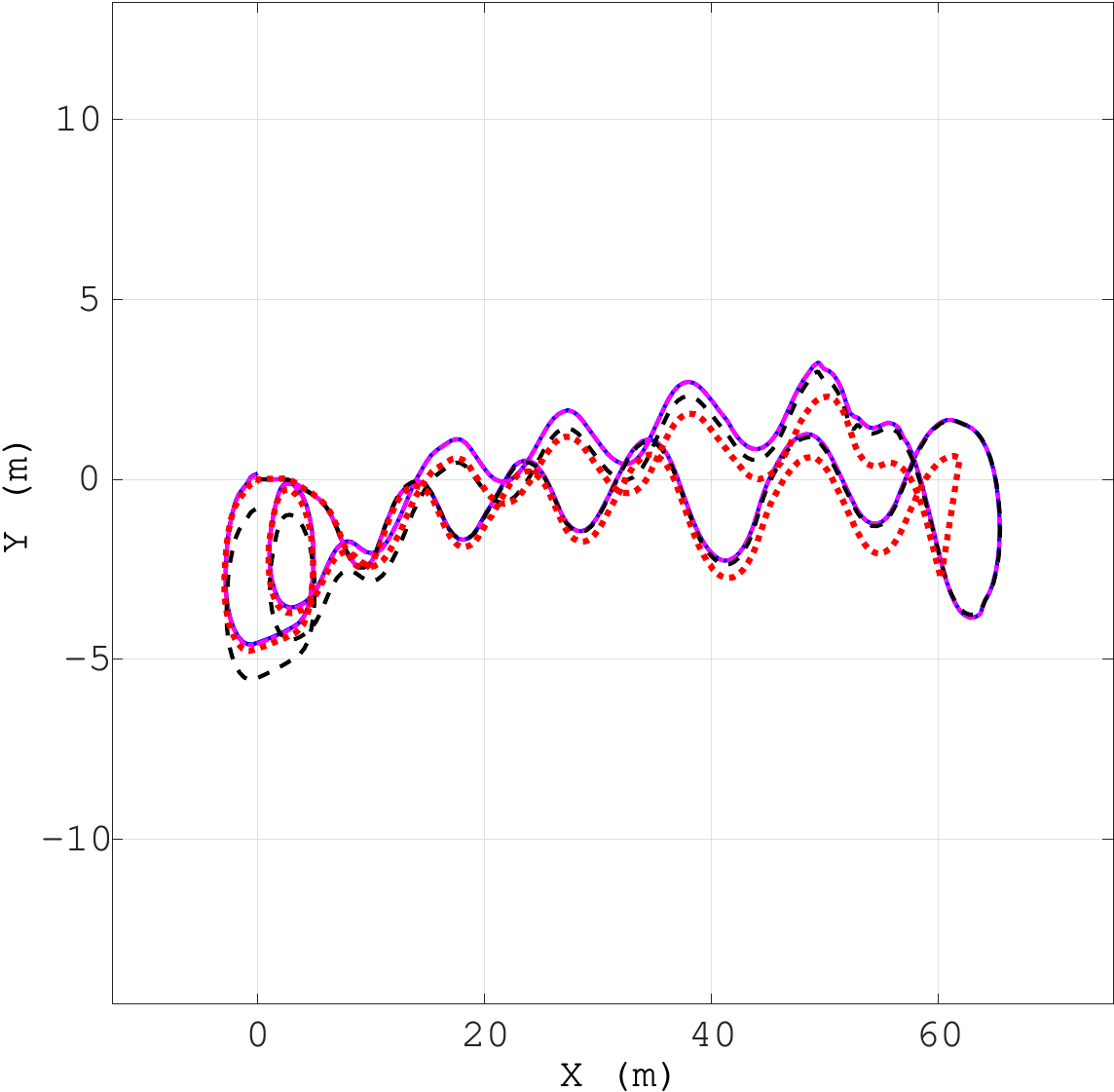} 
	} \hfill
	\subfigure[]{ 
		\includegraphics[width=1.6in]{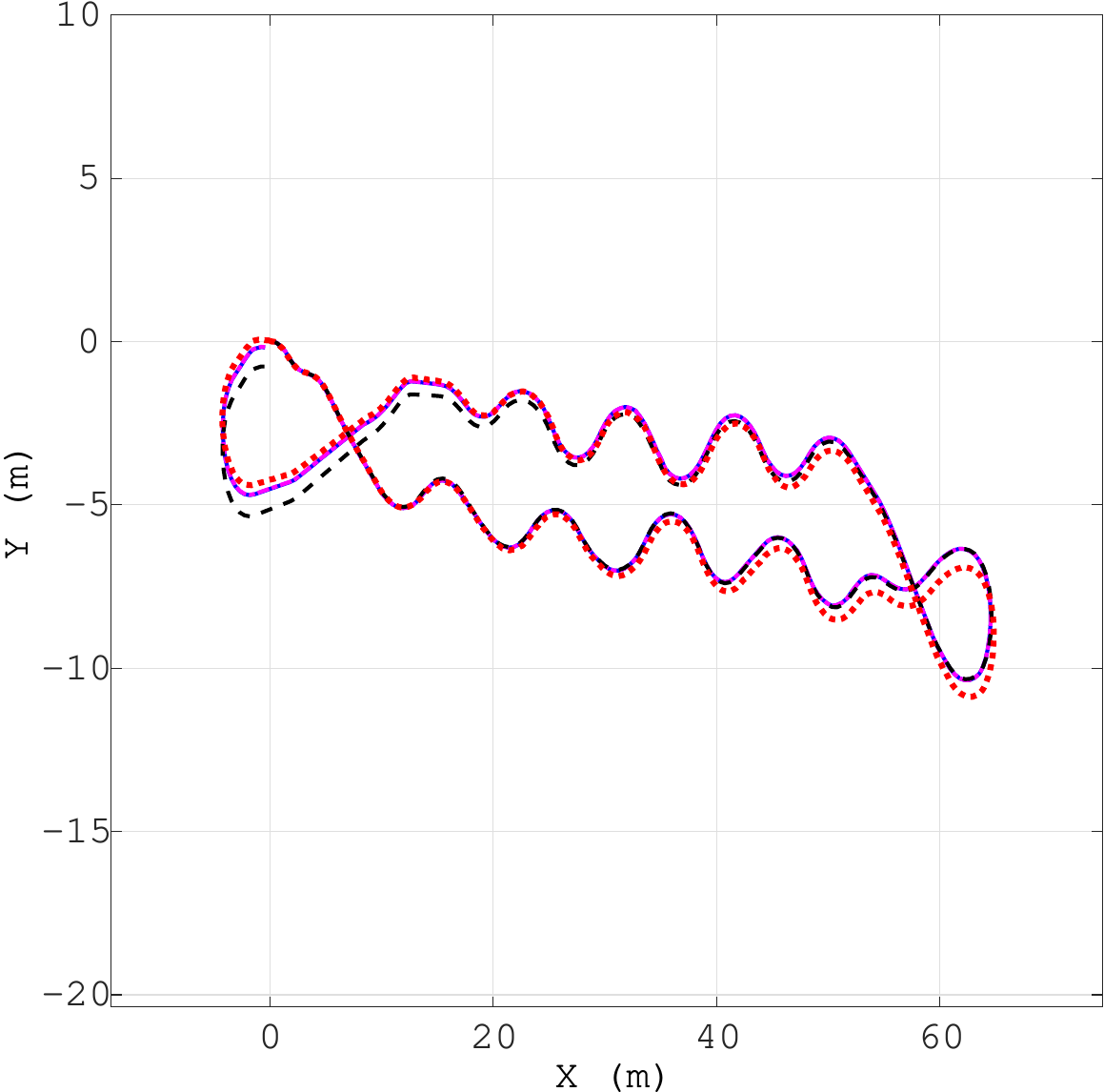} 
	} \hfill \\
	\subfigure[]{ 
		\includegraphics[width=1.6in]{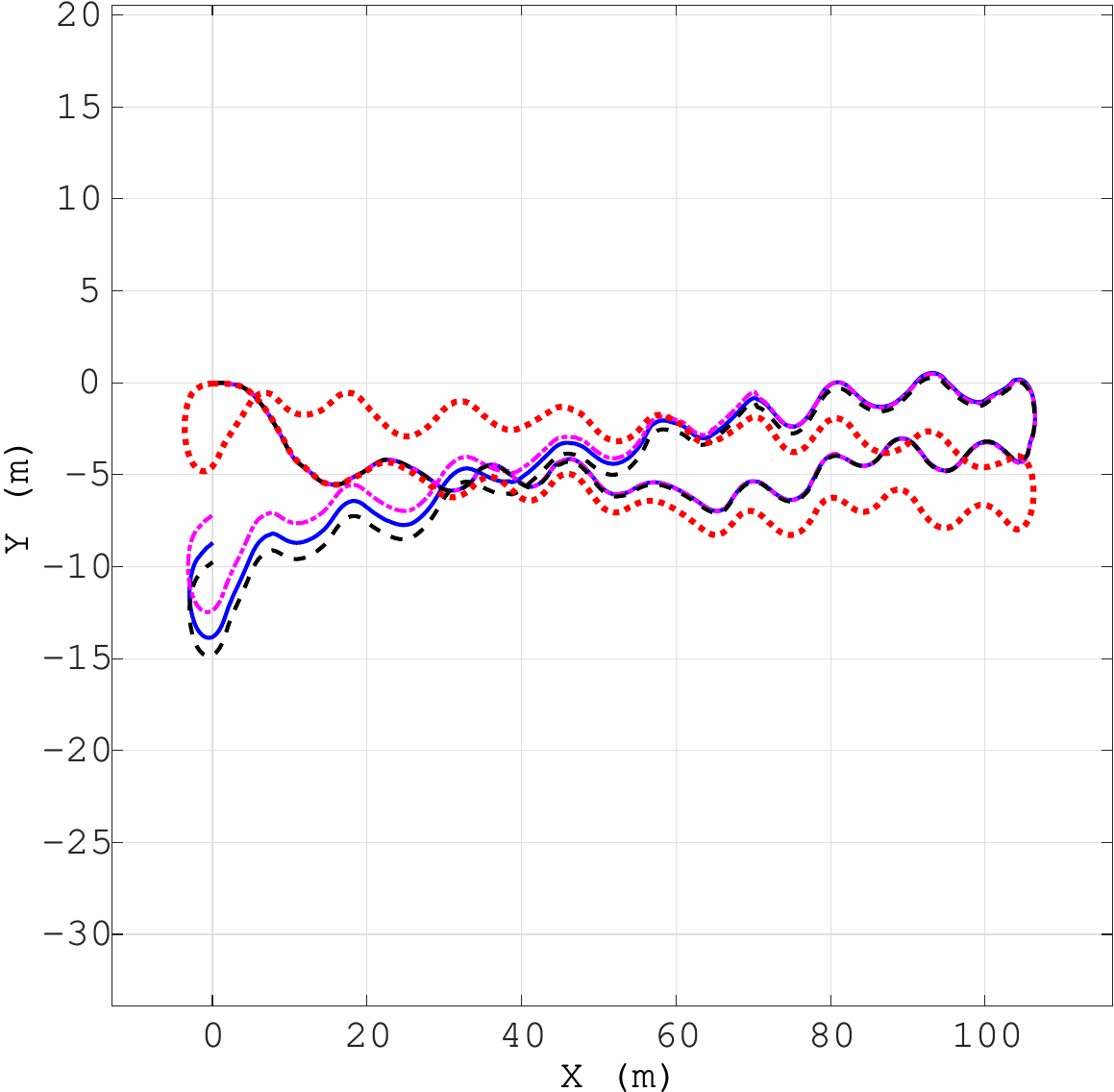} 
	} \hfill
	\subfigure[]{ 
		\includegraphics[width=1.6in]{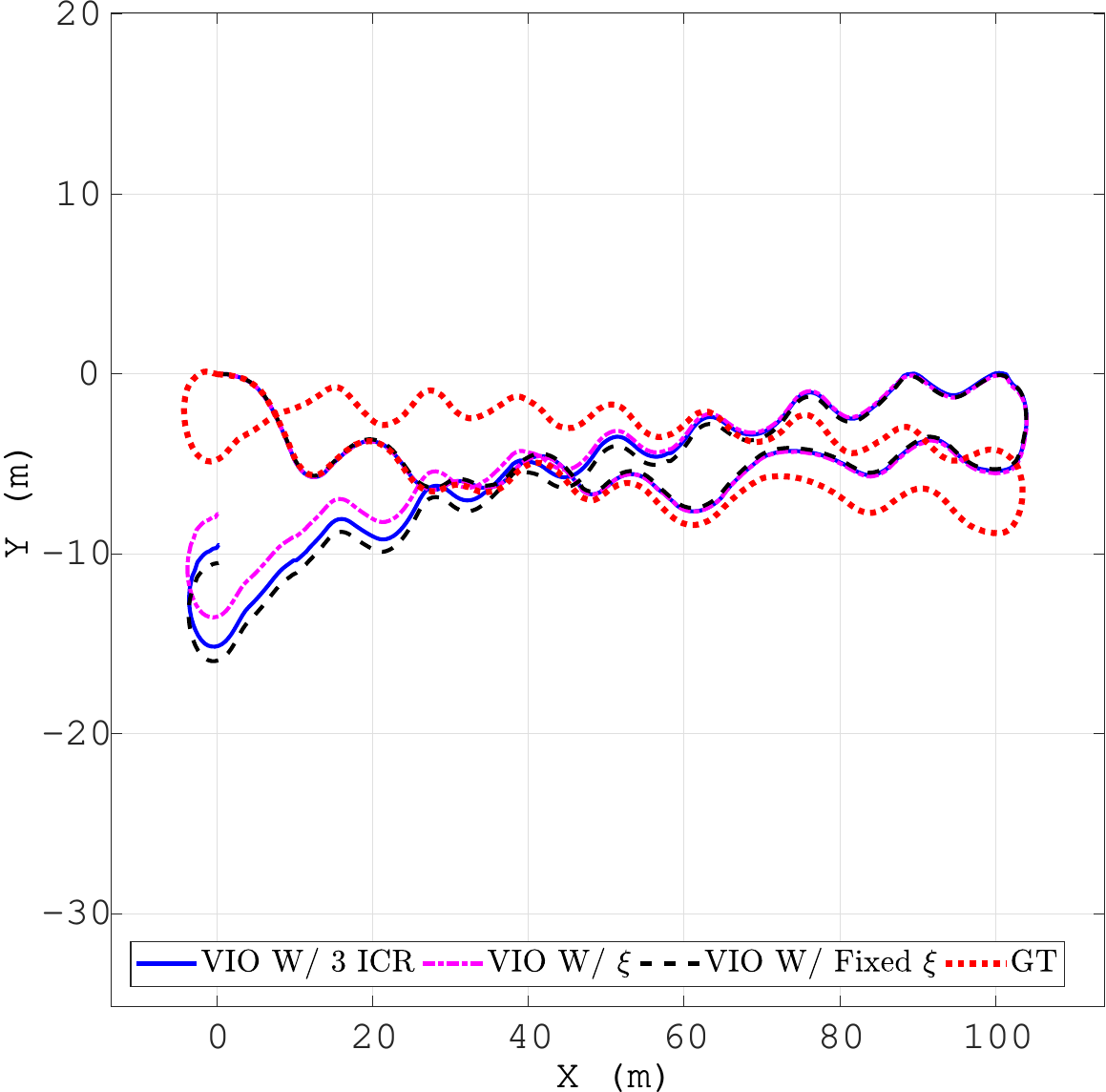} 
	}     
	\caption{In the ablation study, the trajectories of RTK-GPS (ground truth) and the estimated trajectory by the localization method: 1) VIO with online estimating the full kinematic parameters $\boldsymbol \xi$; 2) VIO with online estimating $\boldsymbol{\xi}_{ICR}$ only; 3) VIO with fixed $\boldsymbol \xi$.  
		The skid-steering robot is under four different conditions: (a) normal; (b) carrying a package with the weight around 3 kg; (c) under low tire pressure; (d) carrying a 3-Kg package and with low tire pressure.           
	}
	~\label{fig:traj_gps_disturb}
\end{figure}

\end{document}